\DeclareMathOperator*{\argmin}{\arg\min}
\newcommand{\E}{\mathbb{E}}
\newcommand{\Cov}{\operatorname{Cov}}
\newcommand{\given}{\,\vert\,}
\newcommand{\Normal}{\mathcal{N}}
\newcommand{\Unif}{\operatorname{Unif}}
\newcommand{\indicator}{\mathbf{1}}
\newcommand{\R}{\mathbb{R}}
\newcommand{\N}{\mathbb{N}}
\newcommand{\tr}{\operatorname{tr}}
\newcommand{\rank}{\operatorname{rank}}
\newcommand{\diag}{\operatorname{diag}}
\newcommand{\range}{\operatorname{range}}
\newcommand{\supp}{\operatorname{supp}}
\newcommand{\dist}{\operatorname{dist}}
\newcommand{\mc}{\mathcal}
\newcommand{\mr}{\mathrm}
\newcommand{\mf}{\mathfrak}
\DeclarePairedDelimiter{\norm}{\lVert}{\rVert}
\DeclarePairedDelimiter{\abs}{\lvert}{\rvert}
\DeclarePairedDelimiter{\inner}{\langle}{\rangle}
\DeclarePairedDelimiter{\set}{\{}{\}}
\newcommand{\GW}{\mr{GW}}
\newcommand{\IGW}{\mr{IGW}}
\newcommand{\HS}{\operatorname{HS}}
\begin{document}

\title{\Large \textbf{Optimal Transportation and Alignment\\[-.2em] Between Gaussian Measures}}
\newcounter{authorsep}
\let\origand\and
\renewcommand{\and}{%
    \stepcounter{authorsep}%
    \ifnum\value{authorsep}=3 \par\vspace{.4em}\fi
    \origand
}
\author{Sanjit Dandapanthula\protect\footnotemark[2] \protect\footnotemark[1]\\[.4em]
    \texttt{\href{mailto:sanjitd@cmu.edu}{sanjitd@cmu.edu}}
    \and
    Aleksandr Podkopaev\protect\footnotemark[3]\\[.4em]
    \texttt{\href{mailto:sashapod@amazon.com}{sashapod@amazon.com}} \and
    Shiva Prasad Kasiviswanathan\protect\footnotemark[3]\\[.4em]
    \texttt{\href{mailto:kasivisw@gmail.com}{kasivisw@gmail.com}} \and
    Aaditya Ramdas\protect\footnotemark[2] \protect\footnotemark[4]\\[.4em]
    \texttt{\href{mailto:aramdas@cmu.edu}{aramdas@cmu.edu}} \and
    Ziv Goldfeld\protect\footnotemark[6]\\[.4em]
    \texttt{\href{mailto:goldfeld@cornell.edu}{goldfeld@cornell.edu}}
    \vspace{1em}
}
\date{\today}
\maketitle

\footnotetext[2]{Carnegie Mellon University, Department of Statistics}
\footnotetext[3]{Amazon Web Services (AWS) LogAnalytics}
\footnotetext[4]{Carnegie Mellon University, Machine Learning Department}
\footnotetext[6]{Cornell University, Department of Electrical and Computer Engineering}
\footnotetext[1]{A large part of this work was done while SD was an intern at Amazon Web Services during Summer 2025.}

\begin{abstract}
    Optimal transport (OT) and Gromov–Wasserstein (GW) alignment provide interpretable geometric frameworks for comparing, transforming, and aggregating heterogeneous datasets---tasks ubiquitous in data science and machine learning. Because these frameworks are computationally expensive, large-scale applications often rely on closed-form solutions for Gaussian distributions under quadratic cost. This work provides a comprehensive treatment of Gaussian, quadratic cost OT and inner product GW (IGW) alignment, closing several gaps in the literature to broaden applicability. First, we treat the open problem of IGW alignment between uncentered Gaussians on separable Hilbert spaces by giving a closed-form expression up to a quadratic optimization over unitary operators, for which we derive tight analytic upper and lower bounds. If at least one Gaussian measure is centered, the solution reduces to a fully closed-form expression, which we further extend to an analytic solution for the IGW barycenter between centered Gaussians. We also present a reduction of Gaussian multimarginal OT with pairwise quadratic costs to a tractable optimization problem and provide an efficient algorithm to solve it using a rank-deficiency constraint. To demonstrate utility, we apply our results to knowledge distillation and heterogeneous clustering on synthetic and real-world datasets.
\end{abstract}

\tableofcontents
\pagestyle{plain}


\section{Introduction} \label{sec:introduction}

Comparing and aggregating heterogeneous datasets are fundamental aspects of modern data science and machine learning. To capture the geometry of the data corpus, it is standard to model datasets as probability distributions over high-dimensional manifolds. With this abstraction, two powerful frameworks for comparing probability distributions on the same or possibly different spaces, respectively, are \emph{optimal transport (OT)} and \emph{Gromov-Wasserstein (GW) alignment}; we depict the OT and GW problems in \Cref{fig:ot-gw}. The theory of OT gives rise to the Wasserstein distances, which in recent years have had a significant impact on the fields of statistics and machine learning, including for generative modeling \citep{arjovsky2017wasserstein, dukler2019wasserstein, kwon2022score, haviv2024wasserstein}, domain adaptation \citep{courty2017joint, lee2019sliced}, and general data science \citep{ho2017multilevel, mi2018variational, naumann2025wasserstein}. In a similar vein, GW distances have found numerous applications in language modeling \citep{alvarez2018gromov, kawakita2024gromov}, single-cell genomics \citep{demetci2022scot, klein2024genot}, computational geometry \citep{koehl2023computing, chambers2025stable}, generative modeling \citep{bunne2019learning, klein2023entropic}, and graph matching \citep{xu2019gromov, brogat2022learning}.

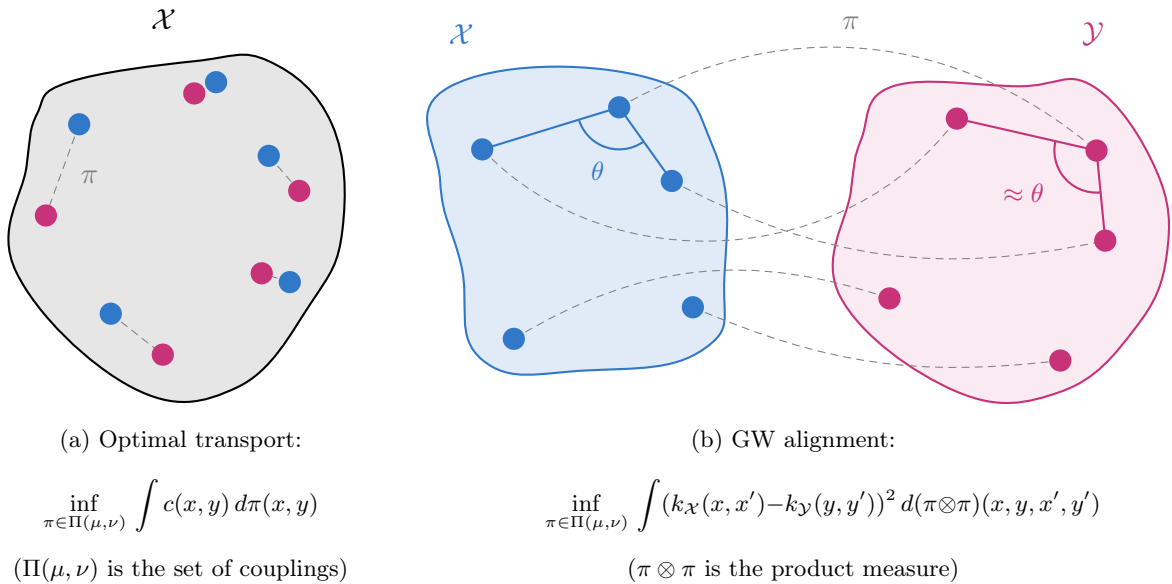
\begin{figure}[ht]
\centering
\captionsetup[subfigure]{labelformat=empty}
\subfloat[{%
\begin{minipage}{0.3\textwidth}
\centering
\vspace{0.3em}
(a) Optimal transport: 
\[
\inf_{\pi \in \Pi(\mu, \nu)}\, \int c(x,y)\, d\pi(x,y)
\]
($\Pi(\mu, \nu)$ is the set of couplings)
\end{minipage}}]{%
\begin{tikzpicture}[scale=1.4]
    \def\rotangle{-60}
    
    \def\connectionA{B'}
    \def\connectionB{C'}
    \def\connectionC{D'}
    \def\connectionD{E'}
    \def\connectionE{A'}
    
    \definecolor{originalcolor}{RGB}{50, 120, 200}
    \definecolor{transformedcolor}{RGB}{200, 50, 120}
    \definecolor{connectioncolor}{RGB}{100, 100, 100}
    
    \coordinate (A) at (1, 0.5);
    \coordinate (B) at (0.5, 1.2);
    \coordinate (C) at (-0.8, 0.8);
    \coordinate (D) at (-0.5, -1);
    \coordinate (E) at (1.2, -0.7);
    
    \coordinate (A') at ({cos(\rotangle)*1 - sin(\rotangle)*0.5}, {sin(\rotangle)*1 + cos(\rotangle)*0.5});
    \coordinate (B') at ({cos(\rotangle)*0.5 - sin(\rotangle)*1.2}, {sin(\rotangle)*0.5 + cos(\rotangle)*1.2});
    \coordinate (C') at ({cos(\rotangle)*(-0.8) - sin(\rotangle)*0.8}, {sin(\rotangle)*(-0.8) + cos(\rotangle)*0.8});
    \coordinate (D') at ({cos(\rotangle)*(-0.5) - sin(\rotangle)*(-1)}, {sin(\rotangle)*(-0.5) + cos(\rotangle)*(-1)});
    \coordinate (E') at ({cos(\rotangle)*1.2 - sin(\rotangle)*(-0.7)}, {sin(\rotangle)*1.2 + cos(\rotangle)*(-0.7)});
    
    \draw[black, thick, fill=black!10] 
        plot[smooth cycle, tension=0.7] coordinates {
            (-0.9, 1.2) (0.3, 1.4) (1.0, 1.4) (1.6, 0.8) (1.7, -0.2) 
            (1.4, -0.9) (0.7, -1.7) (-0.3, -1.7) (-1.4, -0.6) (-1.3, 0.4) (-1.2, 0.9)
        };
    
    \node[black] at (0, 1.8) {$\mathcal{X}$};
    \node[gray] at (-0.7, 0.3) {$\pi$};
    
    \draw[connectioncolor, gray, densely dashed] (A) -- (\connectionA);
    \draw[connectioncolor, gray, densely dashed] (B) -- (\connectionB);
    \draw[connectioncolor, gray, densely dashed] (C) -- (\connectionC);
    \draw[connectioncolor, gray, densely dashed] (D) -- (\connectionD);
    \draw[connectioncolor, gray, densely dashed] (E) -- (\connectionE);
    
    \fill[originalcolor] (A) circle (3pt) node[above right, originalcolor] {};
    \fill[originalcolor] (B) circle (3pt) node[above, originalcolor] {};
    \fill[originalcolor] (C) circle (3pt) node[above left, originalcolor] {};
    \fill[originalcolor] (D) circle (3pt) node[below left, originalcolor] {};
    \fill[originalcolor] (E) circle (3pt) node[right, originalcolor] {};
    
    \fill[transformedcolor] (A') circle (3pt) node[below, transformedcolor] {};
    \fill[transformedcolor] (B') circle (3pt) node[below right, transformedcolor] {};
    \fill[transformedcolor] (C') circle (3pt) node[above right, transformedcolor] {};
    \fill[transformedcolor] (D') circle (3pt) node[left, transformedcolor] {};
    \fill[transformedcolor] (E') circle (3pt) node[right, transformedcolor] {};
    
\end{tikzpicture}%
}
\qquad
\subfloat[][{%
\begin{minipage}{0.4\textwidth}
\centering
\vspace{0.3em}
(b) GW alignment: 
\[
\inf_{\pi \in \Pi(\mu, \nu)}\, \int (k_{\mathcal{X}}(x, x^\prime) - k_{\mathcal{Y}}(y, y^\prime))^2\, d(\pi \otimes \pi)(x, y, x^\prime, y^\prime)
\]
($\pi \otimes \pi$ is the product measure)
\end{minipage}}]{%
\begin{tikzpicture}[scale=1.4]
    \def\rotangle{-30}
    
    \definecolor{originalcolor}{RGB}{50, 120, 200}
    \definecolor{transformedcolor}{RGB}{200, 50, 120}
    
    \begin{scope}[shift={(-3,0)}]
        \coordinate (A) at (1, 0.5);
        \coordinate (B) at (0.5, 1.2);
        \coordinate (C) at (-0.8, 0.8);
        \coordinate (D) at (-0.5, -1);
        \coordinate (E) at (1.2, -0.7);
        
        \draw[originalcolor, thick, fill=originalcolor!15] 
            plot[smooth cycle, tension=0.8] coordinates {(-1.1, 1.4) (0.7, 1.5) (1.4, 0.8) (1.5, -0.4) (1.3, -1.1) (0.2, -1.3) (-0.8, -1.2) (-1.0, -0.3) (-1.2, 0.5)};
        
        \node[originalcolor] at (-1.0, 1.9) {$\mathcal{X}$};
        
        \draw[originalcolor, thick] (B) -- (A);
        \draw[originalcolor, thick] (B) -- (C);
        
        \draw[originalcolor, thick] (B) ++(-54:0.4) arc (-54:-163:0.4);
        \node[originalcolor] at ([shift={(-108.5:0.64)}]B) {\small $\theta$};
    \end{scope}
    
    \begin{scope}[shift={(1,0)}]
        \coordinate (A') at ({cos(\rotangle)*1 - sin(\rotangle)*0.5}, {sin(\rotangle)*1 + cos(\rotangle)*0.5});
        \coordinate (B') at ({cos(\rotangle)*0.5 - sin(\rotangle)*1.2}, {sin(\rotangle)*0.5 + cos(\rotangle)*1.2});
        \coordinate (C') at ({cos(\rotangle)*(-0.8) - sin(\rotangle)*0.8}, {sin(\rotangle)*(-0.8) + cos(\rotangle)*0.8});
        \coordinate (D') at ({cos(\rotangle)*(-0.5) - sin(\rotangle)*(-1)}, {sin(\rotangle)*(-0.5) + cos(\rotangle)*(-1)});
        \coordinate (E') at ({cos(\rotangle)*1.2 - sin(\rotangle)*(-0.7)}, {sin(\rotangle)*1.2 + cos(\rotangle)*(-0.7)});
        
        \draw[transformedcolor, thick, fill=transformedcolor!10] 
            plot[smooth cycle, tension=0.7] coordinates {(-0.9, 1.4) (0.3, 1.4) (1.0, 1.4) (1.6, 0.6) (1.7, -0.2) (1.4, -0.9) (0.7, -1.5) (-0.3, -1.5) (-1.4, -0.6) (-1.3, 0.4) (-1.2, 0.9)};
        
        \node[transformedcolor] at (1.0, 1.9) {$\mathcal{Y}$};
        
        \draw[transformedcolor, thick] (B') -- (A');
        \draw[transformedcolor, thick] (B') -- (C');
        
        \draw[transformedcolor, thick] (B') ++(-54+\rotangle:0.4) arc (-54+\rotangle:-163+\rotangle:0.4);
        \node[transformedcolor] at ([shift={(-120+\rotangle:0.8)}]B') {\small $\approx \theta$};
    \end{scope}
    
    \draw[gray, densely dashed] (-2, 0.5) to[bend right=20] ({1 + cos(\rotangle)*1 - sin(\rotangle)*0.5}, {sin(\rotangle)*1 + cos(\rotangle)*0.5});
    \draw[gray, densely dashed] (-2.5, 1.2) to[bend left=35] ({1 + cos(\rotangle)*0.5 - sin(\rotangle)*1.2}, {sin(\rotangle)*0.5 + cos(\rotangle)*1.2});
    \draw[gray, densely dashed] (-3.8, 0.8) to[bend right=50] ({1 + cos(\rotangle)*(-0.8) - sin(\rotangle)*0.8}, {sin(\rotangle)*(-0.8) + cos(\rotangle)*0.8});
    \draw[gray, densely dashed] (-3.5, -1) to[bend left=25] ({1 + cos(\rotangle)*(-0.5) - sin(\rotangle)*(-1)}, {sin(\rotangle)*(-0.5) + cos(\rotangle)*(-1)});
    \draw[gray, densely dashed] (-1.8, -0.7) to[bend right=15] ({1 + cos(\rotangle)*1.2 - sin(\rotangle)*(-0.7)}, {sin(\rotangle)*1.2 + cos(\rotangle)*(-0.7)});

    \node[gray] at (-0.3, 2.0) {$\pi$};
    
    \begin{scope}[shift={(-3,0)}]
        \fill[originalcolor] (A) circle (3pt) node[above right] {};
        \fill[originalcolor] (B) circle (3pt) node[above right] {};
        \fill[originalcolor] (C) circle (3pt) node[above left] {};
        \fill[originalcolor] (D) circle (3pt) node[below left] {};
        \fill[originalcolor] (E) circle (3pt) node[below right] {};
    \end{scope}
    
    \begin{scope}[shift={(1,0)}]
        \fill[transformedcolor] (A') circle (3pt) node[above right] {};
        \fill[transformedcolor] (B') circle (3pt) node[above right] {};
        \fill[transformedcolor] (C') circle (3pt) node[above left] {};
        \fill[transformedcolor] (D') circle (3pt) node[below left] {};
        \fill[transformedcolor] (E') circle (3pt) node[below right] {};
    \end{scope}
    
\end{tikzpicture}
}
\caption{(a) The OT problem compares measures over the same space by searching for the coupling $\pi$ which minimizes expected transportation cost $c(x, y)$. Here we depict the quadratic-cost OT problem, with $c(x, y) = \norm{x - y}_2^2$. (b) GW alignment compares measures over (possibly different) spaces by searching for the coupling $\pi$ which matches the kernels $k_{\mathcal{X}}$ and $k_{\mathcal{Y}}$ as well as possible. Here, we depict the inner product Gromov-Wasserstein (IGW) problem with $k_{\mathcal{X}}(\cdot,\, \cdot) = k_{\mathcal{Y}}(\cdot,\, \cdot)= \inner{\cdot,\, \cdot}$, which seeks to find a coupling which is as close to unitary as possible (roughly, preserving pairwise angles).}
\label{fig:ot-gw}
\vspace*{-0.5em}
\end{figure}

Despite their utility, computation under the OT and GW frameworks is notoriously difficult. The Wasserstein distance between empirical distributions over $n$ data points can be expressed as a linear program which can only be solved in $O(n^3\, \log(n))$ time using interior-point methods or min-cost flow algorithms \citep{peyre2019computational}. Even worse, computation of GW distances in the same setting requires solving a quadratic assignment problem, which is generally NP-hard \citep{loiola2007survey}. Although there is recent work using entropic regularization \citep{cuturi2013sinkhorn, zhang2024gromov,rioux2024entropic}, slicing \citep{rabin2011wasserstein,kolouri2019generalized,nietert2022statistical}, and neural estimation \citep{makkuva2020optimal,daniels2021score,wang2024neural,mokrov2024energy} for more efficient computation, explicit solutions which can be obtained in special cases of the OT and GW problems are of intense theoretical and practical interest. As an important example, the \emph{Bures--Wasserstein} formulae for the 2-Wasserstein distance and barycenter between Gaussians have been heavily applied in the literature, including to linear metric learning \citep{cooper2023applications}, knowledge distillation in machine learning \citep{lv2024wasserstein}, style transfer for image generation \citep{mroueh2019wasserstein}, and covariance estimation \citep{han2023learning}.

Beyond these two landmark results and recent attempts to extend them to the GW setting \citep{salmona2022gromov, le2022entropic}, structured solutions under Gaussian distributions and quadratic-like costs are scarce. Motivated by the success of the Bures--Wasserstein formulae, this work studies explicit solutions for a breadth of OT and IGW problems between Gaussian measures, spanning the bimarginal, multimarginal, and barycentric settings, filling several important gaps in the literature.

\subsection{Contributions}

We first consider the bimarginal IGW distance between Gaussian measures on a general Hilbert space, for which only partial results are available; namely, \citet{salmona2022gromov} provide an expression restricted to centered Gaussians in $\mathbb{R}^d$, which is limiting since IGW is not translation invariant. We first present a reduction of the Gaussian IGW distance to a quadratic optimization problem over unitary operators. We then prove that the optimum is attained and provide the IGW optimal coupling in closed-form, up to the resolution of the aforementioned optimization problem. From the proof, we also deduce that the IGW distance between any two measures can be upper bounded by the Gaussian IGW with matched first and second moments, further motivating the Gaussianity assumption. We then prove tight upper and lower bounds on the squared IGW distance between uncentered Gaussian measures $\mu_1 = \Normal(m_1, \Sigma_1)$ and $\mu_2 = \Normal(m_2, \Sigma_2)$ of the form
\begin{align*}
    \xi(\mu_1, \mu_2) - \norm{\eta(\mu_1)}\, \norm{\eta(\mu_2)} \leq \IGW(\mu_1, \mu_2)^2 \leq \xi(\mu_1, \mu_2) - \inner{\eta(\mu_1), \eta(\mu_2)},
\end{align*}
where $\xi(\mu_1, \mu_2)$, $\eta(\mu_1)$, and $\eta(\mu_2)$ are analytically computable quantities in terms of $m_1$, $\Sigma_1$, $m_2$, and $\Sigma_2$. Noting that our upper and lower bounds coincide if $\eta(\mu_1)$ and $\eta(\mu_2)$ are aligned (by Cauchy-Schwarz), we obtain analytic solutions for the IGW distance between Gaussians under an alignment condition, which relaxes the centering assumption of \citet{salmona2022gromov}. Since the only unitary operators on $\R$ are $\pm 1$, we also analytically resolve the IGW distance between univariate Gaussians. Building on the bimarginal results, we generalize known formulae for the IGW barycenter between centered Gaussian measures \citep{salmona2022gromov, le2022entropic} to the Hilbert space setting and to the case of infinitely many measures.

Next, we treat the multimarginal OT \citep{gangbo1998optimal} and multimarginal IGW problems between Gaussians. Observing that the pairwise optimal bimarginal couplings can be glued together into a multimarginal coupling, we resolve the multimarginal IGW problem between centered Gaussians over separable Hilbert spaces in closed-form. Using similar techniques, we formulate the multimarginal OT problem between Gaussians as the solution of a semidefinite program and show that the solution must have low rank. When considering $p$ Gaussian measures over $\R^d$, the rank-deficiency constraint together with the Burer-Monteiro factorization \citep{burer2003nonlinear} allow us to reduce the original semidefinite program with $O(d^2 p^2)$ variables to the problem of finding a second-order stationary point of a related problem with only $O(d^2 p)$ variables. Since second-order stationary points can be readily found using trust-region methods \citep{boumal2023introduction}, this facilitates a useful computational reduction when $p$ is large.

Finally, we demonstrate the utility of our theoretical results through experiments on synthetic and real-world data. We begin by proposing the IGW distance as a principled alternative to the centered kernel alignment (CKA), which is commonly used to compare neural network representations \citep{kornblith2019similarity}. We show that our upper and lower bounds on the IGW distance are close enough in practice to compare embedding distributions for common text datasets, which we verify to be approximately Gaussian. Our results show that IGW and CKA are positively correlated, posing the Gaussian IGW as an efficiently computable and principled alternative that would retain (and possibly improve upon) the empirical performance of CKA. Next, we model synthetic users each by an empirical measure $\mu_i$ over their text embeddings, which we approximate by a Gaussian measure $\tilde{\mu}_i$. We then cluster the users using the IGW distance between the centered versions of $\tilde{\mu}_i$, suggesting that clustering users based on their mean embedding loses useful information about higher moments of user distributions. We conclude by verifying in simulations that our Burer-Monteiro approach is faster and more numerically stable than standard interior point methods for solving the multimarginal OT problem between a large number of measures.

\subsection{Related Work} \label{sec:related-work}

\paragraph{IGW Alignment.} \citet{vayer2020contribution} introduced the inner-product variant of the GW distance, proved that the IGW distance between measures with finite second moment is finite, and derived a variational form for the IGW distance between measures with finite fourth moment. Only recently, \citet{zhang2024gradient} formally proved that the IGW distance is a metric on the space of probability measures over a Hilbert space with finite second moment and studied the Riemannian geometry induced by the IGW distance. \citet{dumont2025existence} proved the existence of deterministic OT maps for the IGW problem between compactly supported and absolutely continuous measures in Euclidean spaces. Our work complements these theoretical results by restricting attention to Gaussian measures, which allows us to obtain closed-form expressions and efficient algorithms for computing the IGW distance and barycenter.

\paragraph{2-Wasserstein Distance Between Gaussians.} The 2-Wasserstein distance between Gaussian measures in Euclidean space was originally studied by \citet{knott1984optimal} and \citet{givens1984class} and is known in closed-form, giving rise to the Bures-Wasserstein geometry on the space of positive definite matrices. Their results were later extended by \citet{gelbrich1990formula} to separable Hilbert spaces. Additionally, the 2-Wasserstein barycenter between Gaussian measures can be written as the solution of a non-linear matrix fixed-point equation, which is known to have a unique positive definite solution \citep{agueh2011barycenters}. The Bures-Wasserstein formulae and the 2-Wasserstein barycenter between Gaussians have been widely applied in the literature, including linear metric learning \citep{cooper2023applications}, knowledge distillation in machine learning \citep{lv2024wasserstein}, style transfer for image generation \citep{mroueh2019wasserstein}, and covariance estimation \citep{han2023learning}. Our work extends these important results by studying analogs of the Bures-Wasserstein formula for the IGW distance.

\paragraph{GW Alignment Between Gaussians.} \citet{salmona2022gromov} studied the $(2, 2)$-GW distance between Gaussian measures in Euclidean space and found an explicit solution only after restricting the optimization to Gaussian couplings; they were also able to recover the IGW distance and OT map between \emph{centered} Gaussian measures as a corollary. \citet{le2022entropic} extended this work by finding closed-form expressions for the entropically regularized IGW distance and barycenter between \emph{centered} Gaussian measures in Euclidean space, when the regularization parameter is positive. However, neither of these works treat the (unregularized) IGW alignment of uncentered Gaussian measures or general Hilbert spaces, which we address in this paper.

\paragraph{Multimarginal OT.} The multimarginal OT problem generalizes classical OT to more than two marginal distributions. This problem was introduced for the quadratic cost by \citet{gangbo1998optimal}, who focused on extending Brenier's theorem \citep{brenier1991polar} to the multimarginal setting. \citet{agueh2011barycenters} later related multimarginal OT to the 2-Wasserstein barycenter, by reformulating the latter as a weighted average over the optimal multimarginal coupling. Section 3 of \citet{pass2015multi} gives an overview of numerous applications of multi-marginal OT, including matching problems in economics, texture mixing in image processing, derivative pricing in finance, and incompressible fluid dynamics in engineering. In a similar vein, \citet{beier2023multi} studied properties of the multimarginal GW problem between general measures and showed that it recovers the GW barycenter. Our work provides an efficient algorithm to solve the multimarginal OT problem between uncentered Gaussian measures, as well as an analytic solution to multimarginal IGW between centered Gaussians.

\subsection{Notation}

Let $[n] \coloneq \set{1, \dots, n}$. For $1 \leq p \leq \infty$ and probability measures $\mu$ on $\mc{X}$, let $L^p(\mu)$ be the space of real-valued measurable functions $f$ on $\mathcal{X}$ such that $\norm{f}_{L^2(\mu)} \coloneq \left( \int \abs{f(x)}^p\, d\mu(x) \right)^{1/p} <\infty$, with the usual identification of functions that are equal $\mu$-almost everywhere. We treat $\ell^2 \coloneq L^2(\N)$ as a Hilbert space with the Euclidean inner product and write $\norm{\cdot}_p$ for the $p$-norm on $\R^d$. We write $\mc{P}(\mc{X})$ for the class of Borel probability measures over a metric space $(\mc{X}, d_\mc{X})$, and let $\mc{P}_p(\mc{X}) \coloneq \set{\mu \in \mc{P}(\mc{X}) : d_\mc{X}(x_0, \cdot) \in L^p(\mu) \text{ for all } x_0 \in \mc{X}}$ be the subset of $\mc{P}(\mc{X})$ with finite $p$th moment. We use $T_\sharp\, \mu \in \mc{P}(\mc{X})$ to denote the pushforward of $\mu$ under a measurable function $T$, defined by $T_\sharp\, \mu(\cdot) \coloneq \mu(T^{-1}(\cdot))$.

For a Hilbert space $\mc{H}$, we use $\inner{\cdot, \cdot}$ and $\norm{x} \coloneq\sqrt{\inner{x, x}}$ to denote the inner product and the induced norm, respectively. The set of trace-class operators is $\mf{S}_1(\mc{H})$ and the set of Hilbert-Schmidt operators is $\mf{S}_2(\mc{H})$. The set of unitary operators is designated by $\mc{U}(\mc{H})$, and we write $I$ for the identity. We let $A^*$ denote the adjoint of $A$, while $A^\dagger$ stands for the Moore-Penrose pseudoinverse \citep{desoer1963note}.

Throughout, we employ several operator and matrix norms. Let $\norm{A}_{\mr{op}} \coloneq \sup_{\norm{x} = 1}\, \norm{Ax}$ be the operator norm. The Hilbert-Schmidt norm is denoted by $\norm{A}_{\HS} \coloneq \left( \sum_{i=1}^\infty \sigma_i(A)^2 \right)^{1/2}$, where $\sigma_i(A)$ are the singular values of $A$. If $A$ is trace-class, we denote its trace by $\tr(A) \coloneq \sum_{i=1}^\infty \inner{Ae_i, e_i}$, where $\set{e_i}_{i=1}^\infty$ is any orthonormal basis of $\mc{H}$. We use $\succeq$ to denote the Loewner order ($A \succeq 0$ means $A$ is positive semidefinite (psd)) and $\otimes$ to denote the tensor product. If a bounded linear operator $A$ is self-adjoint and compact, we let $\set{\lambda_i(A)}_{i=1}^\infty$ denote the eigenvalues of $A$ in non-increasing order (repeated according to multiplicity). If $A \in \R^{d_1 \times d_2}$ is a matrix, we write $A^\intercal$ for its transpose and define its Frobenius norm as $\norm{A}_\mr{F} \coloneq \tr(A^\intercal A)$.

For a probability measure on a Hilbert space $\mu \in \mc{P}_2(\mc{H})$, we write $m_\mu \coloneq \int x\, d\mu(x)$ for its mean. The \emph{covariance operator} of $\mu$ is the linear operator $\Sigma_\mu : \mc{H} \to \mc{H}$ defined by the property $\inner{x, \Sigma_\mu y} = \int \inner{x, z} \inner{y, z}\, d\mu(z)$. Given $\mu_1, \mu_2 \in \mc{P}_2(\mc{H})$ and a coupling $\pi \in \Pi(\mu_1, \mu_2)$, the \emph{cross-covariance operator} is the linear operator $\Sigma_\pi^\times : \mc{H} \to \mc{H}$ defined by the property $\inner{x, \Sigma_\pi^\times y} = \int \inner{x, r} \inner{y, s}\, d\pi(r, s)$; this corresponds to the upper-right block of $\Sigma_\pi$ written as a block operator on $\mc{H} \oplus \mc{H}$ (where $\oplus$ denotes the direct sum). We write $\Normal(m, \Sigma)$ for the Gaussian measure on $\mc{H}$ with mean $m$ and covariance operator $\Sigma$. Write $\mc{G}(\mc{H})$ for the set of Gaussian measures on $\mc{H}$, and let $\mc{G}_0(\mc{H}) = \set{\mu \in \mc{G}(\mc{H}) : m_\mu = 0}$ be the set of centered Gaussians. A brief exposition on probability theory over Hilbert spaces is provided in \Cref{app:background-hilbert}.

\section{Background} \label{sec:background}

\paragraph{OT, barycenter, and multimarginal.} The Kantorovich OT problem between probability distributions $\mu_1\in\mc{P}(\mc{X}_1)$ and $\mu_2 \in \mc{P}(\mc{X}_2)$ with cost function $c : \mc{X}_1 \times \mc{X}_2 \to \R$  is given by \citep{kantorovich1942translocation, villani2003topics}:
\begin{align*}
    \mr{OT}_c(\mu_1, \mu_2) \coloneq \inf_{\pi \in \Pi(\mu_1, \mu_2)}\; \int c(x, y)\, d\pi(x, y),
\end{align*}
where $\Pi(\mu_1, \mu_2)\coloneqq \{\pi\in\mathcal{P}(\mathcal{X}_1\times \mathcal{X}_2):\,\pi(\cdot\times\mathcal{X}_2)=\mu_1,\,\pi(\mc{X}_1\times\cdot)\}$ is the set of couplings of $\mu_1, \mu_2$. Intuitively, the Kantorovich problem searches for a (randomized) transportation plan $\pi$ that reshapes $\mu_1$ into $\mu_2$ that minimizes the expected transportation cost. The common choice $\mc{X}_1=\mc{X}_1= \R^d$ under the quadratic cost $\norm{x - y}_2^2$ yields the \emph{2-Wasserstein distance}
\begin{equation} \label{eq:w2}
    \mr{W}_2(\mu_1, \mu_2) \coloneq \mr{OT}_{\norm{\cdot}^2}(\mu_1, \mu_2)^{1/2} = \inf_{\pi \in \Pi(\mu_1, \mu_2)}\, \norm{X - Y}_{L^2(\pi)}
\end{equation}
between measures $\mu_1, \mu_2 \in \mc{P}_2(\R^d)$. If $\mu_1$ and $\mu_2$ are also absolutely continuous with respect to the Lebesgue measure, a landmark theorem due to \citet{brenier1991polar} shows that a unique optimal plan for \eqref{eq:w2} is given by $\pi = (I,\, \nabla \varphi)_\sharp\, \mu_1$ for a proper lower semi-continuous convex function $\varphi : \R^d \to \R$. In other words, the optimal plan is induced by a deterministic transport map $T = \nabla \varphi$, given as the gradient of a convex function (i.e., monotone for $d = 1$ and $\tilde{c}$-cyclically monotone in general \citep[Chapter 2]{villani2003topics}). The Wasserstein distance metrizes weak convergence of probability measures and gives rise to a rich Riemannian geometry on the so-called 2-Wasserstein space $(\mc{P}_2(\R^d),\mr{W}_2)$ \citep[Chapter 8]{villani2003topics}. Further background related to OT is provided in \Cref{app:background-ot}.

The Wasserstein geometry induces a meaningful notion of an average of collections of probability distributions, under the Wasserstein barycenter framework. For $\rho \in \mc{P}(\mc{P}_2(\R^d))$, the $\rho$-weighted 2-Wasserstein barycenter is defined as the Frechet mean under $\mr{W}_2$:
\begin{align*}
    \bar{\mu} \in \argmin_{\mu \in \mc{P}_2(\R^d)}\; \int \mr{W}_2(\mu, \nu)^2\, d\rho(\nu).
\end{align*}
The 2-Wasserstein barycenter exists and is unique \citep{agueh2011barycenters}, and is usually computed using fixed-point iteration or entropic regularization  \citep[Section 9.2]{peyre2019computational}.

\citet{gangbo1998optimal} studied the related multimarginal OT problem
\begin{equation} \label{eq:multimarginal-ot}
    \inf_{\pi \in \Pi(\mu_1, \dots, \mu_p)}\, \int \sum_{i=1}^p \sum_{j=i+1}^p \norm{x_i - x_j}_2^2\, d\pi(x_1, \dots, x_p),
\end{equation}
where $\Pi(\mu_1, \dots, \mu_p)$ is the set of $p$-plans, i.e., joint distributions $\pi\in\mathcal{P}(\mathcal{X}_1\times\ldots\times \mathcal{X}_p)$ that have $\mu_i$ as their $i$th marginal, for all $i=1,\ldots,p$. \citet{gangbo1998optimal} proved that if $\mu_1, \dots, \mu_p$ are absolutely continuous with respect to the Lebesgue measure, then there exists a unique optimal coupling $\pi$ for the multimarginal problem. The barycenter and multimarginal OT problem are connected by the following result.

\begin{proposition}[\citet{agueh2011barycenters}, Proposition 4.2] \label{prop:multimarginal-wasserstein-barycenter}
    Suppose that $\rho \in \mc{P}(\mc{P}_2(\R^d))$ has $\supp(\rho) = \set{\mu_1, \ldots, \mu_p}$ for absolutely continuous measures $\mu_i$. Define
    \begin{align*}
        g(x_{\mu_1}, \ldots, x_{\mu_p}) := \int x_\nu\, d\rho(\nu)
    \end{align*}
    for $x_{\mu_1}, \dots, x_{\mu_p} \in \R^d$. If $\pi \in \Pi(\mu_1, \ldots, \mu_p)$ solves the multimarginal OT problem between $\mu_1, \ldots, \mu_p$, then $g_\# \pi$ is a $\rho$-weighted 2-Wasserstein barycenter.
\end{proposition}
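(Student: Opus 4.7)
The plan is to verify that $\nu := g_\sharp\, \pi$ minimizes the barycenter functional $\mu \mapsto \sum_{i=1}^p \lambda_i\, \mr{W}_2(\mu, \mu_i)^2$, where $\lambda_i := \rho(\set{\mu_i})$. Writing $\bar{x} := g(x_1, \ldots, x_p) = \sum_{i=1}^p \lambda_i x_i$, the entire argument pivots on the variance decomposition
\begin{align*}
    \sum_{i=1}^p \lambda_i \norm{y - x_i}_2^2 = \norm{y - \bar{x}}_2^2 + Q(x), \qquad Q(x) := \sum_{i=1}^p \lambda_i \norm{x_i - \bar{x}}_2^2 = \tfrac{1}{2}\sum_{i,j=1}^p \lambda_i \lambda_j \norm{x_i - x_j}_2^2,
\end{align*}
valid for every $y, x_1, \ldots, x_p \in \R^d$. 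Under the uniform weighting implicit in \eqref{eq:multimarginal-ot}, $\int Q\, d\pi$ is a positive scalar multiple of the cost in that display, so the hypothesis that $\pi$ solves \eqref{eq:multimarginal-ot} is equivalent to $\pi$ being a minimizer of $\gamma \mapsto \int Q\, d\gamma$ over $\gamma \in \Pi(\mu_1, \ldots, \mu_p)$.

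First I would establish an upper bound on the barycenter functional at $\nu$. The pushforward $\gamma_i^\star := (g, \proj_i)_\sharp\, \pi$ is a coupling in $\Pi(\nu, \mu_i)$, which gives $\mr{W}_2(\nu, \mu_i)^2 \leq \int \norm{g(x) - x_i}_2^2\, d\pi(x)$. Weighting by $\lambda_i$, summing over $i$, and applying the variance identity at $y = \bar{x}$ (which annihilates the $\norm{y - \bar{x}}^2$ term) yields
\begin{align*}
    \sum_{i=1}^p \lambda_i\, \mr{W}_2(\nu, \mu_i)^2 \leq \int Q(x)\, d\pi(x).
\end{align*}

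For the matching lower bound, fix any candidate $\mu \in \mc{P}_2(\R^d)$ and let $\gamma_i \in \Pi(\mu, \mu_i)$ be an optimal coupling attaining $\mr{W}_2(\mu, \mu_i)^2$ for each $i$. Iteratively applying the gluing lemma to the disintegrations of the $\gamma_i$ against their common first marginal $\mu$ produces a joint $\sigma \in \mc{P}(\R^d \times (\R^d)^p)$ whose $(0, i+1)$-marginal equals $\gamma_i$ for each $i$, and whose last $p$ coordinates marginalize to some $\tilde\sigma \in \Pi(\mu_1, \ldots, \mu_p)$. Applying the variance identity under the integral and discarding the nonnegative first term,
\begin{align*}
    \sum_{i=1}^p \lambda_i\, \mr{W}_2(\mu, \mu_i)^2 = \int \norm{y - \bar{x}}_2^2\, d\sigma + \int Q\, d\tilde\sigma \geq \int Q\, d\tilde\sigma \geq \int Q\, d\pi,
\end{align*}
where the last inequality uses the optimality of $\pi$ over $\Pi(\mu_1, \ldots, \mu_p)$. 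Chaining this with the upper bound shows $\nu$ minimizes the barycenter functional, so $\nu = g_\sharp\, \pi$ is a $\rho$-weighted 2-Wasserstein barycenter. The main technical step is the gluing construction of $\sigma$, carried out by iterated application of the disintegration theorem on the Polish space $\R^d$; absolute continuity of the $\mu_i$ is not needed for the gluing step itself but enters through \citet{gangbo1998optimal} to ensure $\pi$ is well-defined.
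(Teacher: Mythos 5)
The paper states this result without proof, citing Agueh and Carlier (2011, Proposition 4.2), so there is no in-paper argument to compare against. Your proof is correct and reproduces the standard argument from that reference: the variance decomposition $\sum_i \lambda_i \|y - x_i\|^2 = \|y - \bar x\|^2 + Q(x)$ is the right identity, the pushforward $(g, \mathrm{proj}_i)_\sharp\, \pi$ gives the upper bound, and gluing the pairwise optimal plans $\gamma_i$ along the common marginal $\mu$ gives the matching lower bound. One thing worth stating more forcefully: you correctly work with the weighted cost $Q(x) = \tfrac12 \sum_{i,j} \lambda_i \lambda_j \|x_i - x_j\|^2$, which is exactly the multimarginal cost in Agueh--Carlier, whereas the paper's display \eqref{eq:multimarginal-ot} uses the \emph{unweighted} cost $\sum_{i<j} \|x_i - x_j\|^2$. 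These coincide (up to a positive constant) only when $\rho$ is uniform over $\{\mu_1,\ldots,\mu_p\}$; for non-uniform $\rho$ the two costs have genuinely different minimizers, so the proposition as worded in the paper only holds with the $\lambda_i\lambda_j$-weighted cost (or under the implicit assumption $\lambda_i = 1/p$). You hedge around this with ``the uniform weighting implicit in \eqref{eq:multimarginal-ot},'' but it deserves to be stated plainly as a hypothesis rather than an aside. One small inaccuracy at the end: absolute continuity is not needed for \emph{existence} of an optimal multimarginal $\pi$ (Prokhorov plus lower semicontinuity suffice); \citet{gangbo1998optimal} is invoked for uniqueness and the Monge structure of $\pi$, which this proposition does not actually require.
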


\paragraph{The Gaussian case: Bures-Wasserstein distance.} As an important special case of \eqref{eq:w2}, suppose that $\mu_1 = \Normal(m_1, \Sigma_1)$ and $\mu_2 = \Normal(m_2, \Sigma_2)$ are nondegenerate Gaussian measures, where $\Sigma_1 \succ 0$ and $\Sigma_2 \succ 0$. Brenier's theorem then implies that the optimal coupling between Gaussian measures $\pi = (I,\, T)_\sharp\, \mu_1$ must be supported on the graph of a deterministic function $T : \R^d \to \R^d$. \citet{knott1984optimal} and \citet{givens1984class} prove the formula for the OT map
\begin{equation} \label{eq:gaussian-ot-map}
    T(x) = \Sigma_1^{-1/2} (\Sigma_1^{1/2} \Sigma_2 \Sigma_1^{1/2})^{1/2} \Sigma_1^{-1/2} (x - m_1) + m_2,
\end{equation}
resulting in the \emph{Bures-Wasserstein} formula for the 2-Wasserstein distance
\begin{equation} \label{eq:bures-wasserstein}
    \mr{W}_2(\mu_1, \mu_2) = \sqrt{\norm{m_1 - m_2}_2^2 + \tr\left( \Sigma_1 + \Sigma_2 - 2\, (\Sigma_1^{1/2} \Sigma_2 \Sigma_1^{1/2})^{1/2} \right)},
\end{equation}
which yields the Bures-Wasserstein geometry, induced by $\mr{BW}(\Sigma_1, \Sigma_2) \coloneq \mr{W}_2(\Normal(0, \Sigma_1),\, \Normal(0, \Sigma_2))$ on the space of symmetric $d \times d$ positive definite matrices.  

Beyond the quadratic bimarginal case, another instance that notably simplifies under the Gaussian setting is the Wasserstein barycenter. \citet{agueh2011barycenters} proved that if $\supp(\rho) = \set{\mu_1, \dots, \mu_p}$, where $\mu_i = \Normal(0, \Sigma_i)$ and $\Sigma_i \succ 0$ are nondegenerate Gaussians for $i \in [p]$, then the barycenter is Gaussian $\Normal(0, \Sigma)$ whose covariance matrix $\Sigma$ is the unique positive definite fixed-point of the equation
\begin{equation} \label{eq:w2-barycenter}
    \Sigma = \int (\Sigma^{1/2} \Sigma_\nu \Sigma^{1/2})^{1/2}\, d\rho(\nu).
\end{equation}
Equations \eqref{eq:bures-wasserstein} and \eqref{eq:w2-barycenter} are the cornerstones for various recent works studying theory and practical applications of closed-form solutions to Gaussian OT \citep{mroueh2019wasserstein,cooper2023applications,han2023learning,lv2024wasserstein}, motivating our study of various OT problems between Gaussians.

\paragraph{GW alignment.} The Wasserstein distance inherently assumes distributions supported on a shared space, as the cost is given in terms of the base distance. The \emph{Gromov-Wasserstein (GW) distances} provide a framework for quantifying discrepancy between distributions on distinct spaces ($\mu_1 \in \mc{P}(\mc{X})$ and $\mu_2 \in \mc{P}(\mc{Y})$). This is done by minimizing distortion between intrinsic similarity functions on each space ($k_\mc{X}$ and $k_\mc{Y}$ on $\mc{X}$ and $\mc{Y}$ respectively): 
\begin{align*}
    \GW_2(\mu_1, \mu_2)
    & \coloneq \left( \inf_{\pi \in \Pi(\mu_1, \mu_2)}\, \int (k_\mc{X}(x, x^\prime) - k_\mc{Y}(y, y^\prime))^2\, d(\pi \otimes \pi)(x, y, x^\prime, y^\prime) \right)^{1/2} \\
    & = \norm{k_\mc{X}(X, X^\prime) - k_\mc{Y}(Y, Y^\prime)}_{L^2(\pi \otimes \pi)}.
\end{align*}
If $\mc{X}$ and $\mc{Y}$ are Hilbert spaces, setting $k_\mc{X}(x, x^\prime) = \inner{x, x^\prime}$ and $k_\mc{Y}(y, y^\prime) = \inner{y, y^\prime}$ yields the \emph{inner product Gromov-Wasserstein (IGW) distance}. If $\mc{H}$ is a Hilbert space, the IGW distance is a metric on the quotient space of $\mc{P}_2(\mc{H})$, modulo equivalence of measures separated almost surely by a unitary transformation \citep[Proposition 3.1]{zhang2024gradient}. The extension of the GW framework to the multimarginal setting was recently considered in \citet{beier2023multi}. That work also established the relationship between multimarginal GW and the GW barycenter, analogous to \Cref{prop:multimarginal-wasserstein-barycenter} in the Wasserstein case. 

Previous work studies the IGW distance and barycenter between Gaussian measures, but existing results are limited to centered Gaussians in finite dimensions. For bimarginal IGW, \citet{salmona2022gromov} show that
\begin{align*}
    \IGW\left( \Normal(0, \Sigma_1),\, \Normal(0, \Sigma_2) \right) = \sum_{i=1}^\infty (\lambda_k(\Sigma_1) - \lambda_k(\Sigma_2))^2,
\end{align*}
where $\lambda_k(\Sigma_i)$ denotes the $k$th largest eigenvalue of $\Sigma_i$. Building on this result, \citet{le2022entropic} further provided a formula for the IGW barycenter. In this work, we extend these results to significantly broaden their practical applicability. Further exposition on GW distances and related results are collected in \Cref{app:background-gw}. We also provide some background on Riemannian optimization and the Stiefel manifold in \Cref{app:background-riemannian}.

\section{IGW Alignment Between Gaussian Measures} \label{sec:igw-gaussian}

This section studies the IGW alignment problem between Gaussian measures on general Hilbert spaces. We treat the bimarginal setting, analyzing both the alignment cost and optimal plan, and build on it to resolve the IGW barycenter. Throughout this section, assume that $\mc{H}$ is a separable Hilbert space.

\subsection{IGW Distance and Optimal Coupling}

Our first result provides a near closed-form expression for the IGW distance and optimal coupling between general (possibly uncentered) Gaussian measures, up to a quadratic optimization over unitary operators. Following this statement, we derive tight analytic upper and lower bounds on the optimal value of that unitary optimization.

\begin{theorem}[IGW between Gaussians]\label{thm:igw-gaussian-distance}
    Let $\mu_1 = \Normal(m_1, \Sigma_1)$ and $\mu_2 = \Normal(m_2, \Sigma_2)$ be Gaussian measures on $\mc{H}$. For a total orthonormal set $\set{e_k}_{k \in \N}$, let $\Sigma_i = Q_i \Lambda_i Q_i^*$ be the spectral decomposition of $\Sigma_i$ with respect to $\set{e_k}_{k \in \N}$, where $\Lambda_i = \sum_{k=1}^\infty \lambda_k(\Sigma_i)\, e_k \otimes e_k$ and $Q_i \in \mc{U}(\mc{H})$ (for $i \in [2]$). Setting $\tilde{m}_i \coloneq Q_i^* m_i$ for $i \in [2]$, define
    \begin{equation}
        \gamma(\mu_1, \mu_2) \coloneq \sup_{C\in \mc{U}(\mc{H})}\; \set*{\tr(\Lambda_1 C^* \Lambda_2 C) + 2\, \inner*{C \Lambda_1^{1/2} \tilde{m}_1,\, \Lambda_2^{1/2} \tilde{m}_2}}.\label{eq:gamma}
    \end{equation}
    The supremum above is attained at some $C \in \mc{U}(\mc{H})$ and the IGW distance between $\mu_1$ and $\mu_2$ is
    \begin{align*}
        \IGW(\mu_1, \mu_2) = \sqrt{\tr(\Lambda_1^2) + \tr(\Lambda_2^2) + 2 \inner{\tilde{m}_1, \Lambda_1 \tilde{m}_1} + 2 \inner{\tilde{m}_2, \Lambda_2 \tilde{m}_2} + (\norm{\tilde{m}_1}^2 - \norm{\tilde{m}_2}^2)^2 - 2 \gamma(\mu_1, \mu_2)}.
    \end{align*}
    Assuming without loss of generality (w.l.o.g.) that $\rank(\Sigma_1) \geq \rank(\Sigma_2)$ and defining $T : \mc{H} \to \mc{H}$ by
    $T(x) = m_2 + Q_2 \Lambda_2^{1/2} C^* \Lambda_1^{1/2} Q_1^* (x - m_1)$, an optimal IGW coupling $\pi$ between $\mu_1$ and $\mu_2$ is
    \begin{align*}
        \pi = (I,\, T)_\sharp\, \mu_1 = \Normal\left( \begin{bmatrix} m_1 \\ m_2 \end{bmatrix}, \begin{bmatrix} \Sigma_1 & Q_1 \Lambda_1^{1/2} C \Lambda_2^{1/2} Q_2^* \\ Q_2 \Lambda_2^{1/2} C^* \Lambda_1^{1/2} Q_1^* & \Sigma_2 \end{bmatrix} \right).
    \end{align*}
\end{theorem}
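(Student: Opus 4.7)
The plan is to expand the IGW cost, apply a tensor-product identity to reduce the problem to a quadratic maximization over cross-covariances, parametrize admissible cross-covariances via Douglas factorization, and finally lift the resulting convex maximization from contractions to unitaries.

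First, I would expand $(\inner{x,x'}-\inner{y,y'})^2$ and exploit the identity $\inner{x,x'}\inner{y,y'}=\inner{x\otimes y,x'\otimes y'}_{\HS}$ together with the independence $(X,Y)\indep(X',Y')$ under $\pi\otimes\pi$ to obtain
\begin{equation*}
    \IGW(\mu_1,\mu_2)^2 = \E\bigl[\inner{X,X'}^2\bigr]+\E\bigl[\inner{Y,Y'}^2\bigr] - 2\sup_{\pi\in\Pi(\mu_1,\mu_2)}\norm{M_\pi}_{\HS}^2,
\end{equation*}
where $M_\pi\coloneq\int x\otimes y\,d\pi(x,y)=\Sigma_\pi^\times + m_1\otimes m_2$. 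Since the cost depends on $\pi$ only through its first two moments, one may replace any $\pi$ by the Gaussian coupling with matching first two moments and restrict the infimum to Gaussian couplings without loss of generality. The two marginal terms evaluate via $\E[\inner{X,v}^2]=\inner{v,\Sigma_1 v}+\inner{m_1,v}^2$ (and its analog for $Y$) to $\tr(\Sigma_i^2)+2\inner{m_i,\Sigma_i m_i}+\norm{m_i}^4$ for $i\in[2]$.

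Next, I would parametrize admissible cross-covariance operators via Douglas factorization: the block psd condition on $\Sigma_\pi$ is equivalent to $\Sigma_\pi^\times = Q_1\Lambda_1^{1/2} K \Lambda_2^{1/2} Q_2^*$ for some contraction $K$ on $\mc{H}$. Expanding $\norm{M_\pi}_{\HS}^2 = \tr(\Lambda_1 K \Lambda_2 K^*) + 2\inner{K\Lambda_2^{1/2}\tilde m_2,\Lambda_1^{1/2}\tilde m_1} + \norm{m_1}^2\norm{m_2}^2$ (using $\norm{u\otimes v}_{\HS}=\norm{u}\norm{v}$) and relabeling $C=K^*$ recovers precisely the functional defining $\gamma(\mu_1,\mu_2)$. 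The constant $\norm{m_1}^2\norm{m_2}^2$ combines with the marginal $\norm{m_i}^4$ contributions to yield the $(\norm{\tilde m_1}^2-\norm{\tilde m_2}^2)^2$ piece stated in the theorem.

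The key step is to upgrade the supremum from contractions to unitaries and to establish attainment. Writing the objective as $\norm{\Lambda_1^{1/2} K \Lambda_2^{1/2}}_{\HS}^2 + 2\inner{Ku,v}$, it is convex in $K$ (the composition of a linear map with $\norm{\cdot}_{\HS}^2$ is convex, and the inner-product term is linear), so its maximum over the convex contraction ball is attained at an extreme point. In finite dimensions these extreme points are precisely the unitaries; in the general separable Hilbert setting extreme points are partial isometries, and the assumption $\rank(\Sigma_1)\geq\rank(\Sigma_2)$ allows us to extend any maximizing partial isometry to a unitary on $\mc{H}$ by choosing an arbitrary unitary block on the orthogonal complement of $\overline{\range(\Lambda_2)}$, which leaves the objective invariant. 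Attainment follows from weak-operator compactness of the contraction ball together with weak-operator continuity of the objective on that ball, which holds because $\Lambda_1,\Lambda_2$ are trace-class (covariance operators of Gaussian measures on $\mc{H}$) so that $\tr(\Lambda_1 K\Lambda_2 K^*)=\sum_{i,j}\lambda_i(\Sigma_1)\lambda_j(\Sigma_2)\abs{\inner{K f_j,e_i}}^2$ is dominated by the summable sequence $\lambda_i(\Sigma_1)\lambda_j(\Sigma_2)$ and hence is WOT-continuous by dominated convergence.

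Given an optimal $C\in\mc{U}(\mc{H})$, the induced Gaussian coupling has cross-covariance $Q_1\Lambda_1^{1/2} C \Lambda_2^{1/2} Q_2^*$; a Schur-complement computation shows that $C^*C=I$ forces the conditional covariance $\Sigma_2-(\Sigma_\pi^\times)^*\Sigma_1^\dagger\Sigma_\pi^\times$ to vanish on $\overline{\range(\Lambda_2)}$, so the coupling is supported on the graph of an affine map $T$ equal to $m_2$ plus the conditional-mean linear map $(\Sigma_\pi^\times)^*\Sigma_1^\dagger(\cdot-m_1)$, which yields the stated form after interpreting $\Lambda_1^{1/2}$ via its Moore-Penrose pseudoinverse on the kernel of $\Sigma_1$. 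The main obstacle I anticipate is the infinite-dimensional extreme-point and attainment argument: carefully using trace-class regularity of the $\Lambda_i$ to upgrade weak-operator convergence to continuity of the quadratic trace term, and using the rank inequality to restore unitarity from partial-isometry maximizers, are the most delicate parts of the proof.
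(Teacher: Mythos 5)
Your plan is correct and follows essentially the same route as the paper's proof: expand the IGW cost so that only the cross-covariance of the coupling enters, use that the objective depends on $\pi$ only through first and second moments to restrict to Gaussian couplings, reduce the feasibility constraint to a contraction condition, and invoke convexity together with Bauer's maximum principle and weak-operator/weak-* compactness to push the maximizer to the boundary. Your packaging differs cosmetically — you use the Hilbert--Schmidt tensor identity $\inner{x,x'}\inner{y,y'}=\inner{x\otimes y, x'\otimes y'}_{\HS}$ where the paper invokes its \Cref{lem:gaussian-inner-product}, and you parametrize admissible cross-covariances by the Douglas/Ando--Smul'jan factorization $\Sigma_\pi^\times=\Sigma_1^{1/2}K\Sigma_2^{1/2}$ rather than working through the Schur complement conditions — but these are equivalent tools. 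Your remark that in infinite dimensions the extreme points of the contraction ball are not all unitaries (they are the isometries and coisometries, by Kadison's theorem), and that one must argue the objective only sees the $\overline{\range(\Lambda_2)}\times\overline{\range(\Lambda_1)}$ block of $C$ so any extreme maximizer can be replaced by a unitary with the same objective value, is a point the paper's proof handles more casually (it asserts $CC^*=I\iff C$ unitary, which fails for coisometries), so your version is arguably the more careful reading of that step.
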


The proof of \Cref{thm:igw-gaussian-distance}, deferred to \Cref{app:proofs}, observes that, upon expanding the IGW cost, the resulting optimization problem only depends on the covariance of $\pi$. Thus, we may restrict to jointly Gaussian couplings $\pi$ without changing the optimal value. Rewriting the constraints using the Schur complement condition for positive semidefiniteness, we arrive at an optimization problem over contractive operators. Using Bauer's maximum principle, we then argue that the optimum is attained on the boundary of the feasible set, resulting in the unitary optimization $\gamma(\mu_1,\mu_2)$. 

\begin{remark}[Finite-dimensional case]
If $\mu_i \in \mc{P}_2(\R^{d_i})$, for $i \in [2]$, with $d_1 \geq d_2$ w.l.o.g., the formula from \eqref{eq:gamma} further simplifies to
\begin{align*}
    \gamma(\mu_1, \mu_2) = \sup_{C \in \mr{St}(d_1,d_2)}\; \left\{\tr(\Lambda_1 C \Lambda_2 C^\intercal) + 2 \left\langle\Lambda_1^{1/2} \tilde{m}_1, C \Lambda_2^{1/2} \tilde{m}_2\right\rangle\right\},
\end{align*}
where $\Lambda_i = \diag(\lambda_1(\Sigma_i), \dots, \lambda_{d_i}(\Sigma_i))$ for $i  \in [2]$ and $\mr{St}(d_1, d_2) \coloneq \set{C \in\mathbb{R}^{d_1\times d_2} : C^\intercal C = I}$ is the \emph{Stiefel manifold} of orthonormal $d_2$-frames in $\mathbb{R}^{d_1}$. Since $\mr{St}(d_1,d_2)$ is a smooth, compact Riemannian manifold, we may use standard manifold optimization techniques to solve this problem in practice \citep{absil2008optimization}. Due to the non-convexity of the objective, iterative manifold optimization schemes could converge to local optima, which would only provide a lower bound on $\gamma(\mu_1, \mu_2)$. \Cref{thm:igw-gaussian-bound}, stated next, provides analytic upper and lower bounds on $\gamma(\mu_1,\mu_2)$ that require no optimization and hold in general Hilbert spaces.  \end{remark}
    
The following bounds on the Gaussian IGW distance follow by inspecting  $\gamma(\mu_1,\mu_2)$ from \eqref{eq:gamma} and identifying closed-form solutions to each of the two added terms in the objective, separately. The resulting expressions are given in terms of the means and covariance operators of $\mu_1,\mu_2$, and are thus computable. 

\begin{theorem}[Analytic bounds] \label{thm:igw-gaussian-bound}
    Under the setting of \Cref{thm:igw-gaussian-distance}, define
    \begin{align*}
        \xi(\mu_1, \mu_2) \coloneq \sum_{k=1}^\infty (\lambda_k(\Lambda_1) - \lambda_k(\Lambda_2))^2 + 2 \inner{\tilde{m}_1, \Lambda_1 \tilde{m}_1} + 2 \inner{\tilde{m}_2, \Lambda_2 \tilde{m}_2} + (\norm{\tilde{m}_1}^2 - \norm{\tilde{m}_2}^2)^2.
    \end{align*}
    Then IGW distance between $\mu_1 = \Normal(m_1, \Sigma_1)$ and $\mu_2 = \Normal(m_2, \Sigma_2)$ satisfies
    \begin{equation}
        \xi(\mu_1, \mu_2) - 4\, \norm{\Lambda_1^{1/2} \tilde{m}_1}\, \norm{\Lambda_2^{1/2} \tilde{m}_2} \leq \IGW(\mu_1, \mu_2)^2 \leq \xi(\mu_1, \mu_2) - 4\, \inner{\Lambda_1^{1/2} \tilde{m}_1, \Lambda_2^{1/2} \tilde{m}_2}. \label{eq:IGW_ineq}
    \end{equation}
    In fact, the upper bound above holds for all $\mu_i \in \mc{P}_2(\mc{H})$ with mean $m_i$ and covariance $\Sigma_i$, for $i \in [2]$.
\end{theorem}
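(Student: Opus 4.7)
The plan is to translate the claimed two-sided bound on $\IGW(\mu_1,\mu_2)^2$ into matching bounds on the unitary-optimization quantity $\gamma(\mu_1,\mu_2)$ from \eqref{eq:gamma}. Substituting the identity $\sum_k(\lambda_k(\Sigma_1)-\lambda_k(\Sigma_2))^2 = \tr(\Lambda_1^2) + \tr(\Lambda_2^2) - 2\tr(\Lambda_1\Lambda_2)$ into the closed-form formula of \Cref{thm:igw-gaussian-distance} rewrites the squared distance as
\begin{equation*}
    \IGW(\mu_1,\mu_2)^2 = \xi(\mu_1,\mu_2) + 2\tr(\Lambda_1\Lambda_2) - 2\gamma(\mu_1,\mu_2),
\end{equation*}
so that \eqref{eq:IGW_ineq} is equivalent to sandwiching $\gamma(\mu_1,\mu_2)$ between the lower value $\tr(\Lambda_1\Lambda_2) + 2\inner{\Lambda_1^{1/2}\tilde m_1,\, \Lambda_2^{1/2}\tilde m_2}$ (which yields the upper bound on $\IGW^2$) and the upper value $\tr(\Lambda_1\Lambda_2) + 2\norm{\Lambda_1^{1/2}\tilde m_1}\norm{\Lambda_2^{1/2}\tilde m_2}$ (which yields the lower bound on $\IGW^2$).

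The lower bound on $\gamma$ is immediate by evaluating the objective in \eqref{eq:gamma} at $C = I \in \mc{U}(\mc{H})$. For the upper bound on $\gamma$, I would invoke sub-additivity of the supremum, $\gamma(\mu_1,\mu_2) \leq \sup_C \tr(\Lambda_1 C^*\Lambda_2 C) + \sup_C 2\inner{C\Lambda_1^{1/2}\tilde m_1,\, \Lambda_2^{1/2}\tilde m_2}$, and bound each term separately with a classical tool. The first supremum is an instance of the von Neumann / Ky Fan trace inequality for compact self-adjoint nonnegative operators: since $\Lambda_1,\Lambda_2$ are diagonal in the common basis $\set{e_k}$ with non-increasing eigenvalues, $\sup_{C \in \mc{U}(\mc{H})}\, \tr(\Lambda_1 C^*\Lambda_2 C) = \sum_k \lambda_k(\Sigma_1)\lambda_k(\Sigma_2) = \tr(\Lambda_1\Lambda_2)$, attained at $C = I$; in the Hilbert-space setting this can be justified by truncating to finite-rank spectral projectors followed by monotone convergence. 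The second supremum is Cauchy-Schwarz combined with the isometry of $C$: $\sup_C 2\inner{C\Lambda_1^{1/2}\tilde m_1,\, \Lambda_2^{1/2}\tilde m_2} = 2\norm{\Lambda_1^{1/2}\tilde m_1}\norm{\Lambda_2^{1/2}\tilde m_2}$, attained by any unitary that rotates the direction of $\Lambda_1^{1/2}\tilde m_1$ onto that of $\Lambda_2^{1/2}\tilde m_2$. Summing the two suprema yields the required upper bound on $\gamma$; I expect the infinite-dimensional Ky Fan step to be the main technical hurdle.

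The extension of the upper bound to arbitrary $\mu_i \in \mc{P}_2(\mc{H})$ with the prescribed mean and covariance rests on the observation that the IGW cost of any coupling $\pi$ depends on $\mu_1,\mu_2$ and $\pi$ only through the second-moment matrices $\E[XX^*]$, $\E[YY^*]$ of the marginals and the cross-second-moment $\E_\pi[XY^*]$---the same calculation that underlies the proof of \Cref{thm:igw-gaussian-distance}. Since the right-hand side of the upper bound in \eqref{eq:IGW_ineq} equals the IGW cost of the Gaussian coupling associated with $C = I$ in \Cref{thm:igw-gaussian-distance}, it suffices to exhibit any coupling of $\mu_1,\mu_2$ whose cross-second-moment matches $m_1 m_2^* + Q_1\Lambda_1^{1/2}\Lambda_2^{1/2} Q_2^*$. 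I would produce such a coupling by whitening each marginal through the spectral decomposition of its covariance, aligning the two whitened distributions in a common reference basis, and then unwhitening---an operation that is feasible in any separable Hilbert space and automatically reproduces the $C = I$ Gaussian cross-second-moment in the Gaussian case.
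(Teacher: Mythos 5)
Your argument for the two-sided Gaussian bound in \eqref{eq:IGW_ineq} is correct and essentially the paper's: translate both inequalities into a sandwich on $\gamma(\mu_1,\mu_2)$, obtain the lower bound on $\gamma$ (hence the upper bound on $\IGW^2$) from the feasible choice $C=I$, and obtain the upper bound on $\gamma$ (hence the lower bound on $\IGW^2$) by splitting the supremum and maximizing each term separately via the von Neumann trace inequality and Cauchy--Schwarz. The paper also handles the infinite-dimensional von Neumann step by citing its extension to Hilbert--Schmidt operators, so your worry about "the infinite-dimensional Ky Fan step" is addressed at essentially the same level as the paper's.

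The gap is in the final sentence of the theorem, the extension of the upper bound to arbitrary $\mu_i\in\mc{P}_2(\mc{H})$ with the prescribed means and covariances. You propose to exhibit a coupling of $\mu_1,\mu_2$ with cross-second-moment $m_1m_2^*+Q_1\Lambda_1^{1/2}\Lambda_2^{1/2}Q_2^*$ by whitening, aligning, and unwhitening. This does not produce a valid coupling: the affine map you effectively apply to a draw from $\mu_1$ pushes forward to a measure with the right mean and covariance but not, in general, equal to $\mu_2$ as a measure. Worse, that cross-second-moment can be \emph{unattainable} by any coupling of a non-Gaussian pair. Take $\mu_1=\Unif[0,1]$ and $\mu_2=\Normal(1/2,1/12)$ (same mean $1/2$, same variance $1/12$): every coupling has centered cross-covariance strictly smaller in absolute value than $\sqrt{\Lambda_1\Lambda_2}=1/12$, because equality in Cauchy--Schwarz would force $Y-m_2$ to be an almost-sure positive multiple of $X-m_1$, which the differing shapes of the marginals rule out. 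In this example the right-hand side of \eqref{eq:IGW_ineq} evaluates to $0$ while $\IGW(\mu_1,\mu_2)>0$, so the general claim fails. The paper routes the extension through \Cref{cor:igw-general-bound}, whose proof replaces the set of realizable cross-covariances by the strictly larger psd-feasible set; but enlarging the feasible set of an infimum yields a \emph{lower} bound on $\IGW$, not an upper bound, so that argument also goes in the wrong direction. Neither your coupling construction nor the paper's relaxation argument establishes the final claim as stated.
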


Inequality \eqref{eq:IGW_ineq} bounds the squared IGW distance between Gaussians within the Cauchy-Schwarz gap between $\inner{\Lambda_1^{1/2} \tilde{m}_1, \Lambda_2^{1/2} \tilde{m}_2}$ and $\norm{\Lambda_1^{1/2} \tilde{m}_1}\, \norm{\Lambda_2^{1/2} \tilde{m}_2}$. For the lower bound, observe that the first term in $\gamma(\mu_1, \mu_2)$ is maximized by $C =I$ due to the von Neumann trace inequality, while the second term is maximized by 
\begin{align*}
    C = \frac{(\Lambda_2^{1/2} \tilde{m}_2) \otimes (\Lambda_1^{1/2} \tilde{m}_1)}{\norm{\Lambda_2^{1/2} \tilde{m}_2}\, \norm{\Lambda_1^{1/2} \tilde{m}_1}}
\end{align*}
due to the Cauchy-Schwarz inequality (subject to the constraint). Maximizing each term separately yields an upper bound on $\gamma(\mu_1, \mu_2)$, and hence a lower bound on the IGW distance. For the upper bound, we simply set $C= I$. The fact that the upper bound holds for general (not necessarily Gaussian) probability measures over $\mathcal{H}$, with matched first and second moments, is a consequence of the following result.

\begin{corollary}[Gaussian bound] \label{cor:igw-general-bound}
    If $\mu_i \in \mc{P}_2(\mc{H})$ for $i \in [2]$, we have
    \begin{align*}
        \IGW(\mu_1, \mu_2) \leq \IGW\left(\Normal(m_{\mu_1}, \Sigma_{\mu_1}),\, \Normal(m_{\mu_2}, \Sigma_{\mu_2})\right).
    \end{align*}
\end{corollary}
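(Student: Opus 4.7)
The plan is to reduce the claim to a comparison of achievable cross-moments by exploiting the structural fact---central to the proof of \Cref{thm:igw-gaussian-distance}---that the IGW cost of a coupling depends on it only through its joint second-moment matrix.

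First I would expand the IGW integrand to write, for any $\pi \in \Pi(\mu_1, \mu_2)$,
\begin{align*}
    \mr{cost}(\pi) = \norm{E_{\mu_1}[X_1 X_1^*]}_\HS^2 + \norm{E_{\mu_2}[X_2 X_2^*]}_\HS^2 - 2 \norm{E_\pi[X_1 X_2^*]}_\HS^2.
\end{align*}
Because $\mu_i$ and $N_i := \Normal(m_{\mu_i}, \Sigma_{\mu_i})$ share first and second moments, the first two terms are invariant when the marginals are replaced by the Gaussians. Thus the claim $\IGW(\mu_1, \mu_2) \leq \IGW(N_1, N_2)$ is equivalent to the supremum inequality $\sup_{\pi \in \Pi(\mu_1, \mu_2)} \norm{E_\pi[X_1 X_2^*]}_\HS^2 \geq \sup_{\pi_G \in \Pi(N_1, N_2)} \norm{E_{\pi_G}[X_1 X_2^*]}_\HS^2$, so everything reduces to comparing these two cross-moment suprema.

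To bridge the two problems, I would start from the IGW-optimal Gaussian coupling $\pi_G^*$ explicitly given by \Cref{thm:igw-gaussian-distance}, which is supported on the graph of an affine map $T^*$ along the principal eigenbases, and push it forward into $\Pi(\mu_1, \mu_2)$ via 2-Wasserstein (Brenier-type) transport maps $S_i : N_i \to \mu_i$. Setting $\pi^\star := (S_1 \otimes S_2)_\sharp \pi_G^*$ yields a candidate in $\Pi(\mu_1, \mu_2)$, and one would analyze $E_{\pi^\star}[X_1 X_2^*]$ in terms of the affine structure of $T^*$ together with the monotone structure of $S_1, S_2$, hoping to show $\norm{E_{\pi^\star}[X_1 X_2^*]}_\HS^2 \geq \norm{E_{\pi_G^*}[Z_1 Z_2^*]}_\HS^2$.

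The hardest step will be controlling this pushforward cross-moment. The reason I anticipate this to be the main obstacle is that the naive moment-matching embedding---constructing, for each $\pi \in \Pi(\mu_1, \mu_2)$, a Gaussian $\pi_G \in \Pi(N_1, N_2)$ with the same joint covariance---actually gives only the \emph{reverse} inequality, since the Gaussian-realizable cross-moment set strictly contains the general-realizable one. To recover the stated direction, one must exploit the specific affine-along-eigenbasis form of $\pi_G^*$ from \Cref{thm:igw-gaussian-distance} together with inner-product-preserving properties of the monotone rearrangements $S_i$, showing that the HS norm of the cross-moment is preserved (or enlarged) by the pushforward even when $\mu_i$ have arbitrary higher moments. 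Carrying this out without restrictive regularity on $\mu_i$ is the main technical challenge.
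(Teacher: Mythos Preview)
Your ``naive moment-matching embedding'' is precisely the argument the paper uses: the IGW cost depends on the coupling only through its cross-covariance operator, and since the block positive-semidefiniteness condition
\[
\begin{bmatrix} \Sigma_{\mu_1} & \Sigma_\pi^\times \\ (\Sigma_\pi^\times)^* & \Sigma_{\mu_2} \end{bmatrix} \succeq 0
\]
is \emph{necessary} for the covariance of any $\pi\in\Pi(\mu_1,\mu_2)$, replacing the coupling constraint by this PSD condition relaxes the infimum problem. The relaxed problem is exactly the Gaussian IGW computation of \Cref{thm:igw-gaussian-distance}. As you correctly observe, relaxing an infimum can only \emph{decrease} the optimal value, so what this argument actually yields is
\[
\IGW\bigl(\Normal(m_{\mu_1},\Sigma_{\mu_1}),\,\Normal(m_{\mu_2},\Sigma_{\mu_2})\bigr)\ \le\ \IGW(\mu_1,\mu_2),
\]
the reverse of the stated inequality.

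This is not a defect in your reasoning but an orientation error in the corollary as written (and in the last sentence of the paper's proof, which calls the relaxed optimum an ``upper bound'' on $\IGW(\mu_1,\mu_2)$). The stated direction is in fact false: on $\mc{H}=\R$ take $\mu_1=\tfrac12\delta_{-1}+\tfrac12\delta_{+1}$ and $\mu_2=\Normal(0,1)$. Both have mean $0$ and variance $1$, so the matched Gaussians coincide and $\IGW(N_1,N_2)=0$; but $\mu_1$ is discrete and $\mu_2$ is continuous, so no unitary (i.e., no sign flip) carries one to the other and $\IGW(\mu_1,\mu_2)>0$. Consequently your Brenier-pushforward repair program cannot succeed---the target inequality is simply the wrong way round---and you should drop that step. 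The correct content of the corollary is the reverse inequality, and it is proved exactly by the relaxation argument you flagged as ``naive''.
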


This corollary follows from the proof of \Cref{thm:igw-gaussian-distance}, which shows that the IGW distance between two Gaussians is also an upper bound on the IGW distance between \emph{any} two measures in $\mc{P}_2(\mc{H})$ with the same covariance. This is another motivation for using Gaussian measures as a proxy for more general measures. Lastly, we highlight two cases where $\gamma(\mu_1,\mu_2)$ is solvable in closed-form, leading to an analytic expression for the IGW distance between Gaussians.

\begin{corollary}[Analytic solutions] \label{cor:igw-analytic-gaussian}
    The following hold:
    \begin{enumerate}
        \item[(i)] \emph{(Co-centered Gaussians)} Under the setting of \Cref{thm:igw-gaussian-distance}, suppose that $\Lambda_1^{1/2} \tilde{m}_1 = \alpha\, \Lambda_2^{1/2} \tilde{m}_2$ for some $\alpha \geq 0$. Then
        \begin{align*}
        \IGW(\Normal\left(m_1, \Sigma_1),\, \Normal(m_2, \Sigma_2)\right) = \sqrt{\xi(\mu_1, \mu_2) + \norm{\Lambda_1^{1/2} \tilde{m}_1}\, \norm{\Lambda_2^{1/2} \tilde{m}_2}},
    \end{align*}    
        where $\xi(\mu_1, \mu_2)$ is given in the statement of \Cref{thm:igw-gaussian-bound}.
        \item[(ii)] \emph{(Univariate Gaussians)} The IGW distance between Gaussians on $\R$ is given by
    \begin{align*}
        \IGW\left(\Normal(m_1, \sigma_1^2),\, \Normal(m_2, \sigma_2^2)\right) = \sqrt{(\sigma_1^2 - \sigma_2^2)^2 + (m_1^2 - m_2^2)^2 + 2 (\sigma_1 \abs{m_1} - \sigma_2 \abs{m_2})^2}.
    \end{align*}
    \end{enumerate}
\end{corollary}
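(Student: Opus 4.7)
The plan is to leverage the upper and lower bounds of \Cref{thm:igw-gaussian-bound} for part (i), and to directly compute the quantity $\gamma(\mu_1,\mu_2)$ from \Cref{thm:igw-gaussian-distance} for part (ii), using the fact that the unitary group $\mc{U}(\R) = \set{\pm 1}$ is very small.

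For part (i), observe that the hypothesis $\Lambda_1^{1/2}\tilde{m}_1 = \alpha\, \Lambda_2^{1/2}\tilde{m}_2$ with $\alpha \geq 0$ is exactly the equality case of the Cauchy-Schwarz inequality
$$\inner{\Lambda_1^{1/2}\tilde{m}_1,\, \Lambda_2^{1/2}\tilde{m}_2} \leq \norm{\Lambda_1^{1/2}\tilde{m}_1}\, \norm{\Lambda_2^{1/2}\tilde{m}_2},$$
with a nonnegative inner product. Consequently, the upper and lower bounds in \eqref{eq:IGW_ineq} collapse to a single value, forcing $\IGW(\mu_1,\mu_2)^2$ to equal that common expression, and the claimed identity follows upon taking a square root.

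For part (ii), on $\R$ the covariance operator is the scalar $\sigma_i^2$, and any spectral factor $Q_i \in \set{\pm 1}$ is admissible. The supremum defining $\gamma(\mu_1,\mu_2)$ in \Cref{thm:igw-gaussian-distance} then reduces to a maximum over $C\in\set{\pm 1}$: the quadratic term $\tr(\Lambda_1 C^*\Lambda_2 C) = \sigma_1^2\sigma_2^2$ is independent of $C$, while the linear term $2C\sigma_1\sigma_2 m_1 m_2$ is maximized at $C = \sign(m_1 m_2)$, giving $\gamma(\mu_1,\mu_2) = \sigma_1^2\sigma_2^2 + 2\sigma_1\sigma_2\abs{m_1}\abs{m_2}$. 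Substituting this into the IGW formula of \Cref{thm:igw-gaussian-distance} and combining the mean-dependent terms via $2\sigma_1^2 m_1^2 + 2\sigma_2^2 m_2^2 - 4\sigma_1\sigma_2\abs{m_1}\abs{m_2} = 2(\sigma_1\abs{m_1} - \sigma_2\abs{m_2})^2$ yields the stated sum-of-squares expression. Alternatively, by choosing $Q_i = \sign(m_i)$ (when $m_i\neq 0$) one forces $\tilde{m}_i = \abs{m_i}$, so that the scalars $\Lambda_i^{1/2}\tilde{m}_i = \sigma_i\abs{m_i}$ lie on the nonnegative ray of $\R$ and meet the cocentering hypothesis of (i) with $\alpha = \sigma_1\abs{m_1}/(\sigma_2\abs{m_2})$ (or trivially when $m_2 = 0$); the univariate formula then drops out of part (i).

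There is no substantive analytical obstacle beyond what \Cref{thm:igw-gaussian-distance,thm:igw-gaussian-bound} already provide. The only point requiring care is the sign convention: the assumption $\alpha\ge 0$ in (i) is precisely what saturates Cauchy-Schwarz on the lower-bound side, and the freedom to pick $Q_i = \pm 1$ in (ii) is what reduces the univariate case to (i). The longest piece of work is the elementary algebraic rearrangement in part (ii).
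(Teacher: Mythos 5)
Your approach is the paper's approach on both parts: for (i) you observe that the bounds in \Cref{thm:igw-gaussian-bound} coincide under the cocentering hypothesis (the Cauchy--Schwarz equality case), and for (ii) you exploit $\mc{U}(\R) = \set{\pm 1}$ to compute $\gamma(\mu_1,\mu_2)$ directly. You supply two useful things the paper leaves implicit: the explicit evaluation $\gamma(\mu_1,\mu_2) = \sigma_1^2\sigma_2^2 + 2\sigma_1\sigma_2\abs{m_1}\abs{m_2}$ together with the sum-of-squares rearrangement $2\sigma_1^2 m_1^2 + 2\sigma_2^2 m_2^2 - 4\sigma_1\sigma_2\abs{m_1}\abs{m_2} = 2(\sigma_1\abs{m_1}-\sigma_2\abs{m_2})^2$, and the alternative route of deducing (ii) from (i) by choosing $Q_i = \sign(m_i)$ so that $\Lambda_i^{1/2}\tilde{m}_i = \sigma_i\abs{m_i} \geq 0$.

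One point your write-up should flag rather than gloss over: when the bounds of \Cref{thm:igw-gaussian-bound} collapse, the common value is $\xi(\mu_1,\mu_2) - 4\norm{\Lambda_1^{1/2}\tilde{m}_1}\,\norm{\Lambda_2^{1/2}\tilde{m}_2}$, \emph{not} $\xi(\mu_1,\mu_2) + \norm{\Lambda_1^{1/2}\tilde{m}_1}\,\norm{\Lambda_2^{1/2}\tilde{m}_2}$ as displayed in the corollary. A sanity check: taking $\mu_1 = \mu_2 = \Normal(1,1)$ on $\R$ forces $\IGW = 0$, but the printed right-hand side evaluates to $\sqrt{5}$, whereas the corrected expression gives $0$ and also matches your (correct) univariate formula in (ii) after the same algebraic rearrangement. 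You assert that ``the claimed identity follows upon taking a square root'' without noticing this mismatch; the paper's own one-line proof of (i) commits the same omission, so this is most plausibly a typo in the corollary's display, but a careful proof should either state the corrected right-hand side or note the discrepancy.
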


The expression for the first case follows from \Cref{thm:igw-gaussian-bound}, since the upper and lower bounds on the IGW distance coincide under the co-centering condition $\Lambda_1^{1/2} \tilde{m}_1 = \alpha\, \Lambda_2^{1/2} \tilde{m}_2$, which corresponds to the equality case of the Cauchy-Schwarz inequality. This result generalizes the formula of \citet{salmona2022gromov} and \citet{le2022entropic} to (for example) the case where only one of the measures is centered, and further allows us to handle the case of general separable Hilbert spaces. The formula for univariate Gaussians is a direct consequence of \Cref{thm:igw-gaussian-distance} by observing that the only unitary operators on $\R$ are $\pm 1$, which enables resolving $\gamma(\mu_1, \mu_2)$ in closed-form. The univariate formula can be used to efficiently estimate the \emph{sliced} GW distance between Gaussians \citep{vayer2019sliced}.

\begin{remark}[Gromov-Bures-Wasserstein distance]
By (i) of \Cref{cor:igw-analytic-gaussian}, $(\mc{G}_0(\mc{H}),\, \IGW)$ is isometric to the nonnegative orthant of the Hilbert space $\ell^2$. Defining $\mf{C}(\mc{H}) \coloneq \set{\Sigma \in \mf{S}_1(\mc{H}) : \Sigma = \Sigma^* \succ 0}$ as the set of non-degenerate covariance operators, we refer to
\begin{align*}
    \mr{GBW}(\Sigma_1, \Sigma_2) \coloneq \IGW(\Normal(0, \Sigma_1),\, \Normal(0, \Sigma_2)) = \left( \sum_{k=1}^\infty \big(\lambda_k(\Sigma_1) - \lambda_k(\Sigma_2)\big)^2 \right)^{1/2}
\end{align*}
as the \emph{Gromov-Bures-Wasserstein (GBW) distance} between $\Sigma_1, \Sigma_2 \in \mf{C}(\mc{H})$. Although the IGW distance is not a geodesic distance in general, one interesting avenue for future work is to study the Riemannian geometry induced by the intrinsic version of this metric \citep[Definition 5.2]{zhang2024gradient}. Essentially, the GBW distance orthogonally aligns the eigenspaces of $\Sigma_1$ and $\Sigma_2$ so that they are simultaneously diagonalizable, and then computes the Hilbert-Schmidt distance between the resulting diagonal operators. This is in contrast to the usual Bures-Wasserstein distance
\begin{align*}
    \mr{BW}(\Sigma_1, \Sigma_2) \coloneq \mr{W}_2(\Normal(0, \Sigma_1),\, \Normal(0, \Sigma_2)) = \left( \tr(\Sigma_1) + \tr(\Sigma_2) - 2 \tr\left( \left( \Sigma_1^{1/2} \Sigma_2 \Sigma_1^{1/2} \right)^{1/2} \right) \right)^{1/2}
\end{align*}
from \eqref{eq:bures-wasserstein}, which is a Riemannian metric on $\mf{C}(\mc{H})$ defined using the quadratic Wasserstein distance between centered Gaussians \citep{knott1984optimal, givens1984class}.
\end{remark}

We conclude this section with the following example that provides geometric intuition about IGW alignment between Gaussian measures in $\R^2$.

\begin{example}[Gaussians on the plane]
By \Cref{cor:igw-general-bound}, note that that any choice of $C \in \mc{U}(\mc{H})$ in \Cref{thm:igw-gaussian-distance} (such as one obtained by a manifold optimization scheme) yields a valid upper bound on the IGW distance. Initializing such a scheme at, e.g., $C = I$, and following the negative gradient allows us to tighten the upper bound in \Cref{thm:igw-gaussian-bound}.

Now, we can plot the displacement interpolation between two univariate Gaussians under the IGW distance and $\mr{W}_2$ distance in \Cref{fig:interpolation-comparison}. Recall that for an optimal OT/IGW map $T$, the displacement interpolation between $\mu_1$ and $\mu_2$ is defined by $\big((1 - t) I + t T\big)_\sharp\, \mu_1$, for $t \in [0, 1]$. The IGW optimal map is given in \Cref{thm:igw-gaussian-distance}; to estimate $C$, we use Riemannian gradient descent (RGD) with a simple adaptive step size for a maximum of 50 iterations or until the norm of the Riemannian gradient is less than $10^{-2}$. The 2-Wasserstein OT map is given by \eqref{eq:gaussian-ot-map}. 

\Cref{fig:interpolation-comparison} compares the two interpolations, showing that the IGW map prefers to rotate the measures, while the 2-Wasserstein OT map moves mass in straight lines. The IGW distance is not invariant to translation of both measures, although the Wasserstein distance is; we depict this in \Cref{fig:interpolation-comparison} by plotting the displacement interpolation between translated versions of the measures. Observe that the IGW map rotates the measure around the origin, while the optimal 2-Wasserstein map transports mass in straight lines like before.

\begin{figure}[htbp]
    \centering
    
    \begin{tabular}{@{}r@{\hspace{1em}}cccc@{}}
        \raisebox{3em}{\footnotesize (a) $\IGW$:} &
        \includegraphics[scale=0.2]{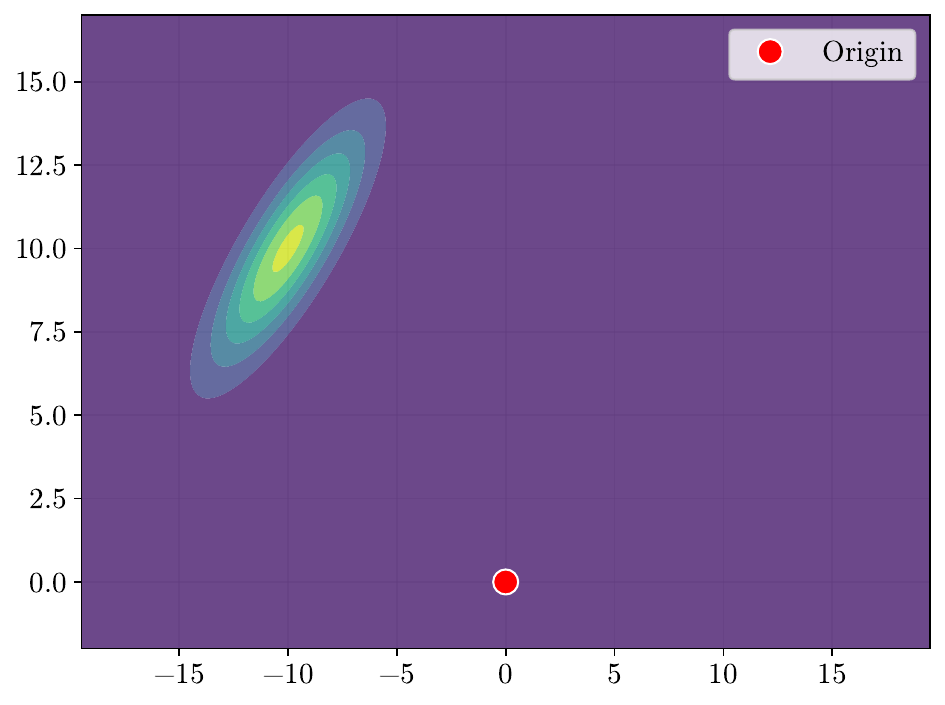} &
        \includegraphics[scale=0.2]{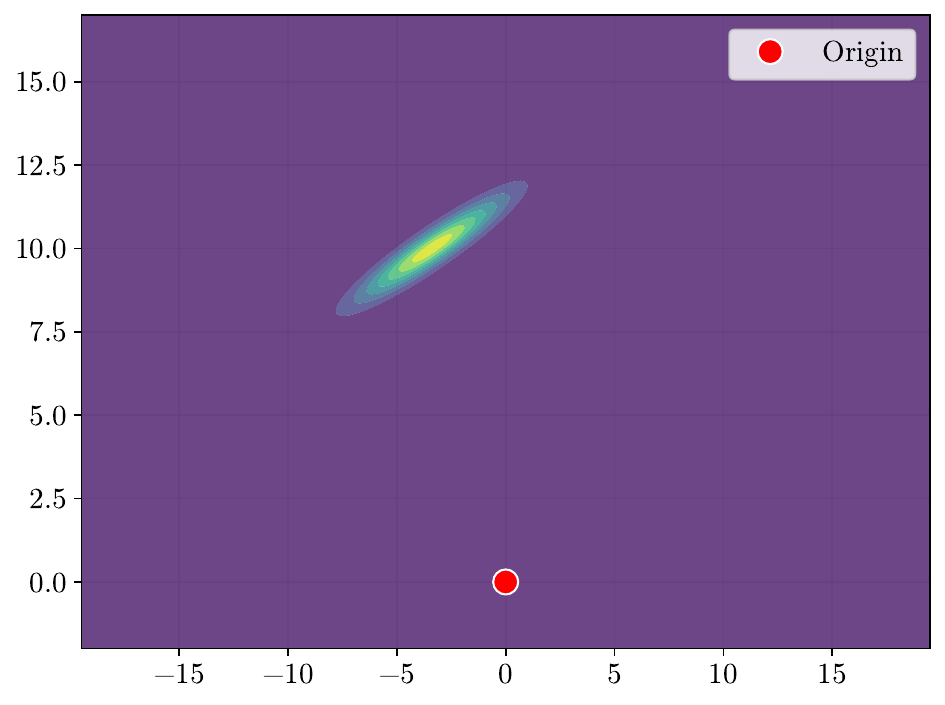} &
        \includegraphics[scale=0.2]{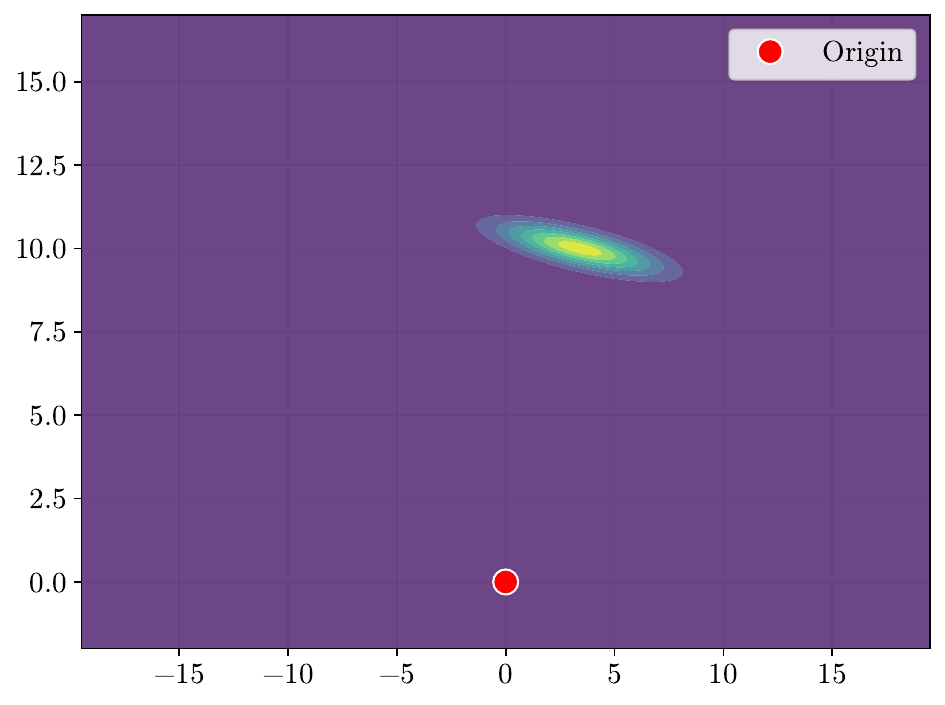} &
        \includegraphics[scale=0.2]{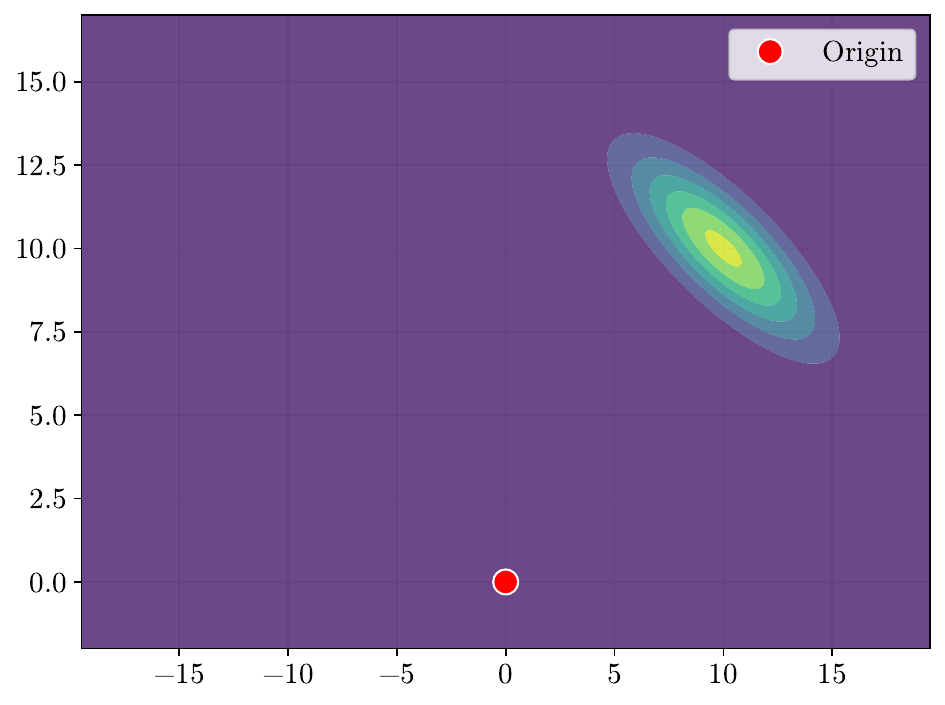} \\
         
        \raisebox{3em}{\footnotesize (a) $\mr{W}_2$:} &
        \includegraphics[scale=0.2]{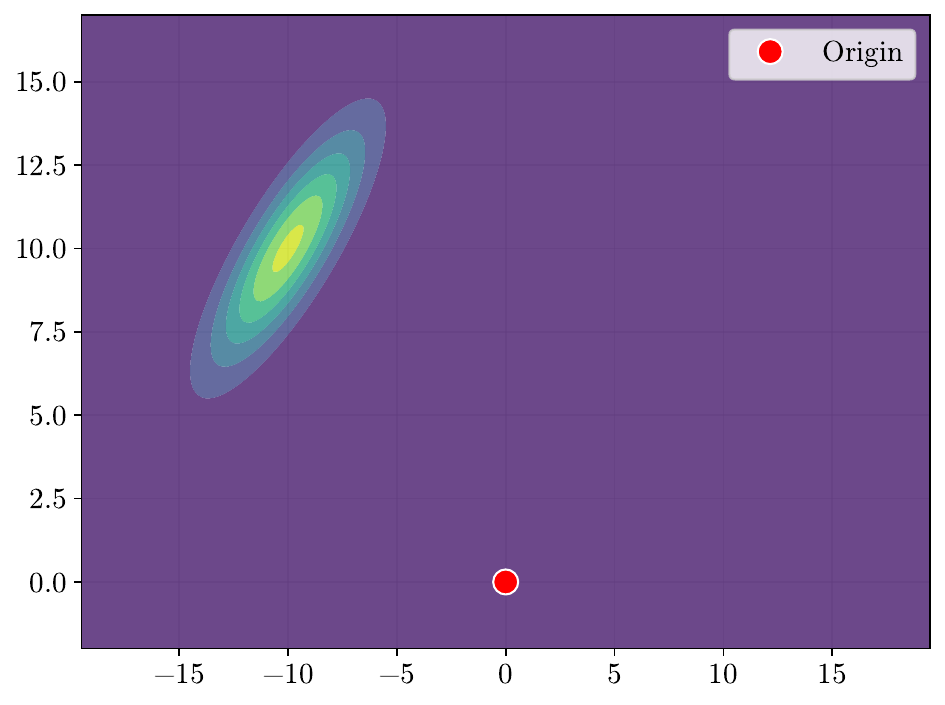} &
        \includegraphics[scale=0.2]{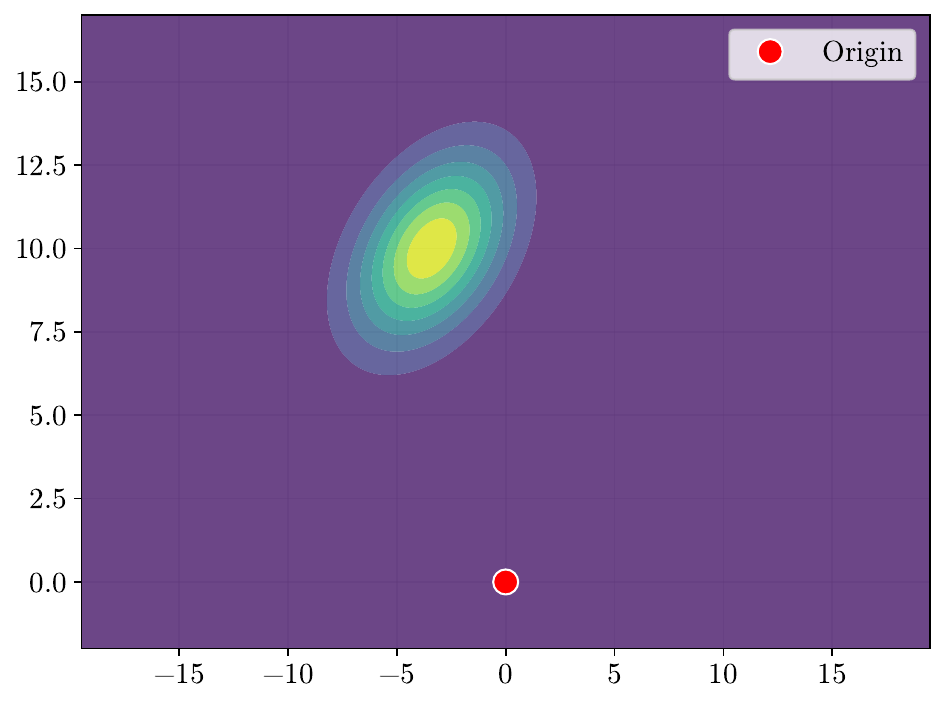} &
        \includegraphics[scale=0.2]{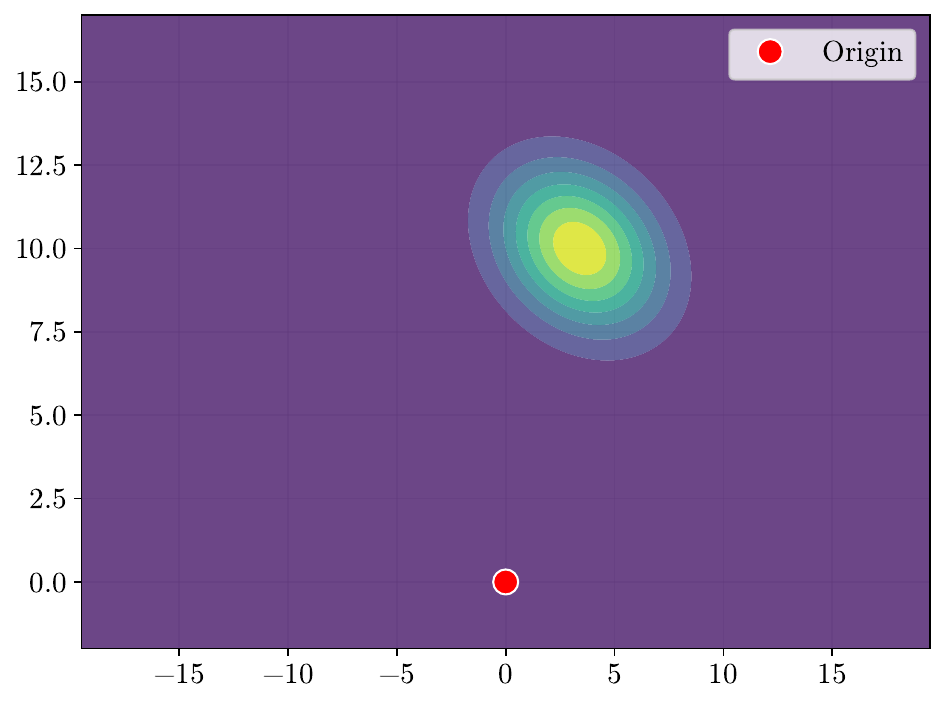} &
        \includegraphics[scale=0.2]{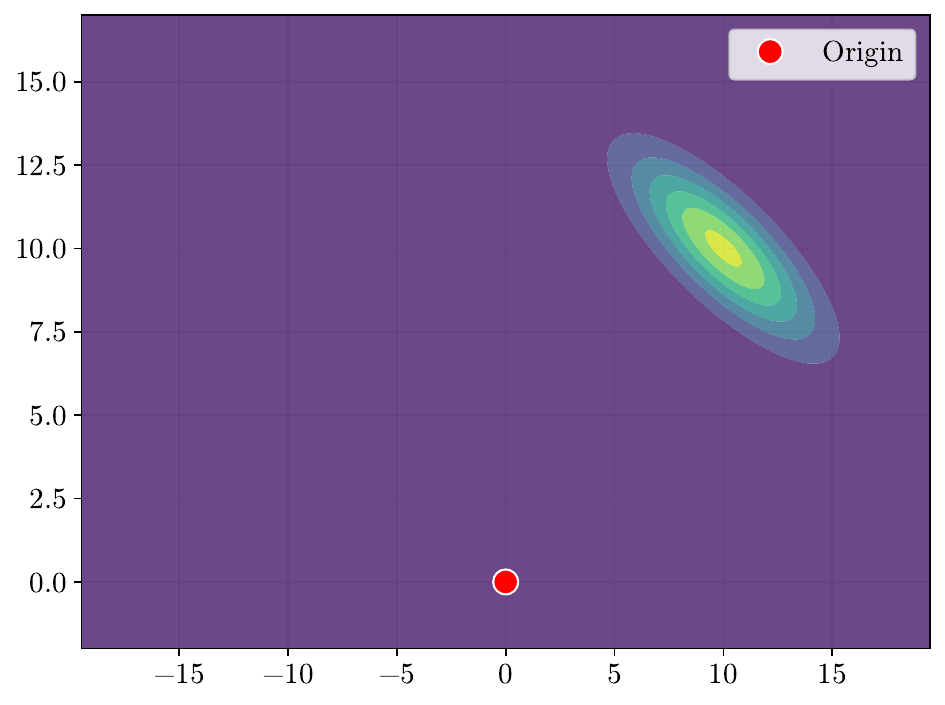} \\[1em]

        \raisebox{3em}{\footnotesize (b) $\IGW$:} &
        \includegraphics[scale=0.2]{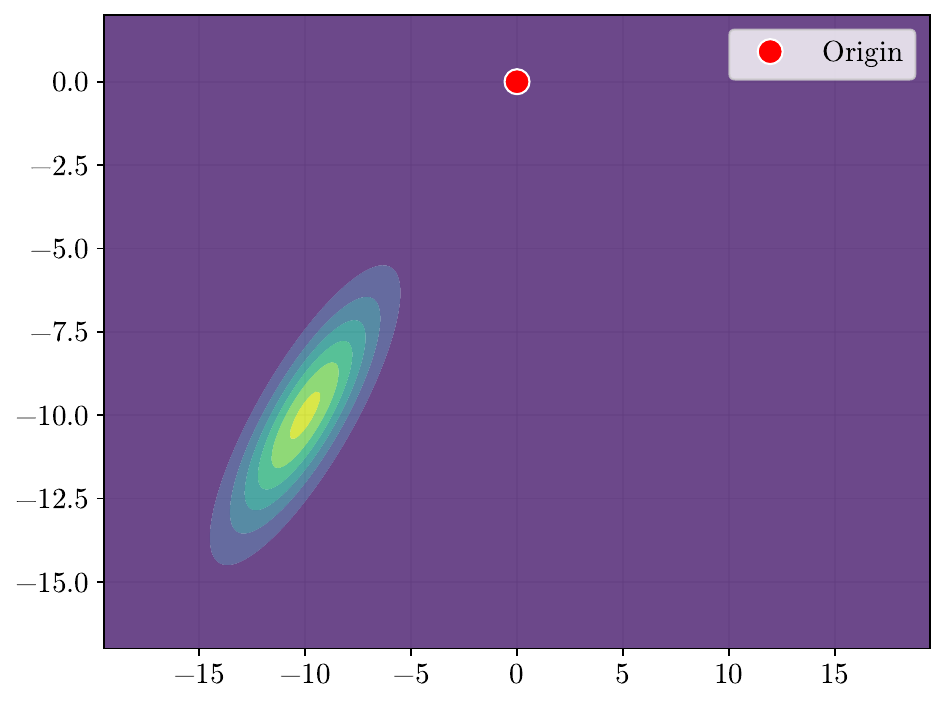} &
        \includegraphics[scale=0.2]{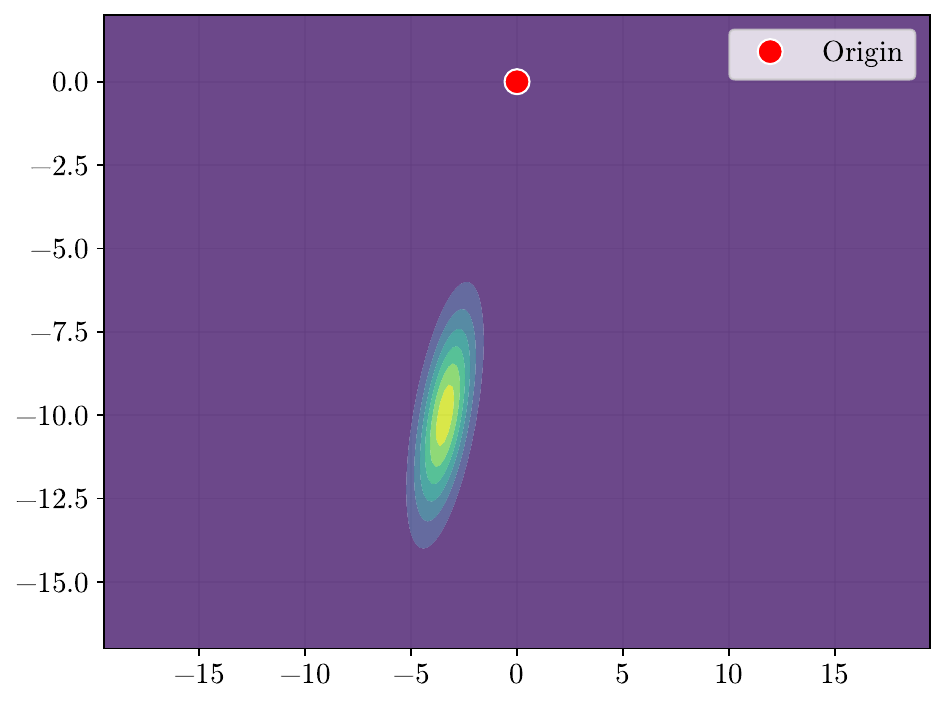} &
        \includegraphics[scale=0.2]{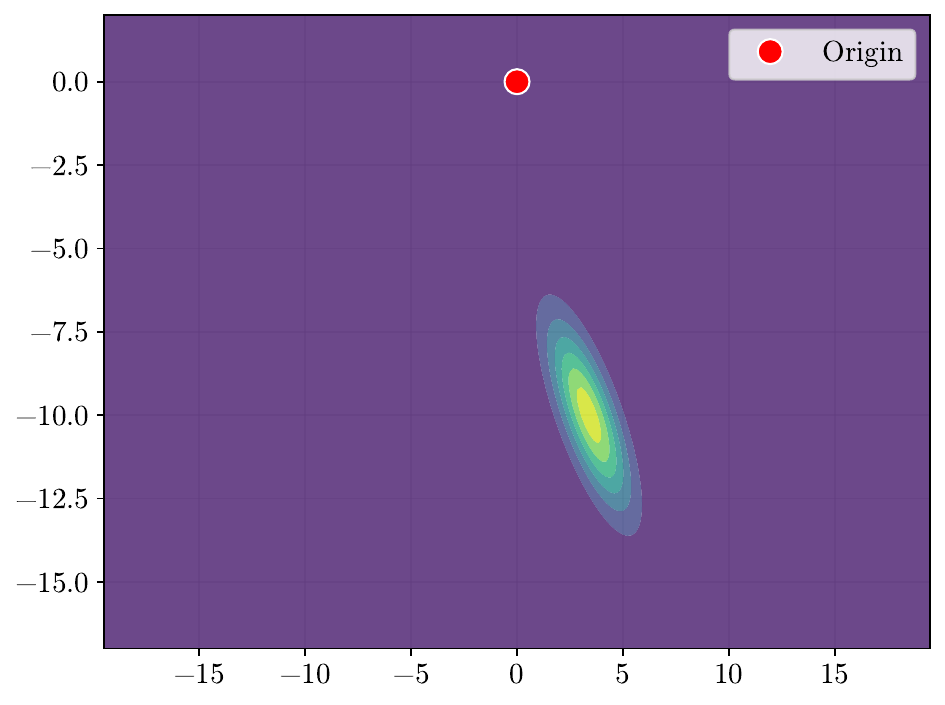} &
        \includegraphics[scale=0.2]{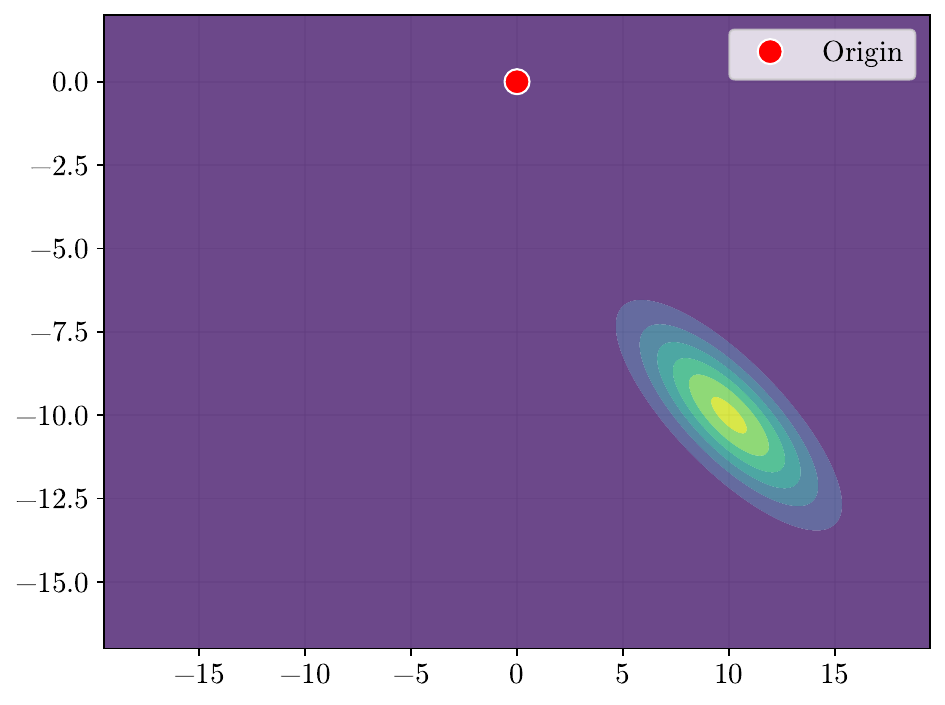} \\
         
        \raisebox{3em}{\footnotesize (b) $\mr{W}_2$:} &
        \includegraphics[scale=0.2]{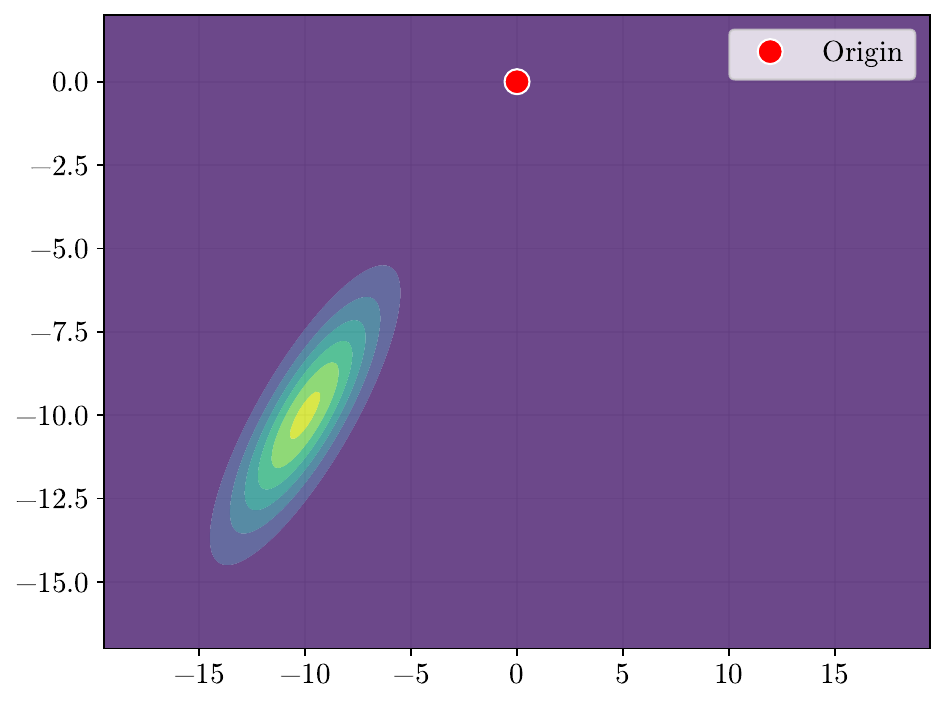} &
        \includegraphics[scale=0.2]{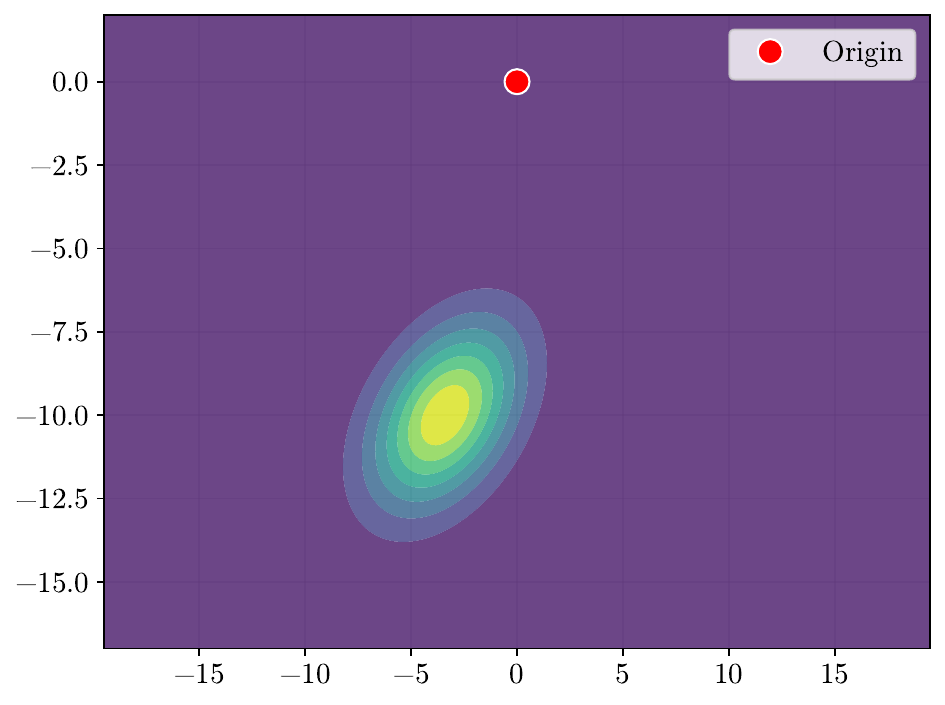} &
        \includegraphics[scale=0.2]{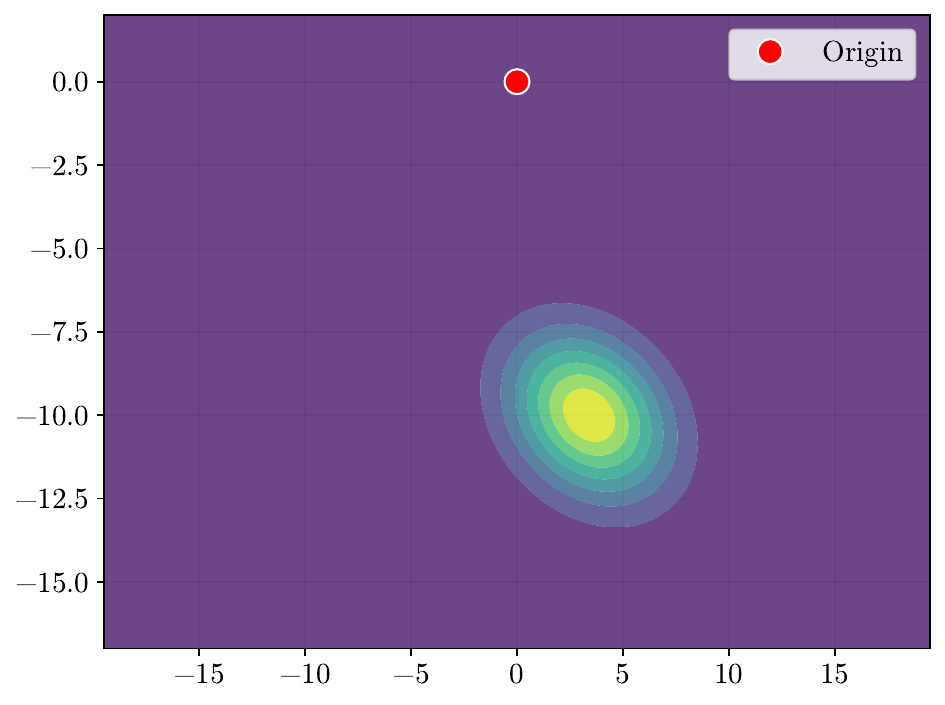} &
        \includegraphics[scale=0.2]{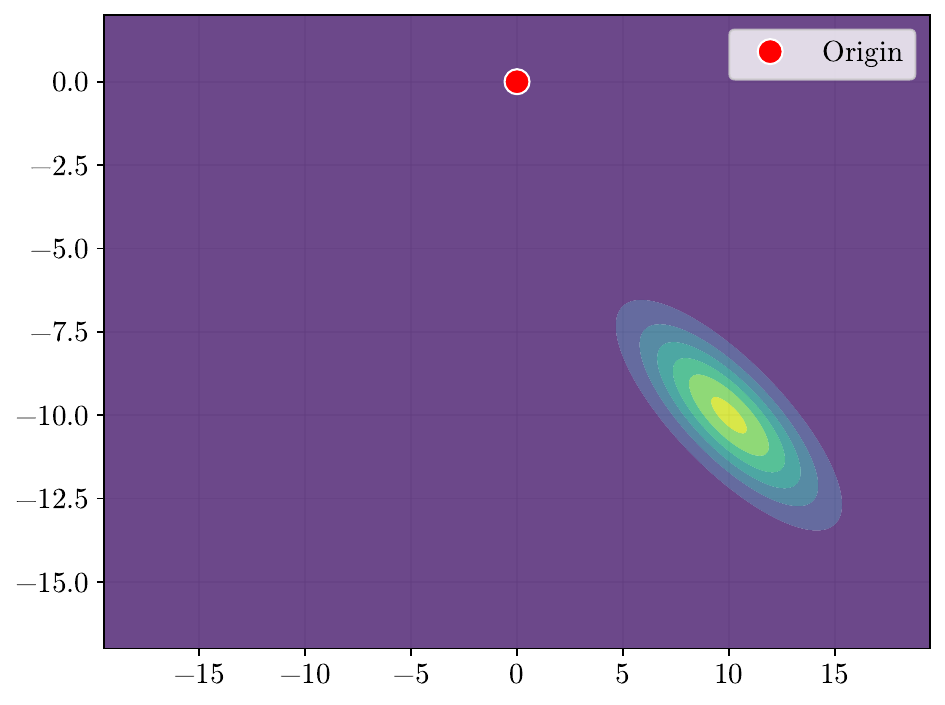} \\
        
        &
        {\footnotesize $t = 0$} &
        {\footnotesize $t = 0.33$} &
        {\footnotesize $t = 0.67$} &
        {\footnotesize $t = 1$}
    \end{tabular}
    
    \vspace*{-.5em}
    \caption{\textit{Comparison of displacement interpolations between two Gaussian distributions (origin marked in red). (a) has the origin at the bottom and (b) has the origin at the top; note that the IGW interpolation prefers to rotate the measures around the origin while the $\mr{W}_2$ interpolation is invariant to translation of both measures. Top row of (a) and (b): IGW displacement interpolation between two Gaussians with OT map estimated using RGD. Bottom row of (a) and (b): 2-Wasserstein displacement interpolation with OT map from the Bures-Wasserstein formula in \eqref{eq:gaussian-ot-map}.}}
    \label{fig:interpolation-comparison}
    \vspace*{-0.5em}
\end{figure}
\end{example}

\subsection{IGW Barycenter Between Centered Gaussians}

In this section, we generalize the formula of \citet{le2022entropic} for the IGW barycenter between centered Gaussian measures. First, we define the notion of an IGW barycenter as any Frech\'et mean under the IGW distance (analogously to the 2-Wasserstein barycenter of \citet{agueh2011barycenters}).

\begin{definition}[IGW barycenter]{} \label{def:igw-barycenter}
    Suppose that $\rho \in \mc{P}(\mc{P}_2(\mc{H}))$, where $\mc{P}_2(\mc{H})$ is equipped with the smallest $\sigma$-algebra so that $\mu \mapsto \mu(B)$ is measurable for all Borel sets $B \in \mc{B}(\mc{H})$. A measure $\bar{\mu} \in \mc{P}_2(\mc{H})$ is a \emph{$\rho$-weighted IGW barycenter} if
    \begin{align*}
        \bar{\mu} \in \argmin_{\mu \in \mc{P}_2(\mc{H})}\; \set*{\int \IGW(\mu, \nu)^2\, d\rho(\nu)}.
    \end{align*}
\end{definition}

Using this definition, we can solve the IGW barycenter problem when $\rho$ is supported on the space $\mc{G}_0(\mc{H})$ of centered Gaussian measures over $\mc{H}$. We have the following representation.

\begin{proposition}[Gaussian IGW barycenter] \label{prop:igw-gaussian-barycenter}
    Suppose that $\rho \in \mc{P}(\mc{P}_2(\mc{H}))$ has $\supp(\rho) \subseteq \mc{G}_0(\mc{H})$ and the map $(\mu \mapsto \lambda_k(\Sigma_\mu)) \in L^2(\rho)$, for all $\mu\in\mc{G}_0$. For any fixed total orthonormal set $\set{e_k}_{k \in \N}$ of $\mc{H}$, the measure $\Normal(0, \Sigma)$ with covariance operator
    \begin{align*}
        \Sigma \coloneq\sum_{k=1}^\infty \left( \int \lambda_k(\Sigma_\nu)\, d\rho(\nu) \right) e_k \otimes e_k,
    \end{align*}
    is a $\rho$-weighted IGW barycenter.
\end{proposition}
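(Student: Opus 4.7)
The plan is to bound $F(\mu) := \int \IGW(\mu,\nu)^2\, d\rho(\nu)$ from below in terms of the first two moments of $\mu$ alone, minimize this moment-based surrogate, and verify that $\bar\mu := \Normal(0,\Sigma)$ attains the lower bound. Following the expansion used in the proof of \Cref{thm:igw-gaussian-distance}, I would write
\[
    \IGW(\mu,\nu)^2 = \E[\langle X,X'\rangle^2] + \E[\langle Y,Y'\rangle^2] - 2\sup_{\pi \in \Pi(\mu,\nu)} \norm{\E[X \otimes Y]}_\HS^2,
\]
where $(X,Y)$ and $(X',Y')$ are independent draws from $\pi$. For $\nu \in \mc{G}_0(\mc{H})$ the cross term reduces to $\norm{C_\pi}_\HS^2$, where $C_\pi$ is the centered cross-covariance. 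Block PSD feasibility of $C_\pi$ combined with the operator von Neumann trace inequality bounds $\norm{C_\pi}_\HS^2 \leq \sum_k \lambda_k(\Sigma_\mu)\lambda_k(\Sigma_\nu)$, which yields
\[
    \IGW(\mu,\nu)^2 \geq \sum_{k=1}^\infty \big(\lambda_k(\Sigma_\mu) - \lambda_k(\Sigma_\nu)\big)^2 + 2\inner{m_\mu, \Sigma_\mu\, m_\mu} + \norm{m_\mu}^4.
\]
By \Cref{thm:igw-gaussian-bound}, this lower bound is tight when $\mu$ is itself Gaussian, since both of its displayed bounds collapse to the same value $\xi$ once $\tilde m_2 = 0$.

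Integrating against $\rho$ and applying Tonelli, the bound reads $F(\mu) \geq \sum_k \int(\lambda_k(\Sigma_\mu) - \lambda_k(\Sigma_\nu))^2\,d\rho(\nu) + 2\inner{m_\mu, \Sigma_\mu\, m_\mu} + \norm{m_\mu}^4$. The mean terms are nonnegative and vanish only at $m_\mu = 0$, while the eigenvalue optimization decouples coordinatewise: each scalar MSE objective $t \mapsto \int(t - \lambda_k(\Sigma_\nu))^2\,d\rho(\nu)$ is uniquely minimized at $\bar\lambda_k := \int \lambda_k(\Sigma_\nu)\,d\rho(\nu)$. Because $\lambda_k(\Sigma_\nu) \geq \lambda_{k+1}(\Sigma_\nu)$ pointwise, integration preserves this ordering, so $(\bar\lambda_k)_k$ is non-increasing and defines a valid spectrum. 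Setting $\bar\mu = \Normal(0,\Sigma)$ with $\Sigma = \sum_k \bar\lambda_k\, e_k \otimes e_k$, both $\bar\mu$ and each $\nu \in \supp(\rho)$ are centered Gaussians, so part (i) of \Cref{cor:igw-analytic-gaussian} yields $\IGW(\bar\mu,\nu)^2 = \sum_k(\bar\lambda_k - \lambda_k(\Sigma_\nu))^2$; integrating against $\rho$ reproduces exactly the infimum identified above, so $\bar\mu$ is a Fr\'echet minimizer in the sense of \Cref{def:igw-barycenter}.

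The main hurdle is justifying the von Neumann trace bound in the infinite-dimensional Schur-complement setting; I would handle this by invoking the contractive-operator parametrization already developed in the proof of \Cref{thm:igw-gaussian-distance}, where Gaussian couplings achieve every Schur-feasible cross-covariance. A secondary technical point is that the barycenter should lie in $\mc{P}_2(\mc{H})$, which requires $\Sigma$ to be trace-class; this follows from the $L^2(\rho)$ hypothesis together with monotone convergence, using that $\sum_k \lambda_k(\Sigma_\nu) = \tr(\Sigma_\nu) < \infty$ for every $\nu \in \supp(\rho)$.
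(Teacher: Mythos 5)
Your proof is correct and is in fact more complete than the paper's own argument, which takes a noticeably shorter but less rigorous route. The paper applies \Cref{cor:igw-analytic-gaussian}(i) to write $\int \IGW(\mu,\nu)^2\,d\rho(\nu) = \int \sum_k(\lambda_k(\Sigma_\mu)-\lambda_k(\Sigma_\nu))^2\,d\rho(\nu)$ and then minimizes this coordinatewise over $(\lambda_k(\Sigma_\mu))_k$; but that identity holds only when $\mu$ is itself a centered Gaussian, so the paper is implicitly restricting the Fr\'echet minimization in \Cref{def:igw-barycenter} (which ranges over all of $\mc{P}_2(\mc{H})$) to centered Gaussians without explaining why that restriction is without loss. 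Note that \Cref{cor:igw-general-bound} gives the inequality in the \emph{wrong direction} for this purpose—replacing $\mu$ with its Gaussianization can only increase the objective—so the restriction is not automatically innocuous. Your approach closes this gap cleanly: you observe that for $\nu \in \mc{G}_0(\mc{H})$ the uncentered cross-covariance $\Sigma_\pi^\times$ coincides with the centered one, the block-PSD feasibility of the centered cross-covariance plus von Neumann gives a moment-only lower bound on $\IGW(\mu,\nu)^2$ valid for \emph{every} $\mu \in \mc{P}_2(\mc{H})$, and then you verify that $\Normal(0,\Sigma)$ saturates the infimum of that lower bound via \Cref{cor:igw-analytic-gaussian}(i). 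This establishes optimality over all of $\mc{P}_2(\mc{H})$ rather than just over Gaussians, which is what \Cref{def:igw-barycenter} demands. Your remarks on trace-class-ness of $\Sigma$ and monotonicity of $(\bar\lambda_k)_k$ are correct and worth keeping.

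One notational wrinkle to tidy: you write the lower bound as $\sum_k(\lambda_k(\Sigma_\mu)-\lambda_k(\Sigma_\nu))^2 + 2\langle m_\mu,\Sigma_\mu m_\mu\rangle + \norm{m_\mu}^4$, but the paper's $\Sigma_\mu$ denotes the \emph{uncentered} second-moment operator, whereas your formula only comes out correctly when $\Sigma_\mu$ there means the \emph{centered} covariance $\Sigma_\mu^c$. Concretely: $\tr(\Sigma_\mu^2) = \sum_k\lambda_k(\Sigma_\mu^c)^2 + 2\langle m_\mu,\Sigma_\mu^c m_\mu\rangle + \norm{m_\mu}^4$, and the von Neumann/Schur-complement bound must be applied with $\Sigma_\mu^c$ (the operator governing feasibility of $C_\pi$), giving $\norm{C_\pi}_\HS^2 \leq \sum_k\lambda_k(\Sigma_\mu^c)\lambda_k(\Sigma_\nu)$. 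With that substitution made uniformly your argument is airtight.
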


A special case of this result occurs when $\rho$ is a finitely-supported measure over centered Gaussian measures, each over a Euclidean space. Restricting the barycenter to lie in $\mc{P}_2(\R^d)$ for some $d \in \N$, we recover the formula of \citet{le2022entropic}.

\begin{remark}[Finite-dimensional IGW barycenter]
    In the setting of \Cref{prop:igw-gaussian-barycenter}, suppose that $\supp(\rho) = \set{\mu_1, \ldots, \mu_p}$, where $\mu_i = \Normal(0, \Sigma_i)$ and $\rank(\Sigma_i) = d_i$ for $i \in [p]$. Further, suppose that we restrict the optimization in \Cref{def:igw-barycenter} to be over $\mu \in \mc{P}_2(\R^d)$ instead of over all of $\mc{P}_2(\mc{H})$; this is called a \emph{fixed-support barycenter} in \citet{beier2023multi}. Define $d_{\mu_i} \coloneq d_i$ for $i \in [p]$. Then, $\Normal(0, \Sigma)$ is a $\rho$-weighted fixed-support IGW barycenter, where
    \begin{align*}
        \Sigma \coloneq \diag\left( \int \lambda_1(\Sigma_\nu)\, \indicator_{1 \leq d_\nu}\, d\rho(\nu),\, \dots,\, \int \lambda_d(\Sigma_\nu)\, \indicator_{d \leq d_\nu}\, d\rho(\nu) \right).
    \end{align*}
    Setting $d \geq \max_{i \in [p]} d_i$ recovers the same value of the objective function as the arbitrary-support barycenter from \Cref{prop:igw-gaussian-barycenter}.
\end{remark}

To elucidate the action of the IGW barycenter found in \Cref{prop:igw-gaussian-barycenter}, we conclude this section with the following example.

\begin{example}[IGW barycenter preserves geometry]
    Suppose we have input measures $\mu_i = \Normal(0, \Sigma_i)$ for $i \in [2]$ on $\R^2$, where
    \begin{align*}
        \Sigma_1&
        \coloneq \begin{bmatrix}
               \sqrt{3} / 2 & 1 / 2        \\
               -1 / 2       & \sqrt{3} / 2
           \end{bmatrix}
        \begin{bmatrix}
            100 & 0  \\
            0   & 10
        \end{bmatrix}
        \begin{bmatrix}
            \sqrt{3} / 2 & 1 / 2        \\
            -1 / 2       & \sqrt{3} / 2
        \end{bmatrix}^\intercal,\\
        \vspace{3mm}
        \Sigma_2&
        \coloneq \begin{bmatrix}
               \sqrt{2} / 2 & -\sqrt{2} / 2 \\
               \sqrt{2} / 2 & \sqrt{2} / 2
           \end{bmatrix}
        \begin{bmatrix}
            100 & 0  \\
            0   & 10
        \end{bmatrix}
        \begin{bmatrix}
            \sqrt{2} / 2 & -\sqrt{2} / 2 \\
            \sqrt{2} / 2 & \sqrt{2} / 2
        \end{bmatrix}^\intercal,
    \end{align*}
    and let $\rho = \Unif(\set{\mu_1, \mu_2})$. The $\rho$-weighted 2-Wasserstein barycenter (which can be computed as the solution of the fixed-point equation in Theorem 6.1 of \citet{agueh2011barycenters}) does not preserve the shape of the input measures, but the $\rho$-weighted IGW barycenter (computed from \Cref{prop:igw-gaussian-barycenter}) does. We display contour plots of both barycenters in \Cref{fig:igw-barycenter} to illustrate this.

    The 2-Wasserstein barycenter minimizes a weighted combination of 2-Wasserstein distances to the $\mu_i$, even if the shape of the barycenter needs to be completely different than those of the input measures. On the other hand, the IGW barycenter is orthogonally invariant and incurs no cost from rotations. In this sense, the IGW barycenter is an average with respect to the task of \emph{alignment}, whereas the 2-Wasserstein barycenter is an average with respect to the task of \emph{transport}.

    \begin{figure}[htbp]
        \centering
        \subfloat[\centering Density of $\mu_1$]{\includegraphics[scale=0.25]{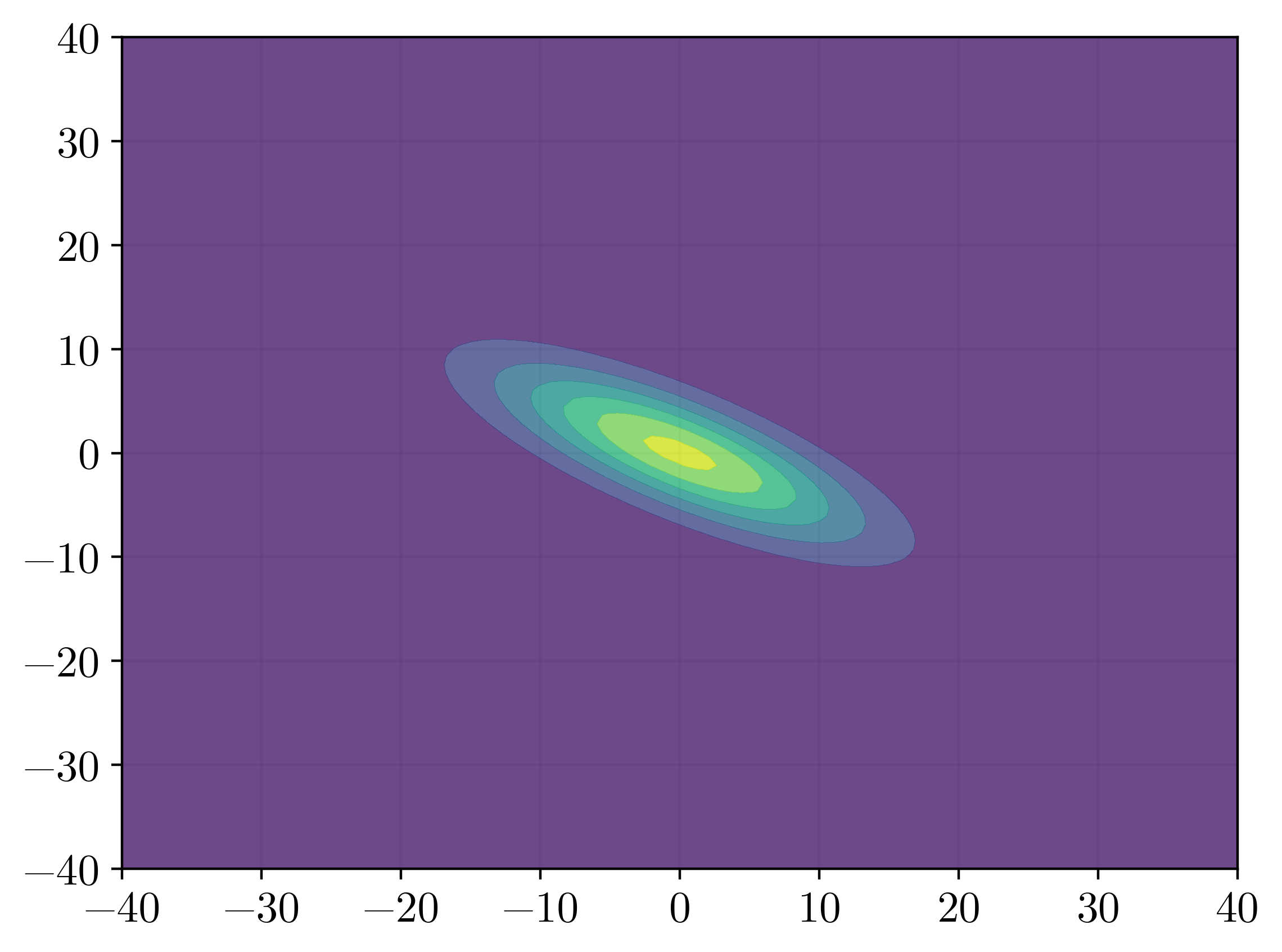}}
        \quad
        \subfloat[\centering Density of $\mu_2$]{\includegraphics[scale=0.25]{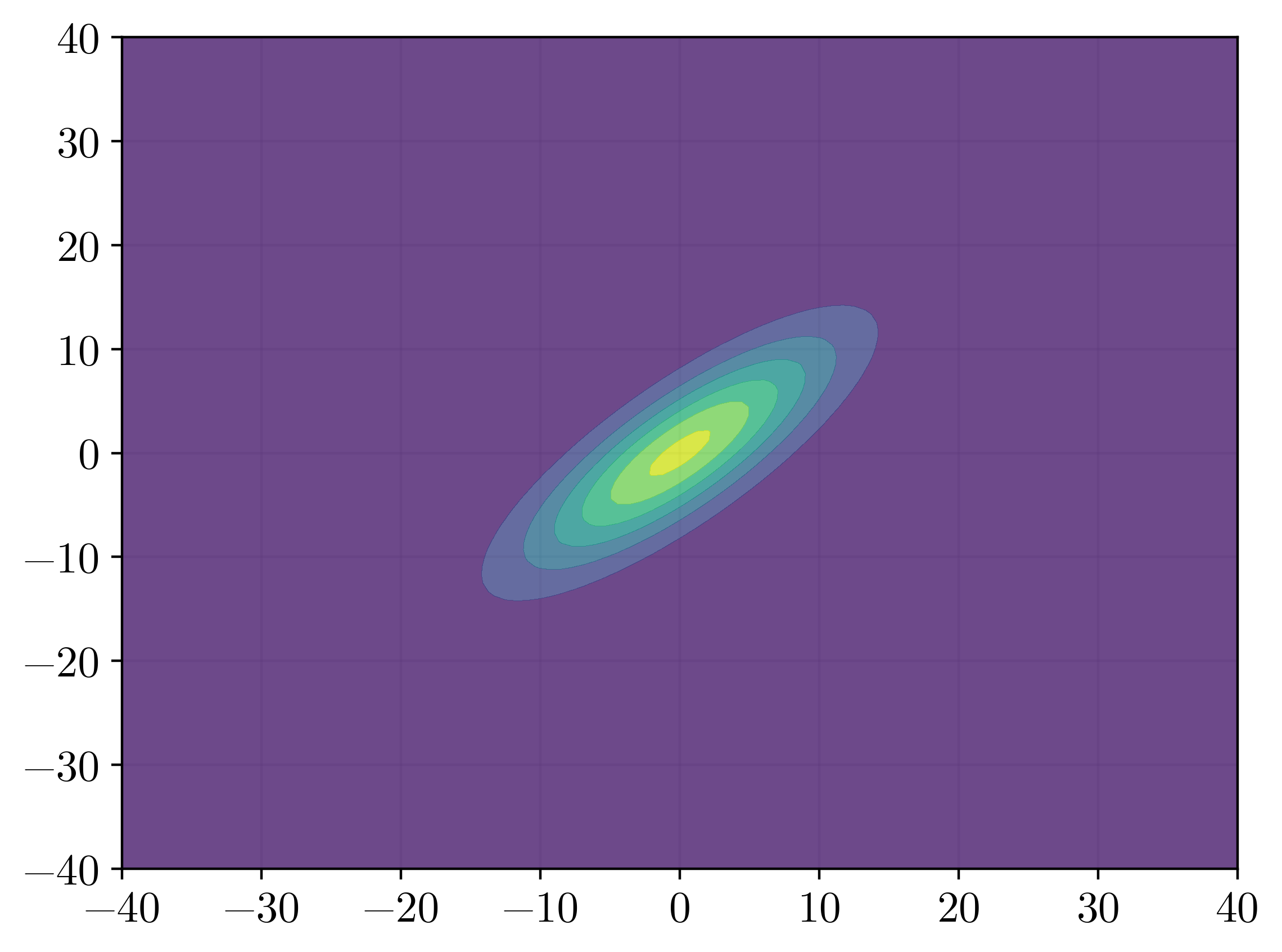}}
        \quad
        \subfloat[\centering $\mr{W}_2$ barycenter]{\includegraphics[scale=0.25]{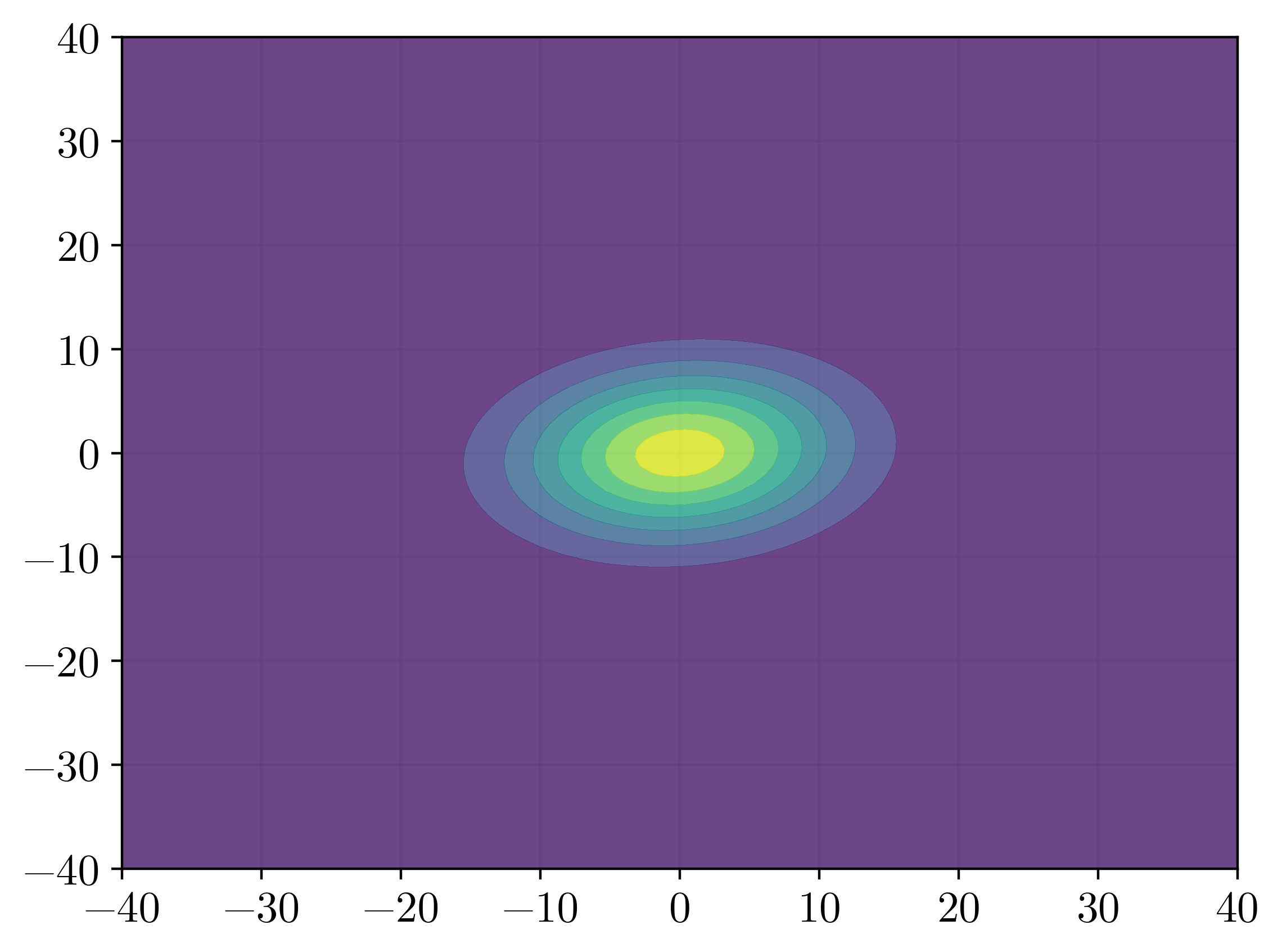}}
        \quad
        \subfloat[\centering IGW barycenter]{\includegraphics[scale=0.25]{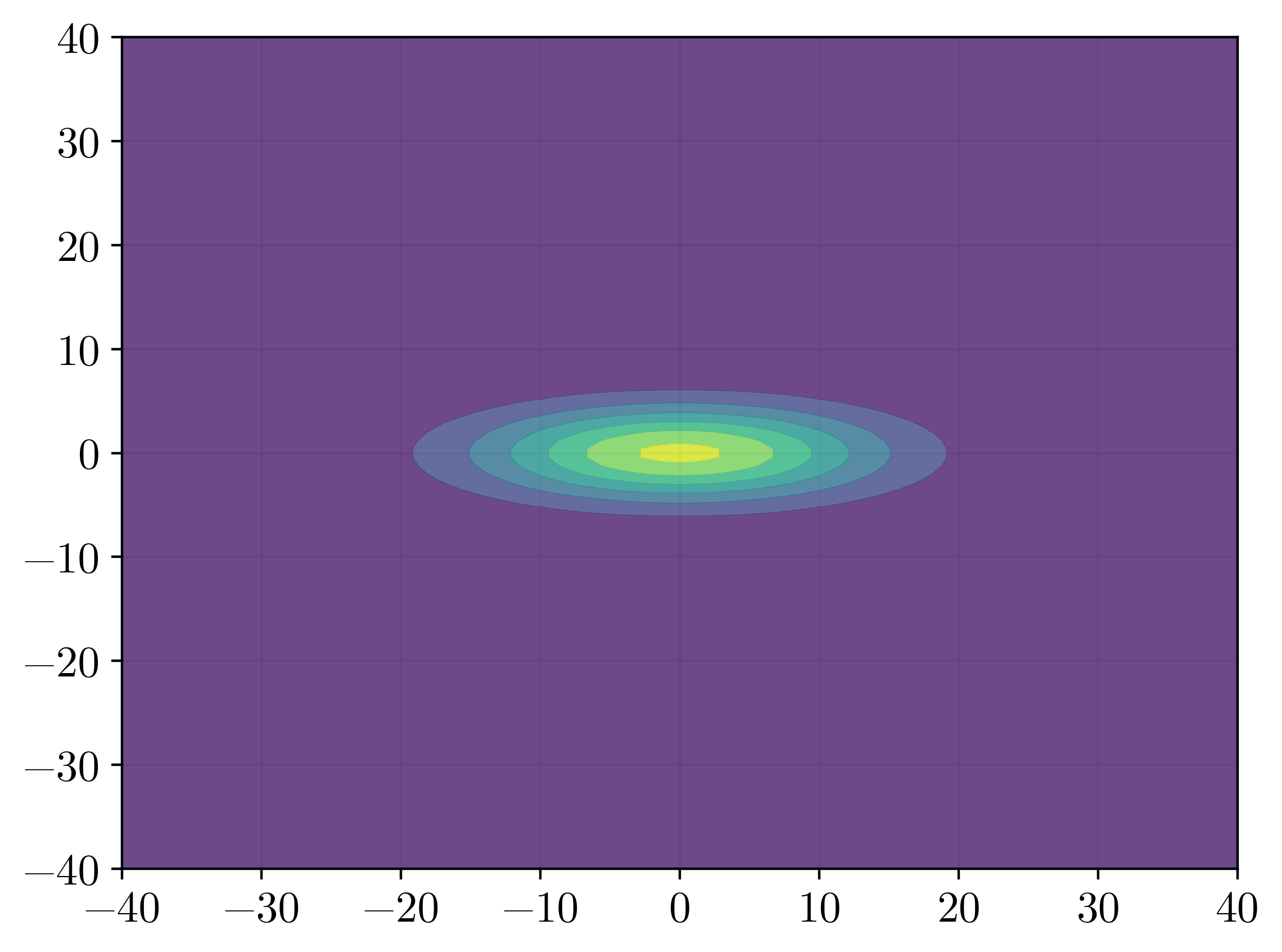}}
        \vspace*{-.5em}
        \caption{\textit{Contour plots of the $\rho$-weighted 2-Wasserstein barycenter and $\rho$-weighted IGW barycenter. The 2-Wasserstein barycenter does not preserve the covariance structure of the input measures, but the IGW barycenter naturally does.}}
        \label{fig:igw-barycenter}
        \vspace*{-.5em}
    \end{figure}
\end{example}

While the results in this section pertained to the bi-marginal IGW problem, we now discuss the multimarginal OT problem and derive new results, under both the IGW cost and under the 2-Wasserstein cost.

\section{Multimarginal Problems Between Gaussian Measures} \label{sec:multimarginal}

In this section, we consider a multimarginal version of the IGW problem, as well as the related multimarginal 2-Wasserstein problem introduced in \citet{gangbo1998optimal}, between Gaussian measures. All proofs are deferred to \Cref{app:proofs}.

\subsection{Multimarginal IGW}

Given $\mu_1, \ldots, \mu_p \in \mc{P}_2(\mc{H})$ for a separable Hilbert space $\mc{H}$, the multimarginal IGW problem is
\begin{align*}
    \inf_{\pi \in \Pi(\mu_1, \ldots, \mu_p)} \quad \int \sum_{i=1}^p \sum_{j=i+1}^p \left(\inner{x_i, x_i^\prime} - \inner{x_j, x_j^\prime}\right)^2\, d(\pi \otimes \pi)(x_1, \dots, x_p, x_1^\prime, \ldots, x_p^\prime).
\end{align*}
Note that the cost is now the sum of pairwise bi-marginal IGW costs, which is analogous to the multimarginal 2-Wasserstein problem of \citet{gangbo1998optimal} shown in \eqref{eq:multimarginal-ot}. When the inputs to the problem are centered Gaussian measures, we obtain the following closed-form solution.

\begin{theorem}[Multimarginal IGW between Gaussians] \label{thm:multimarginal-igw}
    Suppose that $\mu_i = \Normal(0, \Sigma_i)$ are centered Gaussian measures on a separable Hilbert space $\mc{H}$, for $i \in [p]$. Let $\set{v_k}_{k \in \N}$ be any total orthonormal set and $\Sigma_i = Q_i \Lambda_i Q_i^*$ be the spectral decomposition of $\Sigma_i$, where $\Lambda_i = \sum_{k=1}^\infty \lambda_k(\Sigma_i)\, v_k \times v_k$ and $Q_i \in \mc{U}(\mc{H})$ (for $i \in [2]$). Then, the measure
    \begin{align*}
        \Normal\left( 0,\, \begin{bmatrix}
                               \Sigma_1                                  & Q_1 \Lambda_1^{1/2} \Lambda_2^{1/2} Q_2^* & \cdots & Q_1 \Lambda_1^{1/2} \Lambda_p^{1/2} Q_p^* \\
                               Q_2 \Lambda_2^{1/2} \Lambda_1^{1/2} Q_1^* & \Sigma_2                                  & \cdots & Q_2 \Lambda_2^{1/2} \Lambda_p^{1/2} Q_p^* \\
                               \vdots                                    & \vdots                                    & \ddots & \vdots                                    \\
                               Q_p \Lambda_p^{1/2} \Lambda_1^{1/2} Q_1^* & Q_p \Lambda_p^{1/2} \Lambda_2^{1/2} Q_2^* & \cdots & \Sigma_p
                           \end{bmatrix} \right)
    \end{align*}
    solves the multimarginal IGW problem between $\mu_1, \ldots, \mu_p$. Here, the above matrix is to be interpreted as a block operator on $\oplus_{i=1}^p\, \mc{H}$ \citep[Section 1.6]{bikchentaev2024trace}.
\end{theorem}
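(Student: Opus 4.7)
The plan is to decompose the multimarginal objective into a sum of pairwise bimarginal IGW costs, lower-bound each by $\IGW(\mu_i,\mu_j)^2$, and then exhibit a single multimarginal coupling that simultaneously attains all of those bimarginal optima.

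For any $\pi \in \Pi(\mu_1,\dots,\mu_p)$, let $\pi_{ij}$ denote its marginal on coordinates $(i,j)$, which lies in $\Pi(\mu_i,\mu_j)$. Since the integrand $(\inner{x_i,x_i'} - \inner{x_j,x_j'})^2$ depends only on $(x_i,x_j,x_i',x_j')$, the $(i,j)$ summand of the multimarginal cost integrates to the squared bimarginal IGW cost under $\pi_{ij}$. This yields the lower bound
\begin{align*}
    \int \sum_{i<j}(\inner{x_i,x_i'} - \inner{x_j,x_j'})^2\, d(\pi\otimes\pi) = \sum_{i<j}\int(\inner{x_i,x_i'} - \inner{x_j,x_j'})^2\, d(\pi_{ij}\otimes\pi_{ij}) \geq \sum_{i<j}\IGW(\mu_i,\mu_j)^2.
\end{align*}
Therefore, any coupling whose $(i,j)$-marginals all realize the bimarginal IGW optima attains this lower bound and hence is optimal.

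Next, observe that the proposed block covariance operator $\Sigma$ factors as $\Sigma = AA^*$, where $A:\mc{H}\to\oplus_{i=1}^p\,\mc{H}$ is the block column operator with $i$-th entry $Q_i\Lambda_i^{1/2}$. This immediately gives $\Sigma\succeq 0$; moreover $\tr(\Sigma) = \sum_{i=1}^p \tr(\Sigma_i) < \infty$, so $\Sigma$ is trace-class and defines a centered Gaussian measure on $\oplus_{i=1}^p\,\mc{H}$. The $i$-th diagonal block is $Q_i\Lambda_i Q_i^* = \Sigma_i$, so this measure lies in $\Pi(\mu_1,\dots,\mu_p)$, and its $(i,j)$-marginal is jointly centered Gaussian with cross-covariance $Q_i\Lambda_i^{1/2}\Lambda_j^{1/2}Q_j^*$.

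Finally, applying \Cref{thm:igw-gaussian-distance} to the centered pair $(\mu_i,\mu_j)$, the mean terms vanish and $\gamma(\mu_i,\mu_j) = \sup_{C\in\mc{U}(\mc{H})}\tr(\Lambda_i C^* \Lambda_j C)$, which by the von Neumann trace inequality is attained at $C=I$ (since the eigenvalues of $\Lambda_i$ and $\Lambda_j$ are arranged in non-increasing order along the common basis $\set{v_k}_{k\in\N}$). The induced bimarginal IGW-optimal coupling therefore has cross-covariance $Q_i\Lambda_i^{1/2}\Lambda_j^{1/2}Q_j^*$, which matches the $(i,j)$-marginal computed in the previous paragraph. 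Consequently, the proposed Gaussian coupling attains the pairwise lower bound on each summand and solves the multimarginal IGW problem. The main conceptual step is recognizing the factorization $\Sigma = AA^*$, which in one stroke establishes positive semi-definiteness, trace-class membership, correct marginals, and, via \Cref{thm:igw-gaussian-distance}, pairwise optimality; absent this clean factorization, the central obstacle would be verifying that the bimarginal optima can be consistently glued into a joint $p$-marginal coupling.
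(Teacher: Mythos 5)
Your proof is correct and follows essentially the same plan as the paper's: decompose the multimarginal cost into pairwise bimarginal IGW costs (yielding the lower bound $\sum_{i<j}\IGW(\mu_i,\mu_j)^2$), then exhibit the proposed block-Gaussian coupling as simultaneously attaining all pairwise optima. The factorization $\Sigma = AA^*$ with block column $A = (Q_i\Lambda_i^{1/2})_{i=1}^p$ is exactly what underlies the paper's feasibility check $\inner{x,\Sigma x} = \norm*{\sum_{i=1}^p \Lambda_i^{1/2}Q_i^* x_i}^2 \geq 0$, so the two arguments are the same up to packaging.
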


The proof of this result follows from the fact that the cross-covariance matrix $Q_i \Lambda_i \Lambda_j Q_j^*$ optimizes the pairwise IGW cost between $\mu_i$ and $\mu_j$ subject to a looser bi-marginal constraint, so it suffices to check that the resulting block operator is positive semidefinite. The full proof can be found in \Cref{app:proofs}.

Essentially, this result implies that the multimarginal IGW alignment problem between centered Gaussian measures is solved by orthogonally transforming each measure so that they are all simultaneously diagonalizable, and then coupling them so that their corresponding eigenvectors are maximally correlated. In \Cref{app:multimarginal-igw-barycenter}, we use \Cref{thm:multimarginal-igw} to recover a special case of the IGW Gaussian barycenter formula of \Cref{prop:igw-gaussian-barycenter}. Next, we discuss the multimarginal OT problem between Gaussian measures under the quadratic Wasserstein cost.

\subsection{Multimarginal Optimal Transport} \label{sec:multimarginal-gaussian}

Recall the multimarginal OT problem of \citet{gangbo1998optimal} with pairwise quadratic cost:
\begin{equation} \label{eq:multimarginal-ot_section}
    \inf_{\pi \in \Pi(\mu_1, \dots, \mu_p)}\, \int \sum_{i=1}^p \sum_{j=i+1}^p \norm{x_i - x_j}_2^2\, d\pi(x_1, \dots, x_p).
\end{equation}
For Gaussians, we can rewrite the multimarginal OT problem with pairwise quadratic costs from \eqref{eq:multimarginal-ot} as a semidefinite program. Exploring its structure reveals that it can be solved efficiently via a rank-deficiency constraint. 

\begin{theorem}[Multimarginal OT between Gaussians] \label{thm:multimarginal-gaussian}
    Suppose that $\mu_i = \Normal(m_i, \Sigma_i)$ are Gaussian measures on $\R^d$ with $\Sigma_i \succ 0$, for $i \in [p]$. The optimal multimarginal transport plan between $\mu_1, \ldots, \mu_p$ for the problem \eqref{eq:multimarginal-ot_section} is given by the Gaussian measure
    \begin{align*}
        \Normal\left( \begin{bmatrix}
                          m_1 \\ m_2 \\ \vdots \\ m_p
                      \end{bmatrix},\, \begin{bmatrix}
                                           \Sigma_1    & C_{12}      & \cdots & C_{1p}   \\
                                           C_{12}^\intercal & \Sigma_2    & \cdots & C_{2p}   \\
                                           \vdots      & \vdots      & \ddots & \vdots   \\
                                           C_{1p}^\intercal & C_{2p}^\intercal & \cdots & \Sigma_p
                                       \end{bmatrix} \right),
    \end{align*}
    where $\{C_{ij}\}_{i\neq j}\subset \R^{d \times d}$ solve the semidefinite program
    \begin{align*}
        \max_{\{C_{ij}\}} & \quad \sum_{i=1}^p \sum_{j=i+1}^p \tr(C_{ij})                 \\
        \text{s.t.}                       & \quad \Sigma \coloneq \begin{bmatrix}
            \Sigma_1    & C_{12}      & \cdots & C_{1p}   \\
            C_{12}^\intercal & \Sigma_2    & \cdots & C_{2p}   \\
            \vdots      & \vdots      & \ddots & \vdots   \\
            C_{1p}^\intercal & C_{2p}^\intercal & \cdots & \Sigma_p
        \end{bmatrix} \succeq 0.
    \end{align*}
    The solution of this problem exists, is unique, and satisfies $\rank(\Sigma) = d$.
\end{theorem}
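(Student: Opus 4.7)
The plan is to reduce the multimarginal OT problem to the stated semidefinite program by exploiting the fact that the pairwise quadratic cost depends on the coupling $\pi$ only through its first two moments. Expanding the integrand gives
\begin{align*}
    \sum_{i=1}^p \sum_{j=i+1}^p \norm{x_i - x_j}_2^2 = (p-1) \sum_{i=1}^p \norm{x_i}_2^2 - 2 \sum_{i=1}^p \sum_{j=i+1}^p \inner{x_i, x_j}.
\end{align*}
Integrating against any $\pi \in \Pi(\mu_1, \dots, \mu_p)$, the first double sum contributes a constant depending only on the marginals, while the second contributes $-2 \sum_{i < j} \bigl( \tr(C_{ij}^\pi) + \inner{m_i, m_j} \bigr)$, where $C_{ij}^\pi$ denotes the $(i,j)$-cross-covariance block of $\pi$. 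Hence the cost depends on $\pi$ only through its cross-covariances, and the problem reduces to maximizing $\sum_{i < j} \tr(C_{ij})$ over realizable families $\{C_{ij}\}$. Such a family is realizable if and only if the full block matrix $\Sigma$ is psd: any coupling's covariance is psd, and conversely, given $\Sigma \succeq 0$ with the prescribed diagonal blocks, the Gaussian law $\Normal((m_1, \dots, m_p), \Sigma)$ is a valid coupling. This yields the stated SDP and shows that one may restrict to Gaussian couplings without loss of optimality.

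For existence, the feasible set is nonempty (take $C_{ij} = 0$, which gives the independent coupling), closed (as an intersection of the psd cone with affine constraints), and bounded, since applying the Schur complement to the $2 \times 2$ principal submatrix with diagonal blocks $\Sigma_i, \Sigma_j$ and off-diagonal block $C_{ij}$ yields $\norm{C_{ij}}_\mr{op} \leq \norm{\Sigma_i}_\mr{op}^{1/2} \norm{\Sigma_j}_\mr{op}^{1/2}$. A linear objective on a nonempty compact set attains its maximum. For uniqueness, I would invoke \citet{gangbo1998optimal}: since each $\mu_i$ is absolutely continuous (as $\Sigma_i \succ 0$), the multimarginal OT problem with pairwise quadratic cost has a unique optimal coupling. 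Since any optimal SDP solution corresponds to a Gaussian coupling achieving the same optimal cost, and distinct $\{C_{ij}\}$ give distinct coupling measures, overall uniqueness of the plan forces uniqueness of the SDP optimum.

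The main obstacle is establishing $\rank(\Sigma) = d$, for which I would use the connection between multimarginal OT and the 2-Wasserstein barycenter provided by \Cref{prop:multimarginal-wasserstein-barycenter}. The algebraic identity $\sum_{i < j} \norm{x_i - x_j}_2^2 = p \sum_i \norm{x_i - \bar{x}}_2^2$ with $\bar{x} = p^{-1} \sum_k x_k$ identifies our problem with the uniformly-weighted Wasserstein barycenter problem on $\mu_1, \ldots, \mu_p$. By \citet{agueh2011barycenters}, this barycenter is a nondegenerate Gaussian $\bar{\mu} = \Normal(\bar{m}, \bar{\Sigma})$ with $\bar{\Sigma} \succ 0$, and the unique optimal multimarginal plan is the law of $(T_1(Y), \ldots, T_p(Y))$ for $Y \sim \bar{\mu}$, where $T_i$ is the Brenier map from $\bar{\mu}$ to $\mu_i$. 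By \eqref{eq:gaussian-ot-map}, each $T_i$ is affine, i.e., $T_i(x) = A_i(x - \bar{m}) + m_i$ with $A_i \succ 0$. Stacking these into $A = [A_1^\intercal, \ldots, A_p^\intercal]^\intercal \in \R^{pd \times d}$, the optimal covariance factors as $\Sigma = A \bar{\Sigma} A^\intercal$, which has rank at most $d$; the reverse inequality $\rank(\Sigma) \geq d$ is immediate since any diagonal block $\Sigma_i$ has full rank $d$.
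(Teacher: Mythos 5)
Your reduction to the SDP matches the paper's argument: expand $\sum_{i<j}\norm{x_i - x_j}^2$, observe the cost depends on $\pi$ only through its second moments, and pass to Gaussian couplings since the psd constraint exactly characterizes realizable cross-covariances. Your existence argument via compactness of the feasible set (the paper glosses over this) and your uniqueness via Gangbo--Swiech are both sound.

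Where you genuinely diverge from the paper is the rank argument, and your route is clean. The paper invokes Gangbo--Swiech Theorem 2.1 to write the optimal plan as $(I, S_2, \ldots, S_p)_\sharp\,\mu_1$ for deterministic maps $S_i$, then uses the fact that a Gaussian conditional mean $\E_\pi[X_i \mid X_1 = x] = C_{1i}^\intercal \Sigma_1^{-1} x$ is affine to conclude $S_i$ is linear, giving $C_{ij} = A_i \Sigma_1 A_j^\intercal$ and $\rank(\Sigma) = d$. You instead route through the barycenter: the identity $\sum_{i<j}\norm{x_i - x_j}^2 = p\sum_i \norm{x_i - \bar{x}}^2$ ties the problem to the uniformly weighted $\mr{W}_2$ barycenter $\bar{\mu}$, which by \citet{agueh2011barycenters} is a nondegenerate Gaussian, and then the Brenier maps $T_i$ from $\bar{\mu}$ to $\mu_i$ are affine with invertible linear parts, so $\Sigma = A\bar{\Sigma}A^\intercal$ factors through $\R^d$. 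Both arguments get the same factorization; yours is slightly slicker but leans on the converse direction of the barycenter--multimarginal correspondence (that the optimal $\pi$ is exactly $(T_1,\ldots,T_p)_\sharp\,\bar{\mu}$), which is in Agueh--Carlier's Section 4 but not in the Proposition as stated in the paper and deserves a one-line justification (that $\sum_i T_i/p = \mathrm{id}$ holds $\bar{\mu}$-a.e.\ by the barycenter's first-order condition, so this plan is feasible and achieves the optimal cost, hence is the unique optimum). The paper's approach avoids the barycenter machinery entirely and is closer to first principles, at the cost of a mild technicality about conditional expectations; yours leverages more structure and makes the low-rank factorization transparent.
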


\Cref{thm:multimarginal-gaussian} implies that the optimal multimarginal Gaussian plan---a probability measure over $\R^{dp}$---is obtained by pushing the isotropic Gaussian measure in $\R^d$ through $p$ linear functions $\ell_1,\ldots,\ell_p:\R^d \to \R^d$. Namely, the optimal plan is $(\ell_1, \ldots, \ell_p)_\sharp\,\mc{N}(0,I)\in\Pi(\mu_1,\ldots,\mu_p)$.

To prove \Cref{thm:multimarginal-gaussian}, we expand the cost function and argue that the optimal coupling can be chosen to be jointly Gaussian since the optimization problem only depends on the centered covariance of $\pi$. The last part of the result follows from a theorem of \citet{gangbo1998optimal}, which shows that the optimal coupling is supported on a deterministic map, implying that the optimal coupling is supported on a $d$-dimensional submanifold of $\R^{dp}$. Finally, the proof of \Cref{thm:multimarginal-gaussian} concludes by arguing that this submanifold must actually be a linear subspace of $\R^{dp}$ since the conditional expectation of one coordinate of a Gaussian given another is affine.

\begin{example}[Visualization of multimarginal OT]
    We can further provide insight into the structure of the optimal multimarginal coupling through a representative example. Suppose that $p = 3$ and that $\mu_i = \Normal(0, \Sigma_i)$, $i \in [3]$, are centered Gaussians. We can solve the semidefinite program in \Cref{thm:multimarginal-gaussian} using a standard interior-point method \citep{diamond2016cvxpy} to obtain the optimal coupling $\pi$ between $\mu_1, \mu_2, \mu_3$. Although $\pi$ is a measure on $\R^6$, \Cref{thm:multimarginal-gaussian} implies that it is supported on a 2-dimensional linear subspace of $\R^6$. Therefore, we can visualize the nondegenerate part of $\pi$ by projecting onto this 2-dimensional subspace. We display such an example in \Cref{fig:mm-comparison}.

    \begin{figure}[htbp]
        \centering
        
        \begin{tabular}{@{}r@{\hspace{1em}}cccc@{}}
            \raisebox{3em}{\footnotesize Aligned measures:} &
            \includegraphics[scale=0.2]{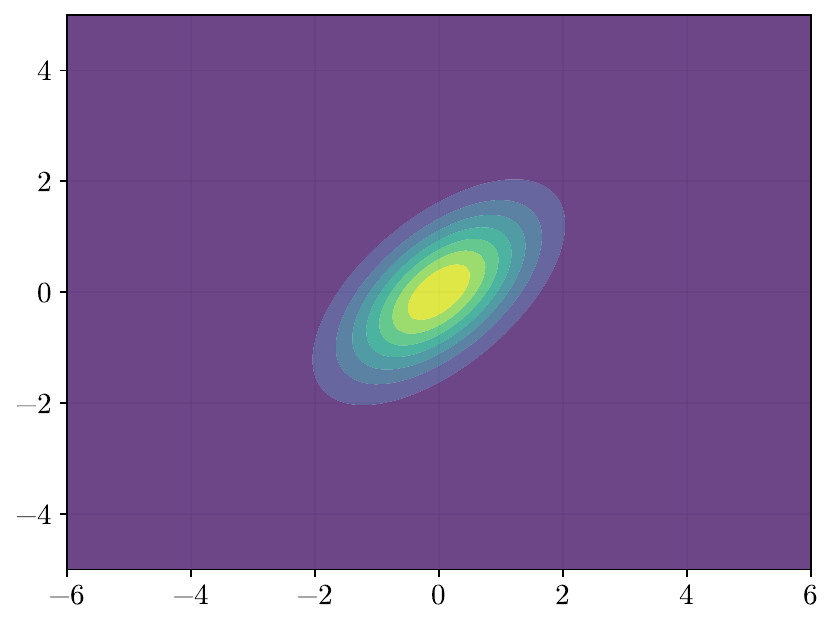} &
            \includegraphics[scale=0.2]{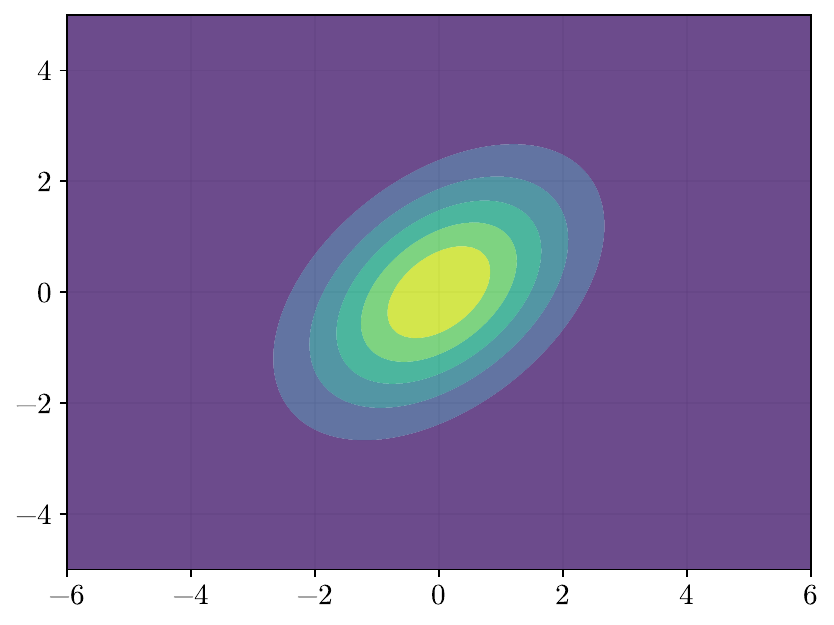} &
            \includegraphics[scale=0.2]{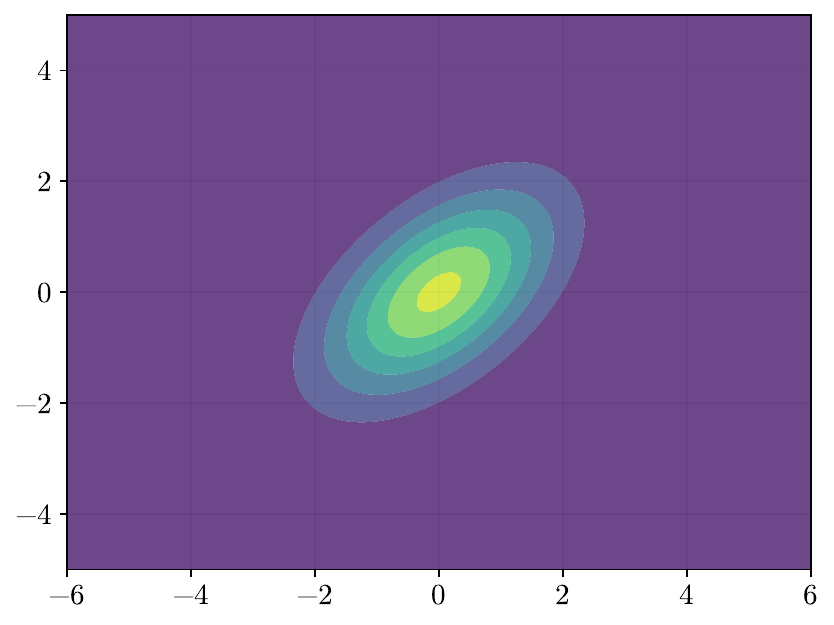} &
            \includegraphics[scale=0.2]{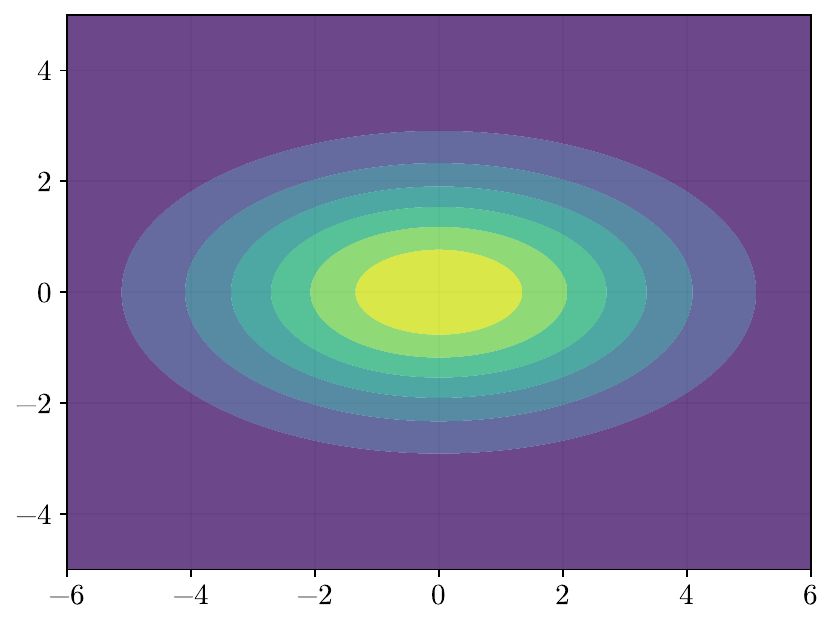} \\
             
            \raisebox{3em}{\footnotesize Misaligned measures:} &
            \includegraphics[scale=0.2]{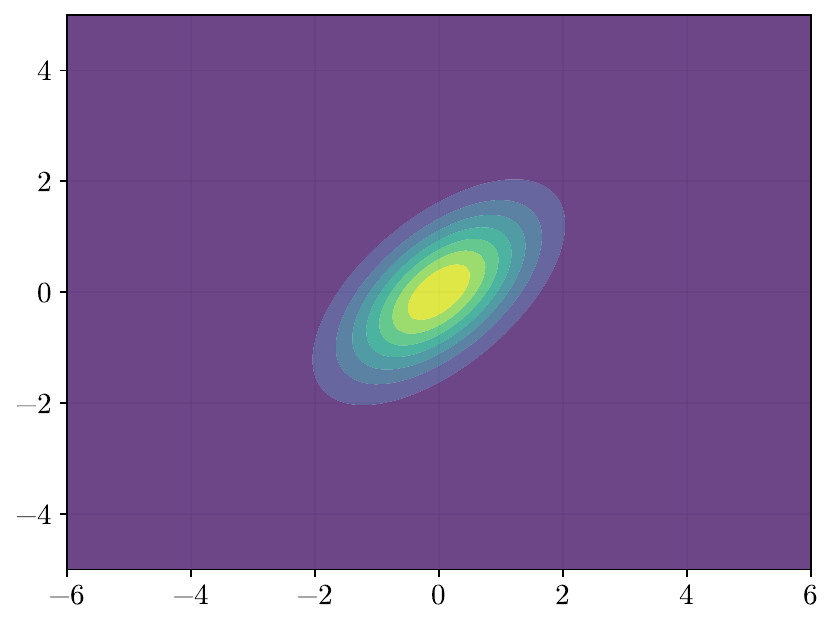} &
            \includegraphics[scale=0.2]{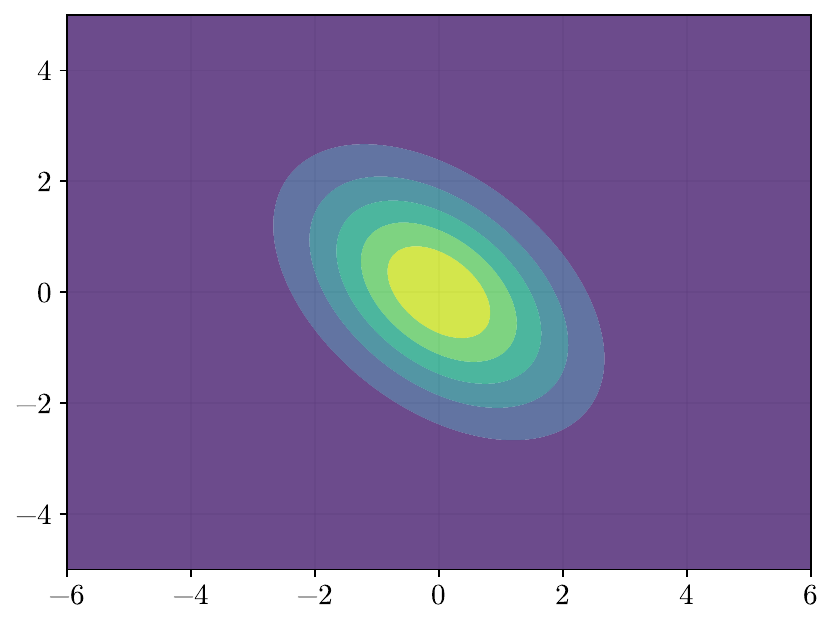} &
            \includegraphics[scale=0.2]{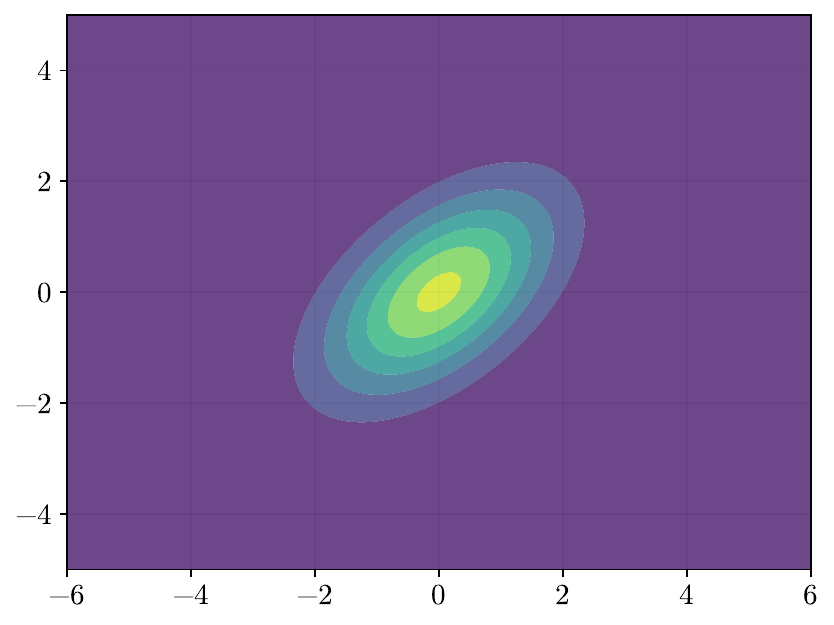} &
            \includegraphics[scale=0.2]{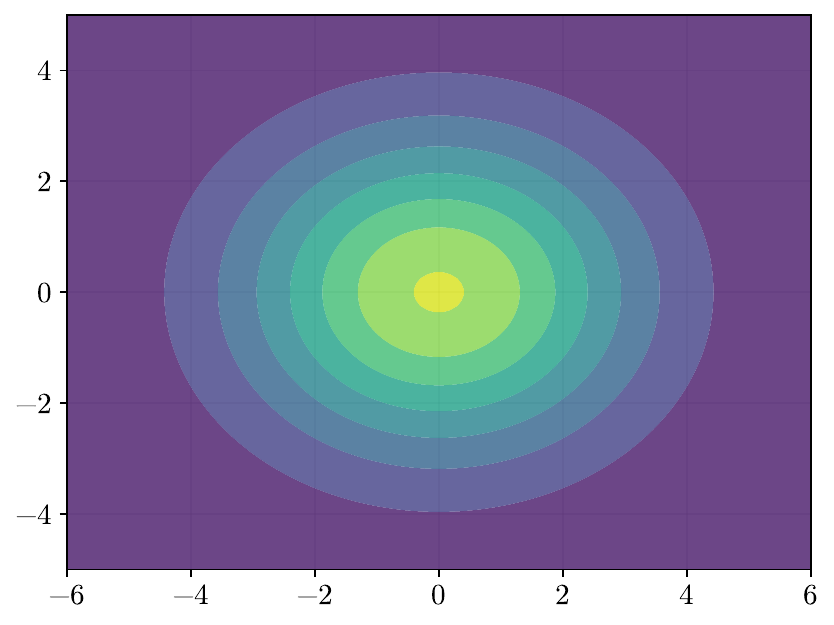} \\
            
            &
            {\footnotesize (a) $\mu_1 = \Normal(0, \Sigma_1)$} &
            {\footnotesize (b) $\mu_2 = \Normal(0, \Sigma_2)$} &
            {\footnotesize (c) $\mu_3 = \Normal(0, \Sigma_3)$} &
            {\footnotesize (d) Optimal coupling $\pi$}
        \end{tabular}
        
        \vspace*{-.5em}
        \caption{\textit{Comparison of multimarginal OT between two sets of Gaussian distributions. Top row: multimarginal OT between three misaligned Gaussians. Bottom row: multimarginal OT between three aligned Gaussians.}}
        \label{fig:mm-comparison}
        \vspace*{-0.5em}
    \end{figure}
    
    From \Cref{fig:mm-comparison}, we see that the optimal coupling $\pi$ can afford to be less spread out when the input measures are more aligned, so that even a ``flatter'' two-dimensional covariance structure can be orthogonally transformed in $\R^6$ to have the correct marginals. This is in contrast to the top row in \Cref{fig:mm-comparison}, where the input measures are not aligned, which forced the optimal multimarginal coupling to be more spread out.
\end{example}

\subsection{An Efficient Burer-Monteiro Algorithm for Multimarginal OT} \label{sec:multimarginal-bm}

A natural question following \Cref{thm:multimarginal-gaussian} is whether the rank deficiency of the solution can help design faster algorithms for solving the semidefinite program. We answer this question in the affirmative, by proposing one such approach based on the Burer-Monteiro factorization \citep{burer2003nonlinear}. Fix $k \leq dp$ and consider the optimization problem
\begin{align*}
    \inf_{U = (U_1^\intercal, \dots, U_p^\intercal)^\intercal \in \R^{dp \times k}} & \quad -\sum_{i=1}^p \sum_{j=i+1}^p \tr(U_i U_j^\intercal)      \\
    \text{s.t.}                                                      & \quad U_i U_i^\intercal = \Sigma_i \quad \text{for } i \in [p]
\end{align*}
and let $\mc{M} \coloneq \set{U = (U_1^\intercal, \dots, U_p^\intercal)^\intercal \in \R^{dp \times k} : U_i U_i^\intercal = \Sigma_i \text{ for } i \in [p]}$ denote the search space, which is a smooth and compact Riemannian submanifold of $\R^{dp \times k}$. By Corollary 8 of \citet{boumal2016non}, if $U = (U_1^\intercal, \dots, U_p^\intercal)^\intercal \in \R^{dp \times k}$ is a rank-deficient second-order stationary point of the above problem (as defined in \Cref{app:background-riemannian}), then $C_{ij} = U_i U_j^\intercal$ is a \emph{global} optimizer of the problem in \Cref{thm:multimarginal-gaussian}. In particular, so long as we can find a rank-deficient second-order stationary point of the above problem, we can solve the multimarginal OT problem between $\mu_1, \ldots, \mu_p$.

While for general semidefinite programs one can only guarantee the existence of a second-order stationary point $U$ with rank strictly less than $k \propto d \sqrt{p}$ \citep{boumal2016non}, in our setting, thanks to \Cref{thm:multimarginal-gaussian} we can always guarantee the existence of such a point by setting $k = d + 1$. This phenomenon is unusual for general semidefinite programs, and relies on the specific structure of our problem. The multimarginal solution can then be used to obtain the Gaussian barycenter, as per \Cref{prop:multimarginal-wasserstein-barycenter}. We summarize these results in the following theorem.

\begin{theorem}[Burer-Monteiro for multimarginal OT] \label{thm:burer-monteiro-multimarginal}
    Let $\mu_i = \Normal(m_i, \Sigma_i)$ be Gaussian measures on $\R^d$ with $\Sigma_i \succ 0$ for $i \in [p]$. Suppose $U \in \R^{dp \times (d+1)}$ is any second-order stationary point of
    \begin{align*}
        \inf_{U = (U_1^\intercal, \dots, U_p^\intercal)^\intercal \in \R^{dp \times (d+1)}} & \quad -\sum_{i=1}^p \sum_{j=i+1}^p \tr(U_i U_j^\intercal)       \\
        \text{s.t.}                                                          & \quad U_i U_i^\intercal = \Sigma_i \quad \text{for } i \in [p].
    \end{align*}
    Then, the Gaussian measure $\Normal\big((m_1^\intercal, \ldots, m_p^\intercal)^\intercal,\, UU^\intercal\big)$ solves the multimarginal OT problem between $\mu_1, \ldots, \mu_p$. Further, define $U_{\mu_i} \coloneq U_i$ for $i \in [p]$. If $\rho \in \mc{P}(\mc{P}_2(\R^d))$ has $\supp(\rho) = \set{\mu_1, \dots, \mu_p}$, then the $\rho$-weighted 2-Wasserstein barycenter is
    \begin{align*}
        \bar{\mu} \coloneq \Normal\left( \int m_\nu\, d\rho(\nu),\, \left( \int U_\nu\, d\rho(\nu) \right) \left( \int U_\nu\, d\rho(\nu) \right)^\intercal \right).
    \end{align*}
\end{theorem}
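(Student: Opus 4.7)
The plan is to recognize the Burer--Monteiro (BM) problem as a non-convex reformulation of the semidefinite program (SDP) in \Cref{thm:multimarginal-gaussian}, and then exploit its rank structure. Setting $\Sigma = UU^\intercal$ with $U = (U_1^\intercal, \ldots, U_p^\intercal)^\intercal$, the constraint $U_i U_i^\intercal = \Sigma_i$ matches the diagonal-block feasibility of the SDP, and the BM objective $-\sum_{i<j} \tr(U_i U_j^\intercal)$ equals the negative of the SDP objective at $C_{ij} = U_i U_j^\intercal$. Thus feasibility is preserved and minimizing the BM objective maximizes the SDP objective, so it suffices to certify that $\Sigma = UU^\intercal$ is globally optimal for the SDP at any SOSP $U$.

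I would then invoke Corollary~8 of \citet{boumal2016non}, which states that if $U$ is a rank-deficient SOSP of the BM problem then $UU^\intercal$ is globally optimal for the SDP. The role of the choice $k = d+1$ is to guarantee the existence of such a rank-deficient SOSP: by \Cref{thm:multimarginal-gaussian}, the SDP has a unique optimum $\Sigma^*$ with $\rank(\Sigma^*) = d$, so factoring $\Sigma^* = VV^\intercal$ with $V \in \R^{dp \times d}$ and appending a zero column yields $U^* = [V \mid 0] \in \R^{dp \times (d+1)}$, which is rank-deficient and globally optimal (hence an SOSP). The main obstacle is showing that \emph{every} SOSP at $k = d+1$ is rank-deficient: a full-rank $U$ would satisfy $\rank(UU^\intercal) = d+1$, which is inconsistent with the unique rank-$d$ optimum in \Cref{thm:multimarginal-gaussian} provided we can rule out spurious (non-global) local minima of the BM landscape. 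This requires combining the BM first-order conditions, which furnish a Lagrange multiplier $\Lambda$ satisfying $(C - \Lambda)U = 0$, with the second-order condition and the uniqueness of $\Sigma^*$ to force $\rank(U) \leq d$. Once this is secured, the optimal multimarginal plan is $\Normal\big((m_1^\intercal, \ldots, m_p^\intercal)^\intercal,\, UU^\intercal\big)$.

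For the barycenter claim, I would apply \Cref{prop:multimarginal-wasserstein-barycenter}. Since $\supp(\rho) = \set{\mu_1, \ldots, \mu_p}$ is finite, $g(x_1, \ldots, x_p) = \int x_\nu\, d\rho(\nu) = \sum_{i=1}^p \rho(\set{\mu_i})\, x_i$ is a linear map, so pushing the optimal Gaussian plan $\pi = \Normal\big((m_1^\intercal, \ldots, m_p^\intercal)^\intercal, UU^\intercal\big)$ forward under $g$ produces another Gaussian. Writing $X \sim \pi$ via the reparameterization $X = (m_1^\intercal, \ldots, m_p^\intercal)^\intercal + UZ$ for $Z \sim \Normal(0, I_{d+1})$, a direct computation yields
\begin{align*}
    g(X) = \int m_\nu\, d\rho(\nu) + \Big(\int U_\nu\, d\rho(\nu)\Big) Z,
\end{align*}
whose distribution matches the claimed Gaussian barycenter. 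Once the rank-deficiency issue for SOSPs is settled, the remainder of the argument reduces to standard pushforward calculations for Gaussian measures.
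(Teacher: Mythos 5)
Your approach matches the paper's intended argument: identify the Burer--Monteiro problem as a non-convex reformulation of the SDP in \Cref{thm:multimarginal-gaussian}, invoke Corollary~8 of \citet{boumal2016non} (rank-deficient SOSPs are global optima), use the rank-$d$ structure of the unique SDP solution to justify the choice $k = d+1$, and handle the barycenter via the pushforward under the linear averaging map $g$ from \Cref{prop:multimarginal-wasserstein-barycenter}. The barycenter computation is correct: since $g$ is linear and the optimal plan is Gaussian, $g_\sharp\pi$ is Gaussian with the stated mean and covariance.

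You have, however, put your finger on a real gap, and you should not paper over it as something that ``remains to be secured.'' The theorem asserts that \emph{any} SOSP $U \in \R^{dp\times(d+1)}$ yields the global SDP optimum, but Corollary~8 of \citet{boumal2016non} only applies to \emph{rank-deficient} SOSPs. The paper's own Section~4.3 establishes only that a rank-deficient SOSP \emph{exists} at $k = d+1$ (by padding a factorization of the rank-$d$ optimum with a zero column) --- it never argues that every SOSP at $k = d+1$ is rank-deficient, and there is no separate proof in the appendix. Your proposed route to close the gap (``uniqueness of $\Sigma^*$ forces $\rank(U) \leq d$'') is circular as written: you would be using the conclusion (that the SOSP corresponds to the unique optimum) to establish the hypothesis (rank deficiency) that is needed to apply Corollary~8 in the first place. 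A priori, a full-rank SOSP at $k = d+1$ could be a spurious local minimizer with $\rank(UU^\intercal) = d+1$, and uniqueness of the global optimum says nothing about it. To actually close this, one would need a landscape argument specific to this SDP --- for example, using that the cost matrix $J = \mathbf{1}_p\mathbf{1}_p^\intercal \otimes I_d$ has rank $d$ and that the dual multiplier structure rules out full-rank stationary points, or appealing to strict complementarity. As stated, both your proposal and the paper leave this as an unproved assertion, and the theorem statement is stronger than the cited tools justify.
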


Though the original semidefinite program in \Cref{thm:multimarginal-gaussian} has $d^2 p^2$ optimization variables, the optimization problem in \Cref{thm:burer-monteiro-multimarginal} has only $(d + 1) dp$ variables, which is a significant reduction when $p$ is large. Now, finding second-order stationary points of the problem in \Cref{thm:burer-monteiro-multimarginal} can be done using standard optimization techniques. For example, second-order manifold optimization methods such as trust-region schemes almost always converge to second-order stationary points under mild conditions (e.g., see Theorem 6.20 of \citet{boumal2023introduction}). Therefore, \Cref{thm:burer-monteiro-multimarginal} immediately yields a method to efficiently compute the multimarginal OT between a large number of Gaussian measures. Furthermore, \Cref{thm:burer-monteiro-multimarginal} provides an alternative method for computing the 2-Wasserstein barycenter between Gaussian measures (equivalently, finding barycenters between positive definite matrices in the Bures-Wasserstein geometry).

\section{Experiments}

In this section, we illustrate the practical utility of our results in several applications. First, we propose the IGW distance as a new principled metric for comparing the representational similarity of large language models (LLMs). We also include numerical experiments which illustrate the effectiveness of our Burer-Monteiro approach to solving Gaussian multimarginal OT. All code for our experiments can be found at the following GitHub repository:
\begin{center}
    \texttt{\href{https://github.com/sanjitdp/gaussian-gw}{https://github.com/sanjitdp/gaussian-gw}}
\end{center}

\subsection{IGW Metric for Representation Similarity of Language Models} \label{sec:representational-similarity}

Consider a set $\mc{S}\subseteq \mc{V}^k$ of sentences (over a vocabulary $\mc{V}$ and context length $k$) and let $\varphi : \mc{V}^k \to \R^d$ represent the forward pass of a black-box LLM from the input to an internal representation space. Comparing the representations $\varphi(\mc{S})$ of different LLMs $\varphi$ has seen notable recent interest \citep{morcos2018insights, kornblith2019similarity, csiszarik2021similarity}. Existing approaches first embed the dataset into $\R^d$ using each model and then compare the resulting point clouds using a similarity metric. The typical setting involves comparing a large model $\varphi$ to many smaller distillation models $\varphi_1,\ldots, \varphi_p$ on a dataset $\mc{D} = \set{d_1, \dots, d_n} \subseteq \mc{S}$, to find the small model whose representations best align with those of $\varphi$. This problem can be viewed as a question of comparing the distributions $\frac{1}{n} \sum_{i=1}^n \delta_{\varphi(d_i)}$ and $\frac{1}{n} \sum_{i=1}^n \delta_{\varphi_j(d_i)}$, for $j \in [p]$. 

\subsubsection{Existing and Proposed Similarity Metrics}
A common choice of similarity metric is the centered kernel alignment (CKA) introduced by \citet{kornblith2019similarity}. Assuming that $(X_i, Y_i) \sim \pi$ are i.i.d.\  samples from some joint distribution $\pi \in \mc{P}_2(\R^{d_1} \times \R^{d_2})$, the empirical \emph{centered kernel alignment (CKA)} between $X$ and $Y$ is
\begin{align*}
    \operatorname{CKA}(X, Y) \coloneq \frac{\norm{Y^\intercal X}_\mr{F}^2}{\norm{X^\intercal X}_\mr{F}\, \norm{Y^\intercal Y}_\mr{F}}.
\end{align*}
In our setting, $X$ and $Y$ are taken as the embeddings of the dataset $\mc{D}$ under two different models. However, the CKA metric was designed as a normalized version of $\norm{Y^\intercal X}_F^2$, which measures the \emph{statistical correlation} between two random variables. In particular, even if the CKA is 1, it is not necessarily true that one set of embeddings can be orthogonally transformed into the other one; hence, pairwise cosine similarities may not be preserved and the learned representations of the two models may be different. For the same reason, any metric to compare learned representations that is translation-invariant, such as the commonly-used CCA family of metrics \citep{morcos2018insights}, cannot compare text embedding distributions well.

To resolve these issues, we propose using the IGW distance as a principled method to compare the representational similarity of LLMs. The IGW distance is invariant to orthogonal transformations, so it respects the relative geometry of the distributions. When the IGW distance between two measures is zero, they must be separated by an orthogonal transformation, meaning that pairwise cosine similarities are all preserved. Furthermore, the IGW distance is compatible for the task of comparing distillations of LLMs as it can also handle distributions in different dimensions. 

\subsubsection{Gaussian Modeling Assumption and Verification}

To employ our framework, we model text embedding distributions as Gaussian, which is a common practice in machine learning; see \citet{lv2024wasserstein,mroueh2019wasserstein} for instances closely related to our work, among many others.

To justify this modeling assumption, we first verify that the empirical distributions of text embeddings from LLMs on common datasets can be indeed approximated by Gaussians. For reference, we consider the \ci{bert-base-uncased} teacher model and 23 of its distilled variants, described in \citet{turc2019well}. The distillations are trained using a variety of numbers of transformer layers $L \in \set{2, 4, 6, 8, 10, 12}$, and hidden embedding dimensions $H \in \set{128, 256, 512, 768}$. We display the number of parameters, in millions, of each model in \Cref{tab:model-params}.

\begin{table}[!t]
    \centering
    \vspace{0.5em}
    \begin{tabular}{|c|c|c|c|c|}
        \hline
                 & $H = 128$ & $H = 256$ & $H = 512$ & $H = 768$ \\
        \hline
        $L = 2$  & 4.4       & 9.7       & 22.8      & 39.2      \\
        \hline
        $L = 4$  & 4.8       & 11.3      & 29.1      & 53.4      \\
        \hline
        $L = 6$  & 5.2       & 12.8      & 35.4      & 67.5      \\
        \hline
        $L = 8$  & 5.6       & 14.4      & 41.7      & 81.7      \\
        \hline
        $L = 10$ & 6.0       & 16.0      & 48.0      & 95.9      \\
        \hline
        $L = 12$ & 6.4       & 17.6      & 54.3      & 110.1     \\
        \hline
    \end{tabular}
    \caption{Number of parameters of the \ci{bert} model distillations of \citet{turc2019well} (in millions), as a function of the number of transformer layers $L$ and hidden embedding dimension $H$. The teacher model \ci{bert-base-uncased} has $L = 12$ and $H = 768$, with 110.1 million parameters.}
    \label{tab:model-params}
\end{table}

We consider datasets of 1000 samples each from: (i) the \ci{amazon_polarity} dataset \citep{muennighoff2022mteb}, which contains Amazon reviews, and (ii) the \ci{ag_news} dataset \citep{zhang2015character}, which contains news articles from over 2000 different sources. First, we see that the learned representations of the models are approximately Gaussian. In \Cref{fig:amazon_polarity_embeddings-ag_news_embeddings}, we plot the first two principal components of the embeddings of the \ci{bert-base-uncased} model on both datasets, along with a Gaussian fit using the empirical mean and covariance of the embeddings. Note that computing the 2-Wasserstein distance between the full high-dimensional distributions is statistically difficult due to the limited number of samples. However, the 2-Wasserstein distance between the projection of the embeddings onto their first two principal components and the associated Gaussian measure is 0.458 for \ci{amazon_polarity} and 0.655 for \ci{ag_news}; the fact that these are small implies that the embedding distributions are approximately close to Gaussian.

\imagesbs{0.45}{amazon_polarity_embeddings}{\ci{amazon_polarity} embeddings}{ag_news_embeddings}{\ci{ag_news} embeddings}{First two principal components of the embeddings produced by the \ci{bert-base-uncased} model on the (a) \ci{amazon_polarity} and (b) \ci{ag_news} datasets (blue), along with contour lines from a Gaussian fit (red). The embeddings are approximately Gaussian.}

\subsubsection{LLM Representation Comparison via IGW}

We now turn to compute the IGW distances between the  \ci{bert-base-uncased} teacher model and its 23 distillations from \citet{turc2019well}. To employ our results for Gaussian IGW, we fit a Gaussian to the embeddings of each model on both datasets. For the Gaussian parameter estimation, we add a regularization of $10^{-6}\, I$ to the covariance for numerical stability. Then, we compute the analytic upper and lower bounds on the IGW distance from \Cref{thm:igw-gaussian-bound}. Furthermore, we tighten the upper bound by numerically optimizing over the Stiefel manifold $\mr{St}(d, d)$ using RGD \citep{absil2008optimization} to solve the optimization problem in \Cref{thm:igw-gaussian-distance}. Recall from \Cref{cor:igw-general-bound} that the validity of the upper bound does not rely on the Gaussianity assumption.

We run RGD with a simple adaptive step size for 50 iterations or until the norm of the Riemannian gradient is less than $10^{-2}$. Estimating each IGW distance, including upper and lower bounds, takes under 1 second on our MacBook using an M1 Pro chip. We show the results in \Cref{fig:igw_distance_amazon_polarity-igw_distance_ag_news}.

\imagesbs{0.45}{igw_distance_amazon_polarity}{\ci{amazon_polarity} IGW distances}{igw_distance_ag_news}{\ci{ag_news} IGW distances}{Upper and lower bounds on the IGW distance between the \ci{bert-base-uncased} model and its distillations on the (a) \ci{amazon_polarity} and (b) \ci{ag_news} datasets. The analytic upper bound is tightened by numerically optimizing over the Stiefel manifold. We see that some of the smaller distillations preserve the embedding distribution almost as well as the larger ones. The resulting upper and lower bounds are usually tight enough to reasonably compare the models.}

Although we motivated the IGW distance as a principled alternative to CKA, we observe that the two metrics are correlated in practice. In \Cref{fig:cka_vs_igw_amazon_polarity-cka_vs_igw_ag_news}, we plot the CKA between the \ci{bert-base-uncased} model and its distillations on both datasets, and compare it to the upper bound on the IGW distance obtained by RGD. We see that the two metrics are generally negatively correlated, suggesting that the IGW distance will work well in settings where CKA is currently used.

\imagesbs{0.45}{cka_vs_igw_amazon_polarity}{\ci{amazon_polarity}}{cka_vs_igw_ag_news}{\ci{ag_news}}{CKA between the \ci{bert-base-uncased} model and its distillations on the (a) \ci{amazon_polarity} and (b) \ci{ag_news} datasets, compared to the upper bound on the IGW distance obtained by RGD. The two metrics are generally negatively correlated, suggesting that the IGW distance will work well as a theoretically grounded alternative to CKA.}

Note that the problem of representational similarity is a question about preservation of the \emph{overall embedding distributions}, and not about the performance of the models on a specific task. For instance, it may be possible that a distillation of an LLM performs well on a classification task, but the learned representations are very different from those of the original model. Therefore, we do not compare the IGW distance to task-specific metrics such as accuracy or F1 score; this is analogous to the original treatment of CKA in \citet{kornblith2019similarity}.

\subsection{Heterogeneous User Clustering}

Clustering based on text data, e.g., users of a social media platform based on their text posts, is ubiquitous across applications. A common approach is to embed the text into a Euclidean space using an LLM, and then represent each user as the mean embedding of their posts (sometimes called mean-pooling) \citep{chen2018enhancing}; see also \citep{nguyen2025react} for a recent application of mean-pooling to 3-D object image embedding. Once each user has been mean-pooled into a single embedding, one can cluster them using standard Euclidean algorithms, such as $k$-means or density-based methods. While popular in practice, such approaches forfeit valuable distributional information (e.g., the user's embedding covariance).

Next, we show that the IGW distance enables a novel clustering algorithm that can leverage such information. We consider a synthetic user clustering task based on their text posts and simulate differences in interests across users by drawing posts from three different categories. We construct a population of 120 users arranged into three interest groups: science, news, and entertainment. Each group contains 40 users, and each user is represented by 400 non-overlapping posts drawn uniformly from \ci{scientific_papers/arxiv} \citep{cohan2018discourse}, \ci{ag_news} \citep{zhang2015character}, and \ci{imdb} \citep{maas2011learning}, respectively.

We embed each user's posts using the \ci{all-MiniLM-L6-v2} sentence embedding model from Hugging Face \citep{wang2020minilm}. Discarding the mean information, we then model each user as $\Normal(0, \Sigma_i)$, where $\Sigma_i$ is the empirical covariance matrix of their embeddings. This modeling choice is made to highlight the utility of IGW-based clustering compared to mean-pooling, as the latter cannot leverage covariance information. We compute the IGW distance between each pair of users using \Cref{thm:igw-gaussian-distance}, and then perform $k$-means++ clustering \citep{arthur2006k} on the resulting distance matrix using the barycenter formula in \Cref{prop:igw-gaussian-barycenter} to compute cluster centers. The results are shown in \Cref{fig:igw_mds_embedding-kmeans_clusters}. We see that the IGW distance is almost able to recover the original clusters using only covariance information, suggesting that current approaches based on mean-pooling are missing valuable information contained in the full distribution of each user's embeddings. One simple way to incorporate covariance information into existing methods is to add (a positive multiple of) the IGW distance between the centered Gaussians to the mean-pooled distances between users before clustering.

\imagesbs{0.45}{igw_mds_embedding}{Ground truth}{kmeans_clusters}{$k$-means clusters}{(a) 2-D multi-dimensional scaling (MDS) embedding of the IGW distance matrix between users, colored by their ground truth interests. (b) $k$-means++ clustering of the IGW distance matrix, using the barycenter formula in \Cref{prop:igw-gaussian-barycenter} to compute cluster centers. The IGW distance is able to almost recover the original clusters, using only covariance information.}

The IGW clustering algorithm is flexible in allowing for input measures over Euclidean space of different dimensions. To illustrate this, we repeat the above experiment, but embed each user via one of two LLMs with different embedding dimensions based on a random (fair) coin flip: either \ci{all-MiniLM-L6-v2} ($d = 384$) as before, or \ci{all-mpnet-base-v2} ($d = 768$), which was introduced in \citet{song2020mpnet}. Such a situation could arise in practice if some users were using a previous version of an application with an older embedding model. While the different embedding dimensions render mean-pooling inapplicable in this setting, the IGW distance still provides a viable path to clustering. We again model each user as a centered Gaussian with covariance equal to the empirical covariance of their text embeddings. \Cref{fig:heterogeneous-experiment} shows the 2-D MDS embedding of the IGW distance matrix (with colors showing the embedding models for each user), the ground truth interests of the users, and the result of the $k$-means++ algorithm applied to the IGW distance. Again, the obtained clusters match the original ones almost perfectly.

\begin{figure}[htbp]
    \centering
    \subfloat[\centering Embedding models]{\includegraphics[scale=0.35]{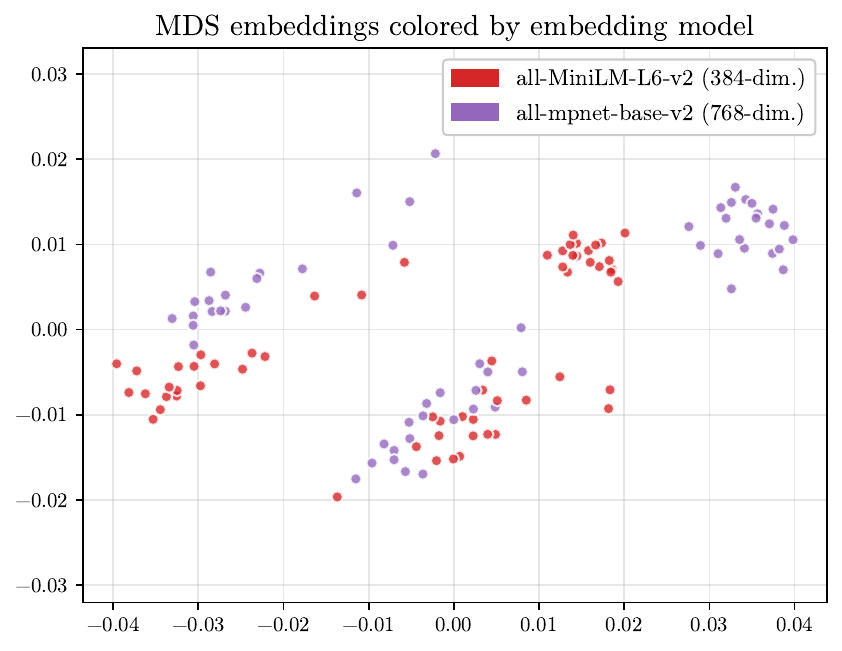} \label{fig:heterogeneous_mds_embeddings}}
    \quad
    \subfloat[\centering Ground truth]{\includegraphics[scale=0.35]{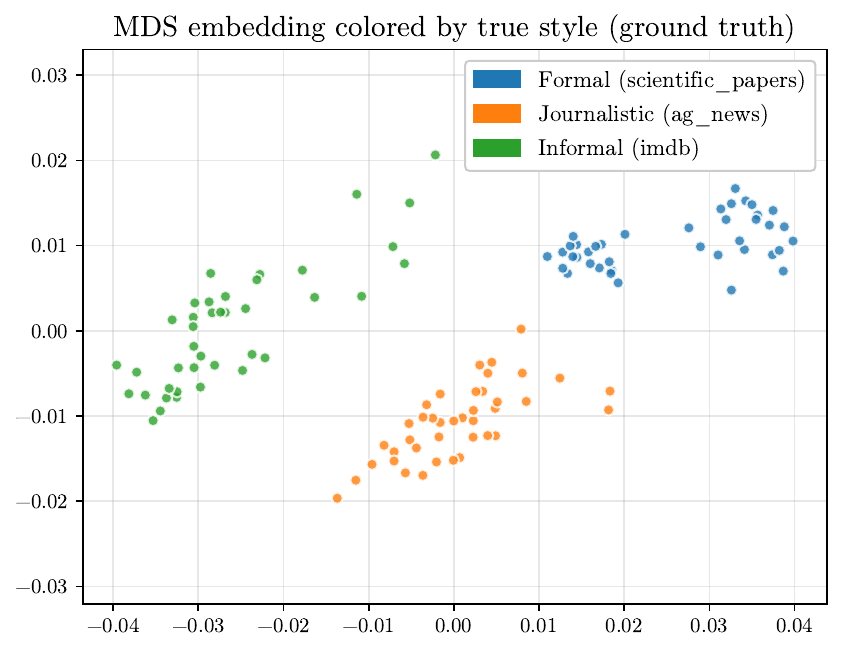} \label{fig:heterogeneous_true_styles}}
    \quad
    \subfloat[\centering $k$-means clusters]{\includegraphics[scale=0.35]{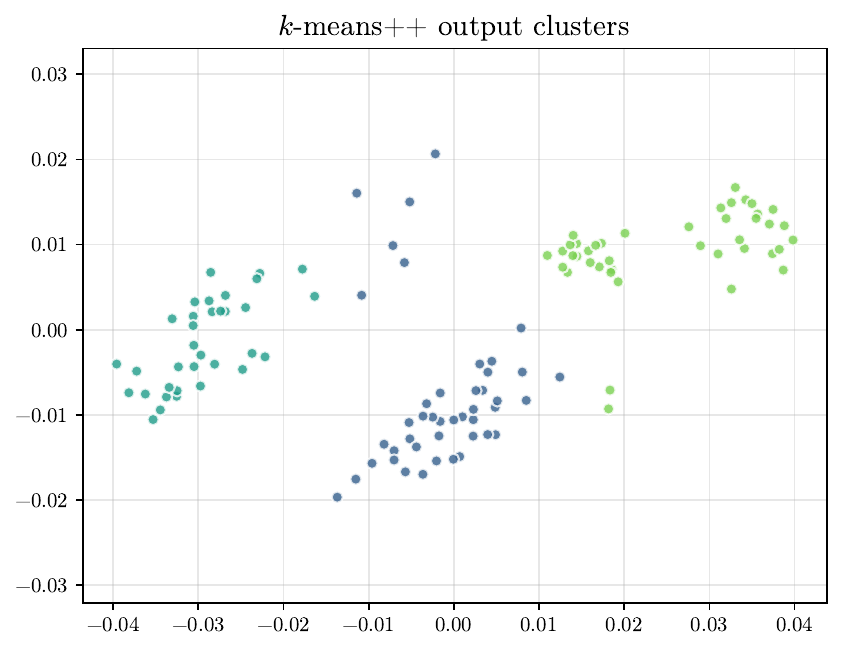} \label{fig:heterogeneous_kmeans_clusters}}
    \vspace*{-.5em}
    \caption{\textit{(a) 2-D MDS embedding of the IGW distance matrix between users, colored by the embedding model used for each user. (b) The same MDS embedding, colored by the ground truth interests of each user. (c) $k$-means++ clustering of the IGW distance matrix between heterogeneous users, using the barycenter formula in \Cref{prop:igw-gaussian-barycenter} to compute cluster centers. The IGW distance is able to almost recover the original clusters, even only using heterogeneous covariance information.}}
    \label{fig:heterogeneous-experiment}
    \vspace*{-.5em}
\end{figure}

\subsection{Efficient Computation of Multimarginal OT}

In this section, we show that the Burer-Monteiro approach of \Cref{thm:burer-monteiro-multimarginal} can be used to solve the multimarginal OT problem between Gaussian measures efficiently. We consider $p$ Gaussian measures $\mu_i = \Normal(0, \Sigma_i)$ on $\R^3$, where the $\Sigma_i$ are generated by sampling $A_i \in \R^{3 \times 3}$ with i.i.d.\  standard normal entries and setting $\Sigma_i = A_i A_i^\intercal + 0.1\, I$. We solve the semidefinite program in \Cref{thm:multimarginal-gaussian} using the Burer-Monteiro approach of \Cref{thm:burer-monteiro-multimarginal} with $k = 11$, using Riemannian trust-region methods \citep{absil2008optimization} to find second-order stationary points. We compare our results to the solution of the semidefinite program using the \ci{cvxpy} package \citep{diamond2016cvxpy}, which uses interior-point methods to solve semidefinite programs. All experiments were run on a MacBook with an M1 Pro processor. We display the results in \Cref{fig:computation-times} and \Cref{fig:objective-comparison}. To contextualize the speedup and improvement in numerical stability of the Burer-Monteiro approach as compared to the SDP approach, we plot the number of optimization variables in \Cref{fig:variable-counts}.

\begin{figure}[htbp]
    \centering
    \subfloat[\centering Computation times]{\includegraphics[scale=0.35]{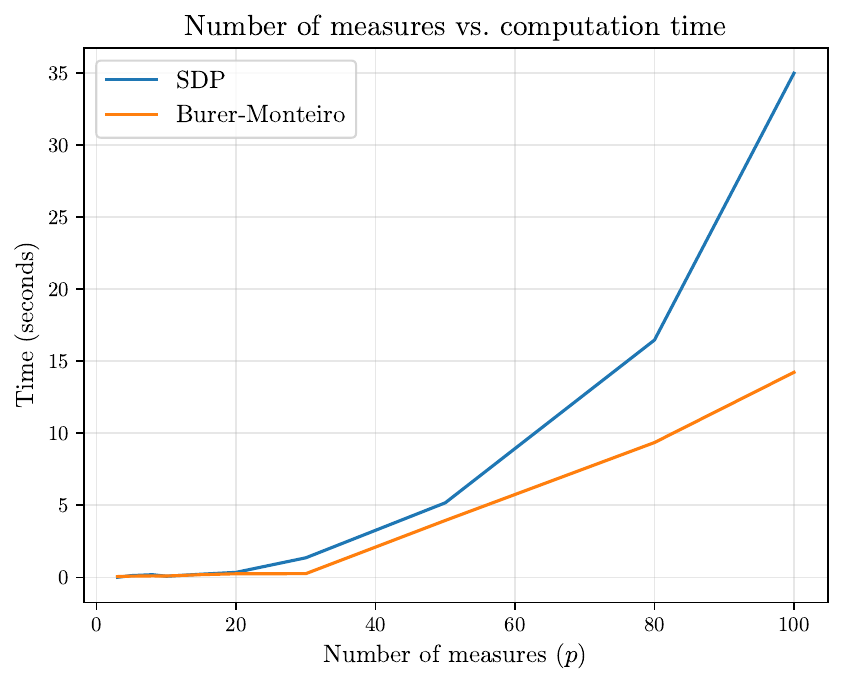} \label{fig:computation-times}}
    \quad
    \subfloat[\centering Objective comparison]{\includegraphics[scale=0.35]{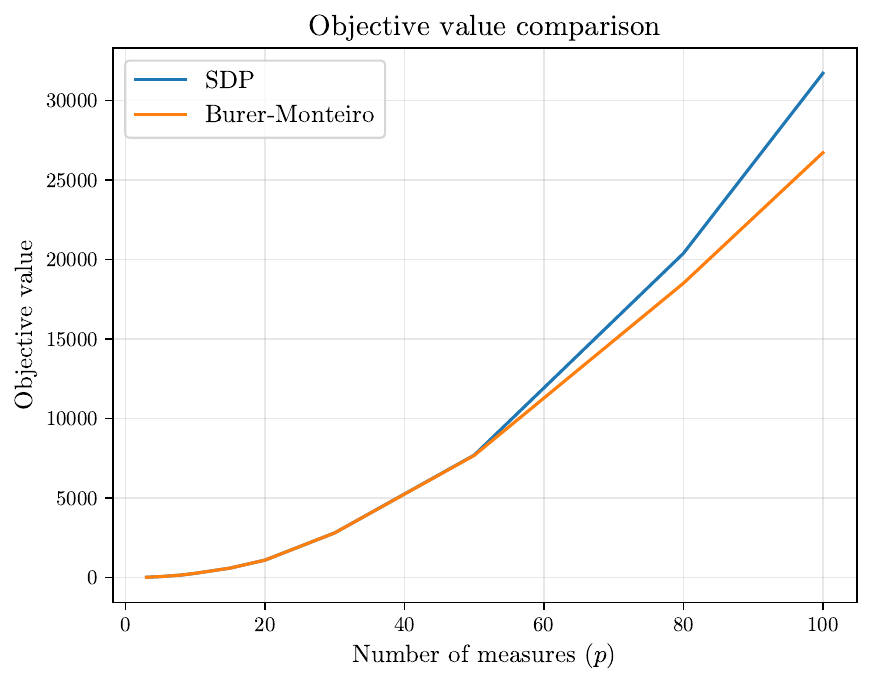} \label{fig:objective-comparison}}
    \quad
    \subfloat[\centering Optimization variables]{\includegraphics[scale=0.35]{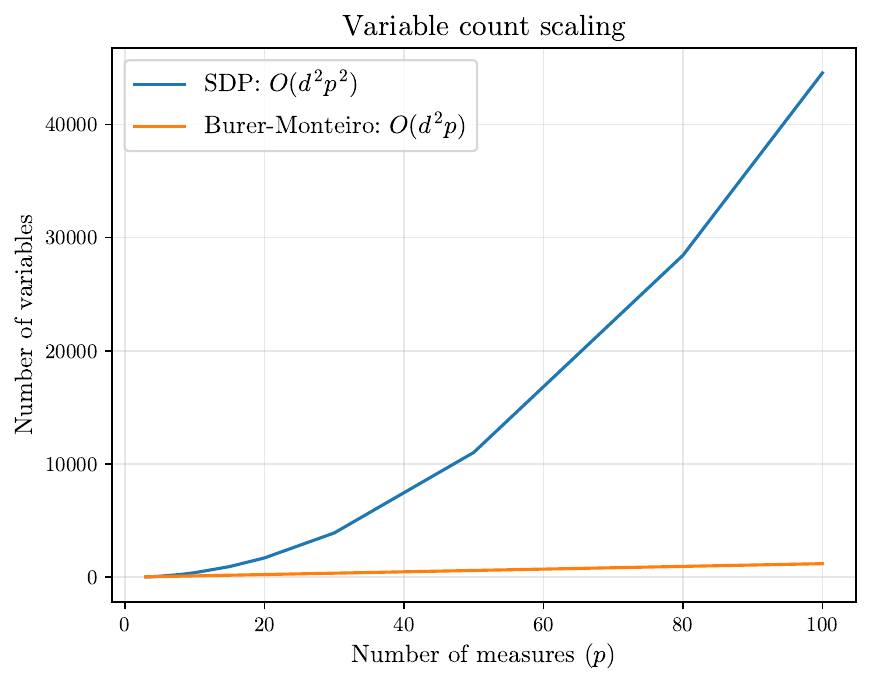} \label{fig:variable-counts}}
    \vspace*{-.5em}
    \caption{\textit{Comparison of the Burer-Monteiro and interior-point approaches for solving the multimarginal OT problem between $p$ Gaussian measures on $\R^3$: (a) computation times, (b) objective values, (c) number of optimization variables. The Burer-Monteiro approach is significantly faster than interior-point methods for large $p$, and the objective values are more numerically stable; note that the Burer-Monteiro approach is able to obtain a slightly better objective value when $p$ is large. The Burer-Monteiro approach has significantly fewer optimization variables ($O(d^2 p)$ compared to $O(d^2 p^2)$) when $p$ is large, explaining its speedup and improved numerical stability.}}
    \label{fig:multimarginal-experiment}
    \vspace*{-.5em}
\end{figure}

Usually, one computes the 2-Wasserstein barycenter by solving the non-linear matrix equation in \eqref{eq:w2-barycenter} via fixed-point iteration. Alternatively, we can compute it by solving the semidefinite program in \Cref{thm:multimarginal-gaussian} and taking a weighted average of the coordinates in the resulting coupling; see \Cref{prop:multimarginal-wasserstein-barycenter}. Even so, we observe in simulations that the fixed-point iteration is faster and more numerically stable for computing the barycenter than solving the full multimarginal problem.

\section{Conclusion} \label{sec:conclusion}

This work studies quadratic OT and IGW alignment problems between Gaussian measures in several settings. We show that the IGW distance between uncentered Gaussians can be written as an optimization problem over unitary transformations and use this formulation to characterize the Gaussian IGW distance and barycenter in closed-form. We also derive upper and lower bounds on the IGW distance between uncentered Gaussians which are separated only by a Cauchy-Schwarz gap. For the OT setting, we show that the multimarginal problem with pairwise quadratic cost between Gaussian measures is equivalent to a rank-deficient semidefinite program. The rank-deficiency constraint enables employing Burer-Monteiro factorization for efficient computation of the multimaginal OT cost and optimal coupling. Building on our theory, we explore applications to tasks concerning LLM representation comparisons and clustering of text embeddings. The resulting methods are shown to be flexible, cheap to compute, and perform well across all examined scenarios.

Finally, we discuss potential future directions for this work. Although \Cref{thm:igw-gaussian-distance} formulates the IGW distance between Gaussian measures as a nonconvex Riemannian optimization problem, it would be interesting to analytically study whether Riemannian gradient flow always converges to a \emph{global} minimum for this problem. Empirically, we observe that when pairwise IGW distances between many Gaussian measures are computed via RGD, the resulting distances satisfy the triangle inequality. Furthermore, the distances obtained from RGD agree in experiments with all of our known closed-form cases, suggesting that RGD may indeed be converging to a global minimum. Another interesting direction is to study the intrinsic Riemannian geometry induced by the GBW distance on the space of nondegenerate covariance operators.

\printbibliography


\begin{appendices}
    \crefalias{section}{appendix}
    \crefalias{subsection}{appendix}

    \section{Proofs of Main Results} \label{app:proofs}

    In this section, we collect proofs of our main results.
    
    \subsection{Proof of \Cref{thm:igw-gaussian-distance}}
    
    We begin with a technical lemma that will be useful in the proof of \Cref{thm:igw-gaussian-distance}.
    
    \begin{lemma}[Cyclic property of trace] \label{lem:gaussian-inner-product}
        Suppose that $\mu_i \in \mc{P}_2(\mc{H})$ for $i \in [2]$ and that $\pi \in \Pi(\mu_1, \mu_2)$. Then, if $((X_1, X_2), (X_1^\prime, X_2^\prime)) \sim \pi \otimes \pi$, we have
        \begin{align*}
            \E[\inner{X_1, X_1^\prime} \inner{X_2, X_2^\prime}] = \tr((\Sigma_\pi^\times)^* \Sigma_\pi^\times).
        \end{align*}
    \end{lemma}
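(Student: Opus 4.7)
The plan is to establish the identity by expanding each inner product in a fixed total orthonormal set $\set{e_k}_{k\in\N}$ of $\mc{H}$, applying Fubini's theorem to interchange expectations with summations, and then exploiting the product structure of $\pi\otimes\pi$ (i.e.\ the independence of $(X_1,X_2)$ and $(X_1^\prime,X_2^\prime)$) to factor the resulting expectations.

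First, I would establish the integrability needed to justify all interchanges. Since $\mu_1,\mu_2\in\mc{P}_2(\mc{H})$ and $\pi\in\Pi(\mu_1,\mu_2)$, two applications of Cauchy--Schwarz give
\begin{align*}
    \E\abs{\inner{X_1,X_1^\prime}\inner{X_2,X_2^\prime}} \leq \big(\E\norm{X_1}^2\,\E\norm{X_1^\prime}^2\,\E\norm{X_2}^2\,\E\norm{X_2^\prime}^2\big)^{1/2}<\infty,
\end{align*}
so the left-hand side is well defined and Fubini/Tonelli applies to all double sums appearing below. This same bound shows that $\Sigma_\pi^\times\in\mf{S}_2(\mc{H})$, making the right-hand trace meaningful.

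Next, I would expand $\inner{X_i,X_i^\prime}=\sum_{k}\inner{X_i,e_k}\inner{X_i^\prime,e_k}$ for $i\in[2]$ and multiply the two Parseval expansions to obtain
\begin{align*}
    \inner{X_1,X_1^\prime}\inner{X_2,X_2^\prime} = \sum_{j,k\in\N}\, \inner{X_1,e_k}\inner{X_2,e_j}\,\inner{X_1^\prime,e_k}\inner{X_2^\prime,e_j}.
\end{align*}
Taking expectation, using the independence of $(X_1,X_2)$ and $(X_1^\prime,X_2^\prime)$ to factor, and then applying the definition of $\Sigma_\pi^\times$, namely $\inner{e_k,\Sigma_\pi^\times e_j}=\E[\inner{X_1,e_k}\inner{X_2,e_j}]$, yields
\begin{align*}
    \E[\inner{X_1,X_1^\prime}\inner{X_2,X_2^\prime}] = \sum_{j,k\in\N}\inner{e_k,\Sigma_\pi^\times e_j}^2 = \norm{\Sigma_\pi^\times}_{\HS}^2 = \tr\bigl((\Sigma_\pi^\times)^*\Sigma_\pi^\times\bigr),
\end{align*}
where the last two equalities are the standard matrix-coefficient representation of the Hilbert--Schmidt norm relative to $\set{e_k}_{k\in\N}$.

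The main potential obstacle is the rigorous justification of the Fubini interchange, since we are exchanging a double infinite sum over $(j,k)$ with an expectation; the Cauchy--Schwarz bound above supplies the required absolute integrability. A secondary point to be careful about is verifying that the identification of $\E[\inner{X_1,e_k}\inner{X_2,e_j}]$ with the matrix element $\inner{e_k,\Sigma_\pi^\times e_j}$ really follows from the defining property of the cross-covariance operator stated in the notation section, but this is immediate by taking $x=e_k$ and $y=e_j$ in that definition.
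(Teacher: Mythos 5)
The paper does not actually supply a proof of this lemma; it simply states that the argument is standard (a cyclic-trace computation in Hilbert space) and cites \cite{simon2005trace}. Your proof fills in precisely that standard argument and it is correct: expand both inner products in Parseval form, use the product structure of $\pi\otimes\pi$ to factor the expectation, and recognize the resulting double sum of matrix coefficients as $\norm{\Sigma_\pi^\times}_{\HS}^2 = \tr((\Sigma_\pi^\times)^*\Sigma_\pi^\times)$. The identification $\inner{e_k,\Sigma_\pi^\times e_j}=\E[\inner{X_1,e_k}\inner{X_2,e_j}]$ follows directly from the defining property of $\Sigma_\pi^\times$ in the notation section, exactly as you say.

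One small point of rigor worth tightening: the bound you quote controls $\E\abs{\inner{X_1,X_1'}\inner{X_2,X_2'}}$, i.e.\ the expectation of the absolute value of the \emph{sum}, whereas Fubini/Tonelli for the double series requires $\E\bigl[\sum_{j,k}\abs{\inner{X_1,e_k}\inner{X_1',e_k}\inner{X_2,e_j}\inner{X_2',e_j}}\bigr]<\infty$. These are not literally the same quantity. The fix is immediate, though: the pointwise Cauchy--Schwarz step that yields $\abs{\inner{X_1,X_1'}}\le\norm{X_1}\norm{X_1'}$ is in fact obtained from $\sum_k\abs{\inner{X_1,e_k}\inner{X_1',e_k}}\le\norm{X_1}\norm{X_1'}$, so one actually has $\sum_{j,k}\abs{\cdot}\le\norm{X_1}\norm{X_1'}\norm{X_2}\norm{X_2'}$ pointwise, and then your finite-moment bound applies to this dominating quantity. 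Spelling this intermediate inequality out would close the gap completely; as written, the justification points to the right estimate but via the weaker conclusion.
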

    \begin{proof}
        The proof of this lemma is relatively standard and follows that of the usual cyclic property of trace in a Hilbert space, so we omit it here (e.g., see \cite{simon2005trace}).
    \end{proof}
    
    \begin{proof}[Proof of \Cref{thm:igw-gaussian-distance}]
        To find the IGW distance, we minimize over couplings $\pi \in \Pi(\mu_1, \mu_2)$. Define $\bar{\mu}_i = (\cdot - m_i)_\sharp\, \mu_i$ for $i \in [2]$. Then, it is equivalent to optimize over centered couplings $\bar{\pi} = ((\cdot + m_1),\, (\cdot + m_2))_\sharp\, \pi \in \Pi(\bar{\mu}_1, \bar{\mu}_2)$, which immediately yield couplings $\pi \in \Pi(\mu_1, \mu_2)$. By orthogonal invariance of the IGW distance \citep{zhang2024gradient}, if we define $\tilde{\mu}_i \coloneq (Q_i^*)_\sharp\, \bar{\mu}_i$ for $i \in [2]$, then $\IGW(\tilde{\mu}_1, \tilde{\mu}_2) = \IGW(\bar{\mu}_i, \bar{\mu}_2)$. Also, any coupling achieves the same IGW cost under orthogonal transformations of either marginal, so letting $\tilde{\pi}$ denote the optimal coupling for IGW between $\tilde{\mu}_1$ and $\tilde{\mu}_2$, the optimal coupling between $\bar{\mu_1}$ and $\bar{\mu}_2$ will be $\bar{\pi} = (Q_1, Q_2)_\sharp\, \tilde{\pi}$.
    
        Note that now $\tilde{\mu}_i = \Normal(0, \Lambda_i)$ for $i \in [2]$, and we may assume w.l.o.g. that $\range(\Lambda_2) \subseteq \range(\Lambda_1)$. Defining $\tilde{m}_i = Q_i^* m_i$ for $i \in [2]$, we rewrite the objective function as
        \begin{align*}
             & \int (\inner{x_1 + \tilde{m}_1, x_1^\prime + \tilde{m}_1} - \inner{x_2 + \tilde{m}_2, x_2^\prime + \tilde{m}_2})^2\, d(\tilde{\pi} \times \tilde{\pi})(x_1, x_2, x_1^\prime, x_2^\prime)                                                                                                                \\
             & \quad = \biggl( \underbrace{\int \inner{x_1 + \tilde{m}_1, x_1^\prime + \tilde{m}_1}^2\, d(\tilde{\mu}_1 \times \tilde{\mu}_1)(x_1, x_1^\prime)}_{I_1} + \underbrace{\int \inner{x_2 + \tilde{m}_2, x_2^\prime + \tilde{m}_2}^2\, d(\tilde{\mu}_2 \times \tilde{\mu}_2)(x_2, x_2^\prime)}_{I_2} \biggr) \\
             & \quad \quad - 2 \underbrace{\int \inner{x_1 + \tilde{m}_1, x_1^\prime + \tilde{m}_1}\, \inner{x_2 + \tilde{m}_2, x_2^\prime + \tilde{m}_2}\, d(\tilde{\pi} \times \tilde{\pi})(x_1, x_2, x_1^\prime, x_2^\prime)}_{I_3}.
        \end{align*}
        Letting $(X_1, X_1^\prime) \sim \tilde{\mu}_1 \times \tilde{\mu}_1$, we can compute the first integral explicitly as
        \begin{align*}
            I_1 & = \int \inner{x_1 + \tilde{m}_1, x_1^\prime + \tilde{m}_1}^2\, d(\tilde{\mu}_1 \times \tilde{\mu}_1)(x_1, x_1^\prime)            \\
                & = \E[\inner{X_1 + \tilde{m}_1, X_1^\prime + \tilde{m}_1}^2]                                                                      \\
                & = \E[\inner{X_1, X_1^\prime}^2] + \E[\inner{X_1, \tilde{m}_1}^2] + \E[\inner{X_1^\prime, \tilde{m}_1}^2] + \norm{\tilde{m}_1}^4  \\
                & \quad + 2\, \E[\inner{X_1, \tilde{m}_1} \inner{X_1^\prime, \tilde{m}_1}] + 2 \norm{\tilde{m}_1}^2\, \E[\inner{X_1, X_1^\prime}].
        \end{align*}
        The last two terms vanish because $\tilde{\mu}_1$ is centered and $\E[\inner{X_1, X_1^\prime}] = \E[\E[\inner{X_1, X_1^\prime} \given X_1^\prime]] = 0$ by the tower property of conditional expectation. By definition, the second and third terms are equal to $\inner{\tilde{m}_1, \Lambda_1 \tilde{m}_1}$. Using \Cref{lem:gaussian-inner-product} with $\pi = \tilde{\mu}_1 \times \tilde{\mu}_1$, the first term is $\E[\inner{X_1, X_1^\prime}^2] = \tr(\Lambda_1^2)$. Putting the pieces together, we have shown that
        \begin{align*}
            I_1 = \tr(\Lambda_1^2) + 2 \inner{\tilde{m}_1, \Lambda_1 \tilde{m}_1} + \norm{\tilde{m}_1}^4.
        \end{align*}
        Analogously, the second integral is
        \begin{align*}
            I_2 = \tr(\Lambda_2^2) + 2 \inner{\tilde{m}_2, \Lambda_2 \tilde{m}_2} + \norm{\tilde{m}_2}^4.
        \end{align*}
        Finally, let $C$ denote the cross-covariance operator between $\tilde{\mu}_1$ and $\tilde{\mu}_2$ with respect to $\tilde{\pi}$. Drawing $((X_1, X_2), (X_1^\prime, X_2^\prime)) \sim \tilde{\pi} \times \tilde{\pi}$, we write
        \begin{align*}
            I_3
             & = \int \inner{x_1 + \tilde{m}_1, x_1^\prime + \tilde{m}_1}\, \inner{x_2 + \tilde{m}_2, x_2^\prime + \tilde{m}_2}\, d(\tilde{\pi} \times \tilde{\pi})(x_1, x_2, x_1^\prime, x_2^\prime) \\
             & = \E[\inner{X_1 + \tilde{m}_1, X_1^\prime + \tilde{m}_1}\, \inner{X_2 + \tilde{m}_2, X_2^\prime + \tilde{m}_2}]                                                                        \\
             & = \E[\inner{X_1, X_2}\, \inner{X_1^\prime, X_2^\prime}] + \E[\inner{X_1, \tilde{m}_1}\, \inner{X_2, X_2^\prime}] + \E[\inner{X_1, X_1^\prime}\, \inner{X_2, \tilde{m}_2}]              \\
             & \quad  + \E[\inner{X_1, X_1^\prime}\, \inner{\tilde{m}_2, X_2^\prime + \tilde{m}_2}] + \E[\inner{\tilde{m}_1, X_1^\prime + \tilde{m}_1}\, \inner{X_2, X_2^\prime}]                     \\
             & \quad + \E[\inner{X_1, \tilde{m}_1}\, \inner{X_2, \tilde{m}_2}] + \E[\inner{\tilde{m}_1, X_1^\prime}\, \inner{\tilde{m}_2, X_2^\prime}] + \norm{\tilde{m}_1}^2\, \norm{\tilde{m}_2}^2.
        \end{align*}
        The second term vanishes because
        \begin{align*}
            \E[\inner{X_1, \tilde{m}_1}\, \inner{X_2, X_2^\prime}]
             & = \E[\E[\inner{X_1, \tilde{m}_1}\, \inner{X_2, X_2^\prime} \given X_1, X_2]] \\
             & = \E[\inner{X_1, \tilde{m}_1}\, \E[\inner{X_2, X_2^\prime} \given X_1, X_2]]  = 0
        \end{align*}
        by the tower property of conditional expectation; the third, fourth, and fifth terms vanish by a similar argument. The sixth and seventh terms are $\E[\inner{X_1, \tilde{m}_1}\, \inner{X_2, \tilde{m}_2}] = \inner{\tilde{m}_1, \Sigma_{\tilde{\pi}}^\times \tilde{m}_2}$. By \Cref{lem:gaussian-inner-product}, the first term is $\E[\inner{X_1, X_2}\, \inner{X_1^\prime, X_2^\prime}] = \tr((\Sigma_{\tilde{\pi}}^\times)^* \Sigma_{\tilde{\pi}}^\times)$. Using this, we conclude that
        \begin{align*}
            I_3 = \tr((\Sigma_{\tilde{\pi}}^\times)^* \Sigma_{\tilde{\pi}}^\times) + 2 \inner{\tilde{m}_1, \Sigma_{\tilde{\pi}}^\times \tilde{m}_2} + \norm{\tilde{m}_1}^2\, \norm{\tilde{m}_2}^2.
        \end{align*}
        Now, our optimization problem becomes
        \begin{align*}
            \inf_{\norm{\Sigma_{\tilde{\pi}}^\times}_\mathrm{op} < \infty} & \quad \tr(\Lambda_1^2) + \tr(\Lambda_2^2) + 2 \inner{\tilde{m}_1, \Lambda_1 \tilde{m}_1} + 2 \inner{\tilde{m}_2, \Lambda_2 \tilde{m}_2} + (\norm{\tilde{m}_1}^2 - \norm{\tilde{m}_2}^2)^2 \\
                                   & \quad \quad - 2 (\tr((\Sigma_{\tilde{\pi}}^\times)^* \Sigma_{\tilde{\pi}}^\times) + 2 \inner{\tilde{m}_1, \Sigma_{\tilde{\pi}}^\times \tilde{m}_2})                                                                                                                       \\
            \text{s.t.}            & \quad \Sigma_{\tilde{\pi}}^\times \text{ is the cross-covariance operator of some coupling } \tilde{\pi} \in \Pi(\tilde{\mu}_1, \tilde{\mu}_2).
        \end{align*}
        Since the objective is only affected by the covariance operator of the coupling $\tilde{\pi}$, we may assume without loss of generality that $\tilde{\pi}$ is a Gaussian coupling \citep{bogachev1998gaussian}. We can therefore replace the constraint with
        \begin{align*}
            \begin{bmatrix}
                \Lambda_1 & \Sigma_{\tilde{\pi}}^\times         \\
                (\Sigma_{\tilde{\pi}}^\times)^*       & \Lambda_2
            \end{bmatrix} \succeq 0,
        \end{align*}
        where the block matrix is to be interpreted as a block operator on the Hilbert space $\mc{H} \oplus \mc{H}$ \citep[Section 1.6]{bikchentaev2024trace}.
        \par Because $\Lambda_i \succeq 0$ for $i \in [2]$, the constraints are equivalent to the set of Schur complement constraints $\Lambda_1 - \Sigma_{\tilde{\pi}}^\times \Lambda_2^\dagger (\Sigma_{\tilde{\pi}}^\times)^* \succeq 0$ and $(I - \Lambda_2 \Lambda_2^\dagger)\, \Sigma_{\tilde{\pi}}^\times = 0$ (this is an immediate consequence of Theorem 1.6.1 of \citet{bikchentaev2024trace} and the discussion in Appendix A.5.5 of \citet{boyd2004convex}). The constraint $(I - \Lambda_2 \Lambda_2^\dagger)\, \Sigma_{\tilde{\pi}}^\times = 0$ forces the range of $\Sigma_{\tilde{\pi}}^\times$ to be contained in the range of $\Lambda_2$. Since we assumed $\range(\Lambda_2) \subseteq \range(\Lambda_1)$ and the objective is invariant to the action of $\Sigma_{\tilde{\pi}}^\times$ on $\range(\Lambda_1)^\perp$, we can write $\Sigma_{\tilde{\pi}}^\times = \Lambda_2^{1/2} C \Lambda_1^{1/2}$ for some bounded linear operator $C$. Now, $P = \Lambda_2^{1/2} \Lambda_2^\dagger \Lambda_2^{1/2}$ is an orthogonal projection onto the range of $\Lambda_2$, so we have
        \begin{align*}
            \Lambda_1 - C \Lambda_2^\dagger C^* \succeq 0
            \iff \Lambda_1 - \Lambda_1^{1/2} C (\Lambda_2^{1/2} \Lambda_2^\dagger \Lambda_2^{1/2}) C^* \Lambda_1^{1/2} \succeq 0
            \iff I - C P C^* \succeq 0
        \end{align*}
        and $\tr(C^* C) = \tr(C \Lambda_1 C^* \Lambda_2)$ by the cyclic property of trace. Replacing $C P \mapsto C$ (because the objective is invariant to the action of $C$ on $\range(\Lambda_1)^\perp$) and using that $\Lambda_2^{1/2}$ is self-adjoint, we can rewrite the optimization equivalently as
        \begin{align*}
            \sup_{\norm{C}_\mathrm{op} < \infty} & \quad \tr(C \Lambda_1 C^* \Lambda_2) + 2 \inner{C \Lambda_1^{1/2} \tilde{m}_1, \Lambda_2^{1/2} \tilde{m}_2} \\
            \text{s.t.}                    & \quad CC^* \preceq I.
        \end{align*}
    
        Next, we show that the objective is continuous in $C$ with respect to the weak-* topology on the space of bounded linear operators on $\mc{H}$. We rewrite the objective as $\norm{\Lambda_2^{1/2} C \Lambda_1^{1/2}}_{\HS}^2 + 2 \inner{C \Lambda_1^{1/2} \tilde{m}_1, \Lambda_2^{1/2} \tilde{m}_2}$, and we compute
        \begin{align*}
            \norm{\Lambda_2^{1/2} C \Lambda_1^{1/2}}_{\HS}^2
             & = \sum_{i=1}^\infty \sum_{j=1}^\infty \inner{v_i, \Lambda_2^{1/2} C \Lambda_1^{1/2} v_j}^2                \\
             & = \sum_{i=1}^\infty \sum_{j=1}^\infty \inner{\Lambda_2^{1/2} v_i, C \Lambda_1^{1/2} v_j}^2                \\
             & = \sum_{i=1}^\infty \sum_{j=1}^\infty \lambda_i(\Lambda_2)\, \lambda_j(\Lambda_1)\, \inner{v_i, C v_j}^2.
        \end{align*}
        Suppose that $C_n \to C$ converges in the weak-* topology, so that $\tr(A C_n) \to \tr(A C)$ for all trace-class operators $A$. Then, we have that $\inner{v_i, C_n v_j} = \tr((v_j \otimes v_i)\, C_n) \to \tr((v_j \otimes v_i)\, C) = \inner{v_i, C v_j}$ for all $i, j \in \N$ since $v_j \otimes v_i$ is trace-class. By the uniform boundedness principle, the $C_n$ are uniformly bounded in operator norm because they converge in the weak-* topology. Therefore, the dominated convergence theorem implies that $\norm{\Lambda_2^{1/2} C_n \Lambda_1^{1/2}}_{\HS}^2 \to \norm{\Lambda_2^{1/2} C \Lambda_1^{1/2}}_{\HS}^2$. Obviously, the map $A \mapsto \norm{A}_{\HS}^2$ is continuous in the Hilbert-Schmidt norm, so the first term is continuous in the weak-* topology as it is the composition of continuous maps. The second term is continuous in the weak-* topology because $\inner{C \Lambda_1^{1/2} \tilde{m}_1, \Lambda_2^{1/2} \tilde{m}_2} = \tr(((\Lambda_2^{1/2} \tilde{m}_2) \otimes (\Lambda_1^{1/2} \tilde{m}_1))\, C)$ and $(\Lambda_2^{1/2} \tilde{m}_2) \otimes (\Lambda_1^{1/2} \tilde{m}_1)$ is trace-class.
    
        The objective is convex; the first term is an affine map followed by a squared norm and the second term is affine. By the Banach-Alaoglu theorem, the set of linear operators $C$ with $\norm{C}_\mr{op} \leq 1$ is compact in the weak-* topology. Finally, because we are maximizing a convex upper semi-continuous function over a compact convex set, the supremum is attained at an extreme point of the feasible set by Bauer's maximum principle \citep{bauer1958minimalstellen}, which means that we may replace the constraint $CC^* \preceq I$ with the constraint $CC^* = I$.
    
        Finally, it's easy to see that $CC^* = I$ if and only if $C$ is unitary. The forward direction is immediate; for the reverse direction, note that for all $x, y \in \mc{H}$,
        \begin{align*}
            \inner{x, y} = \inner{C^* x, C^* y} = \inner{x, CC^* y}
            \implies \inner{x, (CC^* - I) y} = 0.
        \end{align*}
        Therefore, we get
        \begin{align*}
            0
             & = \inner{x + y, (CC^* - I) (x + y)}                                                                               \\
             & = \inner{x, (CC^* - I) x} + \inner{y, (CC^* - I) y} + 2 \inner{x, (CC^* - I) y} \\
             & = 2 \inner{x, (CC^* - I) y},
        \end{align*}
        and choosing $x = (CC^* - I) y$ gives the result since $y$ was arbitrary. The optimal coupling between $\mu_1$ and $\mu_2$ follows from inverting the transformations at the start of the proof, and the formula for the IGW distance follows by plugging in the optimal value of $C$.
    \end{proof}
    
    \subsection{Proof of \Cref{thm:igw-gaussian-bound}}
    
    By \Cref{thm:igw-gaussian-distance}, we have that
    \begin{align*}
        \IGW(\mu_1, \mu_2)^2 = \tr(\Lambda_1^2) + \tr(\Lambda_2^2) + 2 \inner{\tilde{m}_1, \Lambda_1 \tilde{m}_1} + 2 \inner{\tilde{m}_2, \Lambda_2 \tilde{m}_2} + (\norm{\tilde{m}_1}^2 - \norm{\tilde{m}_2}^2)^2 - 2 \gamma(\mu_1, \mu_2),
    \end{align*}
    where $\gamma(\mu_1, \mu_2)$ is the optimal value of
    \begin{align*}
        \inf_{C \in \mc{L}(\mc{H})} & \quad \tr(\Lambda_1 C^* \Lambda_2 C) + \inner{C \Lambda_1^{1/2} \tilde{m}_1, \Lambda_2^{1/2} \tilde{m}_2} \\
        \text{s.t.}                 & \quad C \in \mc{U}(\mc{H}).
    \end{align*}
    By the von Neumann trace inequality (which applies to Hilbert-Schmidt operators in a separable Hilbert space by Remark 1 of \citet{grigorieff1991note}), we have
    \begin{align*}
        \tr(\Lambda_1 C^* \Lambda_2 C)
        \leq \sum_{k=1}^\infty \lambda_k(\Lambda_1)\, \lambda_k(C^* \Lambda_2 C)
        = \sum_{k=1}^\infty \lambda_k(\Lambda_1)\, \lambda_k(\Lambda_2),
    \end{align*}
    and this upper bound is achieved by the feasible choice $C = I$. By the Cauchy-Schwarz inequality and using that $C$ is unitary, we have
    \begin{align*}
        \inner{\Sigma_1^{1/2} \tilde{m}_1, C \Sigma_2^{1/2} \tilde{m}_2}
        \leq \norm{\Sigma_1^{1/2} \tilde{m}_1}\, \norm{C \Sigma_2^{1/2} \tilde{m}_2}
        \leq \norm{C}_\mr{op}\, \norm{\Sigma_1^{1/2} \tilde{m}_1}\, \norm{\Sigma_2^{1/2} \tilde{m}_2}
        = \norm{\Sigma_1^{1/2} \tilde{m}_1}\, \norm{\Sigma_2^{1/2} \tilde{m}_2},
    \end{align*}
    and this upper bound is achieved by the feasible choice
    \begin{align*}
        C = \frac{(\Sigma_2^{1/2} \tilde{m}_1) \otimes (\Sigma_2^{1/2} \tilde{m}_2)}{\norm{\Sigma_1^{1/2} \tilde{m}_1}\, \norm{\Sigma_2^{1/2} \tilde{m}_2}}.
    \end{align*}
    Since each term in the optimization problem defining $\gamma(\mu_1, \mu_2)$ is individually bounded above (subject to the constraint), we immediately obtain the lower bound on $\IGW(\mu_1, \mu_2)^2$. The upper bound follows easily from choosing $C = I$ in the objective.
    
    \subsection{Proof of \Cref{cor:igw-general-bound}}
    
    Until we reached the optimization problem
    \begin{align*}
        \inf_{\norm{C}_\mr{op} < \infty} & \quad \tr(\Lambda_1^2) + \tr(\Lambda_2^2) + 2 \inner{\tilde{m}_1, \Lambda_1 \tilde{m}_1} + 2 \inner{\tilde{m}_2, \Lambda_2 \tilde{m}_2} + (\norm{\tilde{m}_1}^2 - \norm{\tilde{m}_2}^2)^2 \\
                               & \quad \quad - 2 (\tr(((\Sigma_{\tilde{\pi}}^\times)^* \Sigma_{\tilde{\pi}}^\times) + 2 \inner{\tilde{m}_1, \Sigma_{\tilde{\pi}}^\times \tilde{m}_2})                                                                                                                       \\
        \text{s.t.}            & \quad \Sigma_{\tilde{\pi}}^\times \text{ is the cross-covariance operator of some coupling } \tilde{\pi} \in \Pi(\tilde{\mu}_1, \tilde{\mu}_2)
    \end{align*}
    in the proof of \Cref{thm:igw-gaussian-distance}, we did not use the Gaussianity assumption. Now, any valid coupling $\tilde{\pi} \in \Pi(\tilde{\mu}_1, \tilde{\mu}_2)$ \emph{must} have a cross-covariance operator $\Sigma_{\tilde{\pi}}^\times$ satisfying
    \begin{align*}
        \begin{bmatrix}
            \Lambda_1 & \Sigma_{\tilde{\pi}}^\times         \\
            (\Sigma_{\tilde{\pi}}^\times)^*       & \Lambda_2
        \end{bmatrix} \succeq 0,
    \end{align*}
    so replacing the constraint with this necessary condition yields a relaxation of the original problem. Hence, the solution of the relaxed problem (which is exactly the problem solved in \Cref{thm:igw-gaussian-distance}) is an upper bound on the IGW distance between $\mu_1$ and $\mu_2$.
    
    \subsection{Proof of \Cref{cor:igw-analytic-gaussian}}
    
    \begin{proof}[Proof of (i) in \Cref{cor:igw-analytic-gaussian}]
        The upper and lower bounds of \Cref{thm:igw-gaussian-bound} match under the assumption that $\Lambda_1^{1/2} \tilde{m}_1 = \alpha\, \Lambda_2^{1/2} \tilde{m}_2$ for some $\alpha \geq 0$; this corresponds to the equality case of the Cauchy-Schwarz inequality.
    \end{proof}
    
    \begin{proof}[Proof of (ii) in \Cref{cor:igw-analytic-gaussian}]
        This follows immediately from \Cref{thm:igw-gaussian-distance} since the only unitary operators on $\R$ are $\pm 1$.
    \end{proof}
    
    \begin{proof}[Proof of \Cref{prop:igw-gaussian-barycenter}]
        Using (i) in \Cref{cor:igw-analytic-gaussian} along with Tonelli's theorem, we have
        \begin{align*}
            \int \IGW(\mu, \nu)^2\, d\rho(\nu)
             & = \int \sum_{k=1}^\infty \left( \lambda_k(\Sigma_\mu) - \lambda_k(\Sigma_\nu) \right)^2\, d\rho(\nu)  \\
             & = \sum_{k=1}^\infty \int \left( \lambda_k(\Sigma_\mu) - \lambda_k(\Sigma_\nu) \right)^2\, d\rho(\nu).
        \end{align*}
        Hence, the optimization problem decomposes across the eigenvalues of $\Sigma_\mu$, and is strictly convex in each $\lambda_k(\Sigma_\mu)$ for $k \in \N$, subject to the constraint that $\lambda_k(\Sigma_\mu) \geq 0$. Let $\bar{\lambda}_k \coloneq \int \lambda_k(\Sigma_\nu)\, d\rho(\nu)$ for $k \in \N$. Then, we can rewrite
        \begin{align*}
             & \int \left( \lambda_k(\Sigma_\mu) - \lambda_k(\Sigma_\nu) \right)^2\, d\rho(\nu)                                                                                                                                               \\
             & \quad = \int \left( (\lambda_k(\Sigma_\mu) - \bar{\lambda}_k) - (\lambda_k(\Sigma_\nu) - \bar{\lambda}_k) \right)^2\, d\rho(\nu)                                                                                               \\
             & \quad = \int (\lambda_k(\Sigma_\nu) - \bar{\lambda}_k)^2\, d\rho(\nu) - 2 (\lambda_k(\Sigma_\mu) - \bar{\lambda}_k) \int (\lambda_k(\Sigma_\nu) - \bar{\lambda}_k)\, d\rho(\nu) + (\lambda_k(\Sigma_\mu) - \bar{\lambda}_k)^2.
        \end{align*}
        The integral in the second term vanishes by definition of $\bar{\lambda}_k$ and the first term is constant with respect to $\lambda_k(\Sigma_\mu)$, so the objective function is minimized when $\lambda_k(\Sigma_\mu) = \bar{\lambda}_k$; this choice is nonnegative and therefore feasible, yielding the result.
    \end{proof}
    
    \subsection{Proof of \Cref{thm:multimarginal-igw}}
    
    Define the projection maps $\Pi_{ij}(x_1, \dots, x_p) \coloneq (x_i, x_j)$ for $1 \leq i < j \leq p$ and let $\pi \in \Pi(\mu_1, \ldots, \mu_p)$. Defining $\pi_{ij} \coloneq (\Pi_{ij})_\sharp\, \pi$ and letting $C_{ij} \coloneq \Sigma_{\pi_{ij}}^\times$, we can use \Cref{lem:gaussian-inner-product} to rewrite the objective as
    \begin{align*}
         & \int \sum_{i=1}^p \sum_{j=i+1}^p (\inner{x_i, x_i^\prime} - \inner{x_j, x_j^\prime})^2\, d(\pi \otimes \pi)(x_1, \dots, x_p, x_1^\prime, \ldots, x_p^\prime) \\
         & \quad = \sum_{i=1}^p \sum_{j=i+1}^p (\tr(\Sigma_i^2) + \tr(\Sigma_j^2) - 2 \tr(C_{ij}^* C_{ij})).
    \end{align*}
    Since the only constraints are on the covariance of the coupling, we may assume that the coupling $\pi$ is jointly Gaussian so that the multimarginal IGW problem reduces to the optimization problem
    \begin{align*}
        \sup_{\norm{C_{ij}}_\mr{op} < \infty} & \quad \sum_{i=1}^p \sum_{j=i+1}^p \tr(C_{ij}^* C_{ij})  \\
        \text{s.t.}                 & \quad \Sigma \coloneq \begin{bmatrix}
                                                          \Sigma_1 & C_{12}   & \cdots & C_{1p}   \\
                                                          C_{12}^* & \Sigma_2 & \cdots & C_{2p}   \\
                                                          \vdots   & \vdots   & \ddots & \vdots   \\
                                                          C_{1p}^* & C_{2p}^* & \cdots & \Sigma_p
                                                      \end{bmatrix} \succeq 0.
    \end{align*}
    Now, this constraint immediately implies that
    \begin{align*}
        \begin{bmatrix}
            \Sigma_i & C_{ij}   \\
            C_{ij}^* & \Sigma_j
        \end{bmatrix} \succeq 0
    \end{align*}
    for $1 \leq i < j \leq p$. We know from the proof of (i) in \Cref{cor:igw-analytic-gaussian} that the solution of
    \begin{align*}
        \sup_{\norm{C_{ij}}_\mr{op} < \infty} & \quad \tr(C_{ij}^* C_{ij})    
        \text{ ~ s.t. ~ }                  \begin{bmatrix}
                                                \Sigma_i & C_{ij}   \\
                                                C_{ij}^* & \Sigma_j
                                            \end{bmatrix} \succeq 0
    \end{align*}
    is $C_{ij} = Q_i \Lambda_i^{1/2} \Lambda_j^{1/2} Q_j^*$, so it will suffice to show that this choice is feasible for the original problem. To see this, note that for all $x = (x_1, \ldots, x_p) \in \oplus_{i=1}^p\, \mc{H}$, we have
    \begin{align*}
        \inner{x, \Sigma x}
         & = \inner*{(x_1, \dots, x_p), \left( \sum_{j=1}^p Q_1 \Lambda_1^{1/2} \Lambda_j^{1/2} Q_j^* x_j,\, \sum_{j=1}^p Q_2 \Lambda_2^{1/2} \Lambda_j^{1/2} Q_j^* x_j,\, \ldots,\, \sum_{j=1}^p Q_p \Lambda_p^{1/2} \Lambda_j^{1/2} Q_j^* x_j \right)} \\
         & = \sum_{i=1}^p \inner*{x_i, Q_i \Lambda_i^{1/2} \sum_{j=1}^p \Lambda_j^{1/2} Q_j^* x_j}                                                                                                                                                        = \inner*{\sum_{i=1}^p \Lambda_i^{1/2} Q_i^* x_i, \sum_{j=1}^p \Lambda_j^{1/2} Q_j^* x_j}                                                                                                                                                      = \norm*{\sum_{i=1}^p \Lambda_i^{1/2} Q_i^* x_i}^2                                                                                                                                                                                             \geq 0,
    \end{align*}
    which shows that $\Sigma \succeq 0$ and completes the proof.
    
    \subsection{Proof of \Cref{thm:multimarginal-gaussian}}
    
    We begin by reducing to the case of centered measures. Fix a coupling $\pi \in \Pi(\mu_1, \dots, \mu_p)$, let $\tilde{\mu}_i = \Normal(0, \Sigma_i)$ for $i \in [p]$, and define $\tilde{\pi} \in \Pi(\tilde{\mu}_1, \dots, \tilde{\mu}_p)$ by $\tilde{\pi} = ((\cdot - m_1), \dots, (\cdot - m_p))_\sharp\, \pi$. Now, we can write
    \begin{align*}
         & \int \sum_{i=1}^p \sum_{j=i+1}^p \norm{x_i - x_j}_2^2\, d\pi(x_1, \ldots, x_p)                                                                            \\
         & \quad = \int \sum_{i=1}^p \sum_{j=i+1}^p \norm{x_i - x_j + m_i - m_j}_2^2\, d\tilde{\pi}(x_1, \ldots, x_p)                                                \\
         & \quad = \int \sum_{i=1}^p \sum_{j=i+1}^p (\norm{x_i - x_j}_2^2 + 2 \inner{x_i - x_j, m_i - m_j} + \norm{m_i - m_j}_2^2)\, d\tilde{\pi}(x_1, \ldots, x_p).
    \end{align*}
    The second term vanishes because $\tilde{\pi}$ is centered, so we have
    \begin{align*}
         & \int \sum_{i=1}^p \sum_{j=i+1}^p (\norm{x_i - x_j}_2^2 + 2 \inner{x_i - x_j, m_i - m_j} + \norm{m_i - m_j}_2^2)\, d\tilde{\pi}(x_1, \ldots, x_p)   \\
         & \quad = \int \sum_{i=1}^p \sum_{j=i+1}^p \norm{x_i - x_j}_2^2\, d\tilde{\pi}(x_1, \ldots, x_p) + \sum_{i=1}^p \sum_{j=i+1}^p \norm{m_i - m_j}_2^2.
    \end{align*}
    Hence, it suffices to solve the multimarginal OT problem assuming that all of the input measures are centered. Now, let $\pi \in \Pi(\mu_1, \ldots, \mu_p)$ be a coupling and define the projection maps $\Pi_{ij}(x_1, \dots, x_p) \coloneq (x_i, x_j)$ for $1 \leq i < j \leq p$. Defining $\pi_{ij} \coloneq (\Pi_{ij})_\sharp\, \pi$ and letting $C_{ij} \coloneq \Sigma_{\pi_{ij}}^\times$, we can expand the objective as
    \begin{align*}
        \int \sum_{i=1}^p \sum_{j=i+1}^p \norm{x_i - x_j}_2^2\, d\pi(x_1, \ldots, x_p)
         & = \int \sum_{i=1}^p \sum_{j=i+1}^p (\norm{x_i}_2^2 - 2 \inner{x_i, x_j} + \norm{x_j}_2^2)\, d\pi(x_1, \ldots, x_p) \\
         & = \tr\left( \sum_{i=1}^p \left( (p - 1)\, \Sigma_i - 2 \sum_{j=i+1}^p C_{ij} \right) \right).
    \end{align*}
    Since the input measures are Gaussian and the only constraints are on the covariance of the coupling, we may assume without loss of generality that $\pi$ is a Gaussian coupling. Therefore, we can replace the optimization problem with the equivalent semidefinite program
    \begin{align*}
        \inf_{C_{ij} \in \R^{d \times d}} & \quad \tr\left( \sum_{i=1}^p \left( (p - 1)\, \Sigma_i - 2 \sum_{j=i+1}^p C_{ij} \right) \right) \\
        \text{s.t.}                       & \quad \begin{bmatrix}
                                                      \Sigma_1    & C_{12}      & \cdots & C_{1p}   \\
                                                      C_{12}^\intercal & \Sigma_2    & \cdots & C_{2p}   \\
                                                      \vdots      & \vdots      & \ddots & \vdots   \\
                                                      C_{1p}^\intercal & C_{2p}^\intercal & \cdots & \Sigma_p
                                                  \end{bmatrix} \succeq 0.
    \end{align*}
    Now, by Theorem 2.1 of \citet{gangbo1998optimal} (since $\mu_1, \dots, \mu_p \in \mc{P}_2(\R^d)$ and vanish on $(d-1)$-rectifiable sets), there exist unique measurable maps $S_i : \R^d \to \R^d$ for $i \in \set{2, \dots, p}$ such that $\pi = (I, S_2, \dots, S_p)_\sharp\, \mu_1$ is the optimal coupling. However, we know since $\pi$ is jointly Gaussian that $\E_\pi[X_i \given X_1 = x] = C_{1i}^\intercal \Sigma_1^{-1} x = S_i(x)$ for Lebesgue-almost every $x \in \R$, where the conditional expectation exists and is unique Lebesgue-almost everywhere by the disintegration of measure. Hence, $S_i(x) = A_i x$ are linear Lebesgue-almost everyhere, and
    \begin{align*}
        C_{ij}
        = \E_\pi[X_i X_j^\intercal]
        = A_i \E_\pi[X_1 X_1^\intercal] A_j^\intercal
        = A_i \Sigma_1 A_j^\intercal,
    \end{align*}
    means that the optimal coupling has rank $d$.

    \section{Recovering the Barycenter from Multimarginal IGW}
    \label{app:multimarginal-igw-barycenter}

    In this section, we discuss the general connection between the multimarginal IGW problem and the IGW barycenter problem, which allows us to recover a special case of \Cref{prop:igw-gaussian-barycenter}. First, we define the projection maps
    \begin{align*}
        \Pi_{i,\, p+1}(x_1, \dots, x_p, x) \coloneq (x_i, x)
    \end{align*}
    for $i \in [p]$ with $\Pi_{p+1}(x_1, \cdots, x_p, x) \coloneq x$.

    \begin{lemma} \label{lem:multimarginal-igw}
        Fix $\rho \in \mc{P}(\mc{P}_2(\mc{H}))$ with $\supp(\rho) = \set{\mu_1, \dots, \mu_p}$. Then, the solution of
        \begin{align*}
           \argmin_{\mu \in \mc{P}_2(\mc{H})}\; \inf_{\pi \in \Pi(\mu_1, \dots, \mu_p, \mu)}\; \int \left( \int (\inner{x_\nu, x_\nu^\prime} - \inner{x, x^\prime})^2\, d\rho(\nu) \right)\, d(\pi \otimes \pi)(x_{\mu_1}, \ldots, x_{\mu_p}, x, x_{\mu_1}^\prime, \ldots, x_{\mu_p}^\prime, x^\prime)
        \end{align*}
        is a $\rho$-weighted IGW barycenter. Conversely, if $\bar{\mu}$ is a $\rho$-weighted IGW barycenter, then we can construct $\pi \in \Pi(\mu_1, \ldots, \mu_p, \bar{\mu})$ which solves the inner optimization problem and satisfies $(\Pi_{i,\, p+1})_\sharp\, \pi = \pi_i$, where $\pi_i$ is the pairwise optimal coupling for the IGW problem between $\mu_i$ and $\bar{\mu}$.
    \end{lemma}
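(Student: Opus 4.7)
The plan is to decouple the inner infimum over multimarginal couplings into a sum of independent bi-marginal IGW problems, after which the outer minimization becomes exactly the definition of the IGW barycenter. Since $\supp(\rho) = \set{\mu_1, \dots, \mu_p}$, writing $w_i \coloneq \rho(\set{\mu_i})$ turns the cost kernel into $\sum_{i=1}^p w_i (\inner{x_i, x_i^\prime} - \inner{x, x^\prime})^2$. Each summand depends only on $(x_i, x, x_i^\prime, x^\prime)$, so integrating against $\pi \otimes \pi$ yields
\begin{align*}
    \sum_{i=1}^p w_i \int (\inner{x_i, x_i^\prime} - \inner{x, x^\prime})^2\, d(\pi_i \otimes \pi_i), \quad \text{where } \pi_i \coloneq (\Pi_{i,\, p+1})_\sharp\, \pi \in \Pi(\mu_i, \mu).
\end{align*}
Bounding each summand below by $w_i\, \IGW(\mu_i, \mu)^2$ immediately gives the inequality $\inf_\pi \int (\cdots)\, d(\pi \otimes \pi) \geq \int \IGW(\nu, \mu)^2\, d\rho(\nu)$ for every $\mu \in \mc{P}_2(\mc{H})$.

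The key step is matching this lower bound by a gluing argument. I would select optimal bi-marginal IGW plans $\pi_i^\star \in \Pi(\mu_i, \mu)$ for each $i \in [p]$ (existence follows from standard compactness for GW-type problems on Polish spaces). Using separability of $\mc{H}$, each $\pi_i^\star$ admits a disintegration against its second marginal, $\pi_i^\star(dx_i, dy) = \pi_i^{\star, y}(dx_i)\, \mu(dy)$, with $y \mapsto \pi_i^{\star, y}$ Borel measurable. I would then glue along the common marginal $\mu$ by defining
\begin{align*}
    \pi(A_1 \times \dots \times A_p \times B) \coloneq \int_B \prod_{i=1}^p \pi_i^{\star, y}(A_i)\, d\mu(y).
\end{align*}
By construction $\pi \in \Pi(\mu_1, \dots, \mu_p, \mu)$, and $(\Pi_{i,\, p+1})_\sharp\, \pi = \pi_i^\star$ because conditioning on $Y = y$ makes the $X_i$ coordinates conditionally independent with the prescribed marginal laws. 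Thus $\pi$ attains the lower bound, so the inner infimum equals $\int \IGW(\nu, \mu)^2\, d\rho(\nu)$.

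Minimizing both sides over $\mu \in \mc{P}_2(\mc{H})$ and recognizing the right-hand side as exactly the objective in \Cref{def:igw-barycenter} yields the forward direction: any minimizer of the joint problem is a $\rho$-weighted IGW barycenter. The converse follows by applying the same construction with $\mu = \bar\mu$ fixed: starting from any $\rho$-weighted IGW barycenter $\bar\mu$ and any family of optimal pairwise plans $\pi_i^\star \in \Pi(\mu_i, \bar\mu)$, the glued measure $\pi$ realizes the inner infimum and satisfies $(\Pi_{i,\, p+1})_\sharp\, \pi = \pi_i^\star$ by construction. The main obstacle is ensuring measurability of the disintegration $y \mapsto \pi_i^{\star, y}$, which is needed for $\pi$ to be a well-defined Borel probability measure; this is handled by the standard disintegration theorem on Polish spaces applied to the separable Hilbert $\mc{H}$, with existence of the pairwise IGW optimizers $\pi_i^\star$ itself a standard compactness fact citable to prior GW work.
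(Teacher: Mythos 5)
Your proof is correct and follows essentially the same route as the paper: decompose the multimarginal cost into a weighted sum of bi-marginal IGW costs (pushed forward through $\Pi_{i,p+1}$), lower-bound each by $\IGW(\mu_i,\mu)^2$, and then achieve the bound by gluing pairwise-optimal plans along the shared marginal $\mu$. The paper invokes Villani's gluing lemma by citation where you spell out the disintegration construction explicitly, but that is the same argument.
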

    \begin{proof}
        Let $\rho_i \coloneq \rho(\mu_i)$ for $i \in [p]$. Suppose $\bar{\mu}$ solves the $\rho$-weighted IGW barycenter problem. Then, suppose that $\pi_i \in \Pi(\mu_i, \bar{\mu})$ is an optimal coupling for $\IGW(\mu_i, \bar{\mu})$ for each $i \in [p]$. Then, by the gluing lemma (Lemma 7.6 of \citet{villani2003topics}), we can glue the couplings $\pi_i$ together to obtain a coupling $\pi \in \Pi(\mu_1, \ldots, \mu_p, \bar{\mu})$ such that $(\Pi_{i,\, p+1})_\sharp\, \pi = \pi_i$ for each $i \in [p]$. Then, we have
        \begin{align*}
             & \sum_{i=1}^p \rho_i\, \IGW(\bar{\mu}, \mu_i)^2                                                                                                                                                                                                               \\
             & \quad = \sum_{i=1}^p \rho_i\, \int (\inner{x_i, x_i^\prime} - \inner{x, x^\prime})^2\, d(\pi_i \otimes \pi_i)(x_i, x_i^\prime)                                                                                                                                \\
             & \quad = \int \sum_{i=1}^p \rho_i\, (\inner{x_i, x_i^\prime} - \inner{x, x^\prime})^2\, d(\pi \otimes \pi)(x_1, \ldots, x_p, x, x_1^\prime, \ldots, x_p^\prime, x^\prime)                                                                                      \\
             & \quad \geq \inf_{\mu \in \mc{P}_2(\mc{H})}\; \inf_{\pi \in \Pi(\mu_1, \dots, \mu_p, \mu)}\; \int \sum_{i=1}^p \rho_i\, (\inner{x_i, x_i^\prime} - \inner{x, x^\prime})^2\, d(\pi \otimes \pi)(x_1, \ldots, x_p, x, x_1^\prime, \ldots, x_p^\prime, x^\prime).
        \end{align*}
        On the other hand, suppose that $\bar{\pi}$ solves the multimarginal problem in \Cref{lem:multimarginal-igw}. It's clear now that $\Pi_{i,\, p+1}(\bar{\pi})$ is a coupling in $\Pi(\mu_i, (\Pi_{p+1})_\sharp\, \bar{\pi})$ for each $i \in [p]$. In particular, we obtain
        \begin{align*}
             & \int \sum_{i=1}^p \rho_i\, (\inner{x_i, x_i^\prime} - \inner{x, x^\prime})^2\, d(\bar{\pi} \times \bar{\pi})(x_1, \ldots, x_p, x, x_1^\prime, \ldots, x_p^\prime, x^\prime)              \\
             & \quad = \sum_{i=1}^p \rho_i \int (\inner{x_i, x_i^\prime} - \inner{x, x^\prime})^2\, d((\Pi_{i,\, p+1})_\sharp\, \bar{\pi} \times (\Pi_{i,\, p+1})_\sharp\, \bar{\pi})(x_i, x, x_i^\prime, x^\prime) \\
             & \quad \geq \sum_{i=1}^p \rho_i\, \IGW(\mu_i, (\Pi_{p+1})_\sharp\, \bar{\pi})^2                                                                                                                 \\
             & \quad \geq \inf_{\mu \in \mc{P}_2(\mc{H})}\; \sum_{i=1}^p \rho_i\, \IGW(\mu_i, \mu)^2.
        \end{align*}
        Now the result follows easily.
    \end{proof}

    Now, we can use \Cref{lem:multimarginal-igw} to recover a special case of \Cref{prop:igw-gaussian-barycenter} with only finitely many centered Gaussian measures in the support of $\rho$.

    \begin{corollary}[special case of \Cref{prop:igw-gaussian-barycenter}] \label{cor:multimarginal-igw-barycenter}
        Let $\mc{H}$ be a separable Hilbert space. Suppose that $\rho \in \mc{P}(\mc{P}_2(\mc{H}))$ has finite support $\supp(\rho) = \set{\mu_1, \dots, \mu_p} \subseteq \mc{G}_0(\mc{H})$. Fix any total orthonormal set $\set{e_k}_{k \in \N}$ of $\mc{H}$. Then, defining the covariance operator
        \begin{align*}
            \Sigma \coloneq \sum_{k=1}^\infty \left( \int \lambda_k(\Sigma_\nu)\, d\rho(\nu) \right) e_k \otimes e_k,
        \end{align*}
        the measure $\Normal(0, \Sigma)$ is a $\rho$-weighted IGW barycenter.
    \end{corollary}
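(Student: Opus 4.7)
Plan: The approach is to combine \Cref{lem:multimarginal-igw}, \Cref{thm:multimarginal-igw}, and \Cref{cor:igw-analytic-gaussian}(i). By \Cref{lem:multimarginal-igw}, writing $\rho_i \coloneq \rho(\mu_i)$, a $\rho$-weighted IGW barycenter is characterized as a minimizer of the map $\mu \mapsto \sum_{i=1}^p \rho_i\, \IGW(\mu_i, \mu)^2$ over $\mc{P}_2(\mc{H})$, together with a coupling in $\Pi(\mu_1,\ldots,\mu_p,\mu)$ obtained by gluing the pairwise IGW-optimal couplings. I will propose the candidate $\bar\mu \coloneq \Normal(0, \Sigma)$ with $\bar\lambda_k \coloneq \int \lambda_k(\Sigma_\nu)\, d\rho(\nu) = \sum_{i=1}^p \rho_i\, \lambda_k(\Sigma_i)$, so that $\Sigma = \sum_k \bar\lambda_k\, e_k \otimes e_k$, and verify the upper and lower bounds separately.

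For the upper bound, since each $\mu_i$ and $\bar\mu$ are centered Gaussians, the co-centering condition of \Cref{cor:igw-analytic-gaussian}(i) holds trivially with $\alpha = 0$, giving $\IGW(\mu_i, \bar\mu)^2 = \sum_k (\lambda_k(\Sigma_i) - \bar\lambda_k)^2$. Summing and interchanging the order of summation yields the value $\sum_k \sum_{i=1}^p \rho_i\, (\lambda_k(\Sigma_i) - \bar\lambda_k)^2$. An explicit $(p+1)$-marginal coupling achieving this value is furnished by \Cref{thm:multimarginal-igw} applied to the centered Gaussians $\mu_1, \dots, \mu_p, \bar\mu$: inspection of its block cross-covariance structure shows that each pairwise $(i, p+1)$-marginal is precisely the IGW-optimal coupling between $\mu_i$ and $\bar\mu$ from \Cref{thm:igw-gaussian-distance}.

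The main obstacle is the matching lower bound for arbitrary $\mu \in \mc{P}_2(\mc{H})$, since \Cref{cor:igw-analytic-gaussian}(i) only applies to Gaussian measures. My plan is to reuse the covariance-reduction argument from the proof of \Cref{thm:igw-gaussian-distance}: the expanded IGW objective depends on the coupling only through its cross-covariance operator, so maximizing the relevant cross-covariance functional over $\Pi(\mu_i, \mu)$ is upper bounded by the same maximization under the block-psd constraint alone, which coincides with the Gaussian IGW objective with matched first two moments. This yields $\IGW(\mu_i, \mu)^2 \geq \IGW(\mu_i, \Normal(m_\mu, \Sigma_\mu))^2$, and \Cref{cor:igw-analytic-gaussian}(i) (with $\alpha = 0$) then gives the explicit value $\IGW(\mu_i, \Normal(m_\mu, \Sigma_\mu))^2 = \sum_k (\lambda_k(\Sigma_i) - \lambda_k(\Sigma_\mu))^2 + 2\inner{m_\mu, \Sigma_\mu m_\mu} + \norm{m_\mu}^4 \geq \sum_k (\lambda_k(\Sigma_i) - \lambda_k(\Sigma_\mu))^2$, where the last step drops two nonnegative mean-dependent terms. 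Decomposing $\sum_{i=1}^p \rho_i \sum_k (\lambda_k(\Sigma_i) - \lambda_k(\Sigma_\mu))^2$ termwise in $k$ and minimizing each inner sum as a convex quadratic in $\lambda_k(\Sigma_\mu)$ produces the unique minimizer $\bar\lambda_k$, which matches the upper bound and confirms that $\bar\mu$ is a $\rho$-weighted IGW barycenter.
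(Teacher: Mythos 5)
Your proof is correct, but it follows a different route from the paper's. The paper proves this corollary by invoking \Cref{lem:multimarginal-igw} to pass to the $(p+1)$-marginal IGW problem and then reusing the covariance reduction from the proof of \Cref{thm:multimarginal-igw} (with weights $\rho_i$) to obtain $\argmin_{\Sigma\succeq 0}\sum_{k}\sum_i\rho_i\,(\lambda_k(\Sigma_i)-\lambda_k(\Sigma))^2$, which it solves by stationarity. You instead minimize the barycenter objective $\mu\mapsto\sum_i\rho_i\,\IGW(\mu_i,\mu)^2$ directly from \Cref{def:igw-barycenter}: you evaluate it at $\bar\mu=\Normal(0,\Sigma)$ via \Cref{cor:igw-analytic-gaussian}(i), and you produce a matching lower bound for arbitrary $\mu\in\mc{P}_2(\mc{H})$ by relaxing the coupling constraint to the block-psd cross-covariance constraint and then dropping nonnegative mean-dependent terms. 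The invocations of \Cref{lem:multimarginal-igw} and \Cref{thm:multimarginal-igw} in your plan are superfluous: only the objective value at $\bar\mu$ is needed to certify the barycenter property, not a glued $(p+1)$-marginal coupling, and the characterization of barycenters as minimizers of $\mu\mapsto\sum_i\rho_i\,\IGW(\mu_i,\mu)^2$ that you attribute to \Cref{lem:multimarginal-igw} is just the definition. What your route buys is an explicit justification that the minimization over all of $\mc{P}_2(\mc{H})$ may be restricted to Gaussian $\mu$ without loss---a step the paper leaves implicit both here and in \Cref{prop:igw-gaussian-barycenter}.

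One caution: your key lower-bound step, $\IGW(\mu_i,\mu)^2 \geq \IGW(\mu_i,\Normal(m_\mu,\Sigma_\mu))^2$, carries the \emph{opposite} sign from \Cref{cor:igw-general-bound} as printed. Your direction is the one actually implied by the relaxation you cite: replacing the feasible set of cross-covariances of couplings in $\Pi(\mu_i,\mu)$ by the larger block-psd set can only increase the supremum inside the expanded IGW objective, hence decrease the infimum, and the terms outside the supremum depend only on first and second moments (by \Cref{lem:gaussian-inner-product}), so the relaxed infimum equals the Gaussian matched-moment IGW. This $\geq$ is also the direction the barycenter lower bound requires. You should flag this tension explicitly rather than silently use the opposite sign from the paper; as stated, \Cref{cor:igw-general-bound} (and the final sentence of \Cref{thm:igw-gaussian-bound} that relies on it) appears to have the inequality reversed.
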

    \begin{proof}
        By \Cref{lem:multimarginal-igw} (and because the proof of \Cref{thm:multimarginal-igw} is unaffected by the weights $\rho_i$), we can write the multimarginal formulation of the barycenter problem as
        \begin{align*}
            \argmin_{\Sigma \succeq 0}\; \set*{\sum_{i=1}^p \rho_i (\tr(\Sigma_i^2) + \tr(\Sigma^2) - 2 \tr(\Sigma_i \Sigma))}
            = \argmin_{\Sigma \succeq 0}\; \set*{\sum_{k=1}^\infty \sum_{i=1}^p \rho_i\, (\lambda_k(\Sigma_i) - \lambda_k(\Sigma))^2}.
        \end{align*}
        Now, the optimization problem decomposes across the eigenvalues of $\Sigma$ and stationarity implies that $\lambda_k(\Sigma) = \sum_{i=1}^p \rho_i\, \lambda_k(\Sigma_i)$ for $k \in \N$, so the result follows.
    \end{proof}

    \section{Optimal Transport Between Gaussian Measures}
    \label{app:gaussian-wasserstein}

    In this section, we discuss the 2-Wasserstein distance between Gaussian measures on $\R^d$ and the resulting optimal coupling.

    \begin{theorem} \label{thm:wasserstein-gaussian}
        If $\mu_1 = \Normal(m_1, \Sigma_1)$ and $\mu_2 = \Normal(m_2, \Sigma_2)$ are two Gaussian measures on $\R^d$ with $\Sigma_1 \succ 0$ and $\Sigma_2 \succ 0$, then the OT map for the Kantorovich problem under the quadratic cost $c(x, y) = \norm{x - y}_2^2$ is given by
        \begin{align*}
            T(x) = \Sigma_1^{-1/2} (\Sigma_1^{1/2} \Sigma_2 \Sigma_1^{1/2})^{1/2} \Sigma_1^{-1/2}\, (x - m_1) + m_2,
        \end{align*}
        which induces the Wasserstein distance
        \begin{align*}
            \mr{W}_2(\mu_1, \mu_2)
            = \sqrt{\norm{m_1 - m_2}_2^2 + \tr\left( \Sigma_1 + \Sigma_2 - 2\, (\Sigma_1^{1/2} \Sigma_2 \Sigma_1^{1/2})^{1/2} \right)}.
        \end{align*}
    \end{theorem}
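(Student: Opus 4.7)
The plan is to reduce the problem to a semidefinite program over cross-covariance matrices and solve it using a nuclear norm argument, in direct analogy with the strategy used to prove \Cref{thm:igw-gaussian-distance} and \Cref{thm:multimarginal-gaussian}. First, I would translate by the means: writing $\bar{\mu}_i = (\cdot - m_i)_\sharp \mu_i$ and $\bar{\pi} = ((\cdot - m_1), (\cdot - m_2))_\sharp \pi$ for any $\pi \in \Pi(\mu_1, \mu_2)$, a short expansion of $\norm{x - y}_2^2$ shows
\begin{align*}
    \int \norm{x - y}_2^2\, d\pi(x, y) = \norm{m_1 - m_2}_2^2 + \int \norm{x - y}_2^2\, d\bar{\pi}(x, y),
\end{align*}
so it suffices to treat the centered case, where the cross term vanishes by the mean-zero property. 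For centered couplings, expanding the cost gives $\tr(\Sigma_1) + \tr(\Sigma_2) - 2\tr(C)$, where $C = \Sigma^\times_{\bar{\pi}}$ is the cross-covariance. Since the objective depends on $\bar{\pi}$ only through $C$, we may (as in the proofs of \Cref{thm:igw-gaussian-distance} and \Cref{thm:multimarginal-gaussian}) restrict to jointly Gaussian couplings without loss of generality, reducing the OT problem to maximizing $\tr(C)$ subject to the positive semidefiniteness of the block covariance.

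Next, I would resolve this SDP in closed form. Using the Schur complement condition (valid since $\Sigma_2 \succ 0$), the constraint becomes $\Sigma_1 - C \Sigma_2^{-1} C^\intercal \succeq 0$. The change of variables $A \coloneq \Sigma_1^{-1/2} C \Sigma_2^{-1/2}$ rewrites the constraint as $A A^\intercal \preceq I$ (i.e., the singular values of $A$ are bounded by $1$) and the objective as $\tr(\Sigma_2^{1/2} \Sigma_1^{1/2} A)$. By the von Neumann trace inequality, the maximum of $\tr(MA)$ over contractions $A$ is the nuclear norm $\norm{M}_*$, so setting $M = \Sigma_2^{1/2} \Sigma_1^{1/2}$ yields
\begin{align*}
    \sup_A \tr(MA) = \tr\bigl((M^\intercal M)^{1/2}\bigr) = \tr\bigl((\Sigma_1^{1/2} \Sigma_2 \Sigma_1^{1/2})^{1/2}\bigr),
\end{align*}
where the last equality uses that $M^\intercal M$ and $\Sigma_1^{1/2} \Sigma_2 \Sigma_1^{1/2}$ are similar. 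Plugging back into the Wasserstein objective gives exactly the Bures--Wasserstein formula.

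It remains to exhibit the OT map. The optimizer of the SDP, traced through the change of variables, is $C^\star = \Sigma_1^{1/2}(\Sigma_1^{1/2} \Sigma_2 \Sigma_1^{1/2})^{1/2} \Sigma_1^{-1/2}$, and I would verify by a direct computation that the affine map $T(x) = \Sigma_1^{-1/2}(\Sigma_1^{1/2}\Sigma_2\Sigma_1^{1/2})^{1/2}\Sigma_1^{-1/2}(x - m_1) + m_2$ satisfies $T_\sharp \mu_1 = \mu_2$ (both sides are Gaussian, and the covariance of $T_\sharp \mu_1$ simplifies to $\Sigma_2$ using $(\Sigma_1^{1/2}\Sigma_2\Sigma_1^{1/2})^{1/2} \cdot (\Sigma_1^{1/2}\Sigma_2\Sigma_1^{1/2})^{1/2} = \Sigma_1^{1/2}\Sigma_2\Sigma_1^{1/2}$). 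The deterministic coupling $(I, T)_\sharp \mu_1$ then has cross-covariance equal to $C^\star$ and thus achieves the optimal value, so $T$ is an OT map.

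The main obstacle is the step of identifying the supremum of $\tr(MA)$ over $\norm{A}_{\mr{op}} \le 1$ with the nuclear norm of $M$ and recognizing that the resulting expression coincides with $\tr((\Sigma_1^{1/2}\Sigma_2\Sigma_1^{1/2})^{1/2})$; uniqueness of the map $T$ (which is not strictly needed for the statement but is standard here) would follow either from the strict concavity of $\tr(\cdot)$ on the attained face or from Brenier's theorem, since $T$ is the gradient of the convex quadratic $x \mapsto \tfrac{1}{2}\inner{x - m_1, \Sigma_1^{-1/2}(\Sigma_1^{1/2}\Sigma_2\Sigma_1^{1/2})^{1/2}\Sigma_1^{-1/2}(x - m_1)} + \inner{m_2, x}$.
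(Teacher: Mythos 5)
Your proof is correct and closely parallels the paper's second (Givens--Shortt-style) argument in \Cref{app:gaussian-wasserstein}: reduce to centered measures, observe the objective depends only on the covariance so one may restrict to Gaussian couplings, and cast the remaining maximization of $\tr(C)$ as a semidefinite program over cross-covariances $C$ with a block-PSD (equivalently, Schur-complement) constraint. Where you genuinely diverge is in resolving that SDP. The paper writes down the Lagrangian of the Schur-complement constraint and works through the KKT/stationarity/complementary-slackness conditions to derive $\Lambda\Sigma_1\Lambda = \Sigma_2$ and hence $C^\star$. You instead whiten via $A = \Sigma_1^{-1/2} C\, \Sigma_2^{-1/2}$, turning the feasible set into the operator-norm ball $\{A : AA^\intercal \preceq I\}$ and the objective into $\tr(M A)$ with $M = \Sigma_2^{1/2}\Sigma_1^{1/2}$, then invoke the trace-duality/von Neumann fact $\sup_{\norm{A}_{\mr{op}}\le 1}\tr(MA) = \tr\bigl((M^\intercal M)^{1/2}\bigr)$. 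This is arguably cleaner, avoids differentiating matrix constraints, and pleasantly mirrors the von Neumann argument the paper itself deploys in the proof of \Cref{thm:igw-gaussian-bound}. (The paper also gives a shorter first proof via Brenier's theorem and an affine ansatz $T(x)=Ax+b$ with $A\succeq 0$, which your closing uniqueness remark echoes.) Two small nits: you say $M^\intercal M$ and $\Sigma_1^{1/2}\Sigma_2\Sigma_1^{1/2}$ are ``similar,'' but with $M=\Sigma_2^{1/2}\Sigma_1^{1/2}$ they are in fact equal as matrices; and ``strict concavity of $\tr(\cdot)$'' is not the right phrase ($\tr$ is linear) --- uniqueness of $C^\star$ follows instead from uniqueness of the polar factor $A^\star$ when $M$ has full rank, or from Brenier's theorem as you note.
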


    If $m_1 = m_2$, this is called the \emph{Bures-Wasserstein distance} between symmetric positive definite matrices $\Sigma_1$ and $\Sigma_2$. One option is to use Brenier's theorem to prove \Cref{thm:wasserstein-gaussian}.

    \begin{proof}[Proof (from \citet{knott1984optimal}]
        We make the ansatz that the transport map $T(x) = Ax + b$ is affine. Furthermore, we pick $A \succeq 0$ so that $T$ is the gradient of a convex function, which is required by Brenier's theorem (\Cref{thm:brenier}). There are no constraints on $b$, so we may assume w.l.o.g. that $m_1 = m_2$. Then, since we need
        \begin{align*}
            \Sigma_2
            = \int (Ax) (Ax)^\intercal\, d\mu(x)
            = A \Sigma_1 A^\intercal,
        \end{align*}
        this forces
        \begin{align*}
            \Sigma_1^{1/2} \Sigma_2 \Sigma_1^{1/2}
            = \Sigma_1^{1/2} (A \Sigma_1 A^\intercal) \Sigma_1^{1/2}
            = (\Sigma_1^{1/2} A \Sigma_1^{1/2})^2.
        \end{align*}
        Solving for $A$ yields the OT plan, and then some algebra gives the Wasserstein distance.
    \end{proof}

    Another option is to give a more direct proof of \Cref{thm:wasserstein-gaussian} by expanding the cost function, without relying on Brenier's theorem.

    \begin{proof}[Proof (modified from \citet{givens1984class})]
        We begin by reducing to the case of centered measures; this step works generally for the 2-Wasserstein distance and does not require the assumption that $\mu$ and $\nu$ are Gaussian. Define $\tilde{\mu}_1 = (\cdot - m_1)_\sharp\, \mu_1$ and $\tilde{\mu}_2 = (\cdot - m_2)_\sharp\, \mu_2$. Then, for any coupling $\pi \in \Pi(\mu_1, \mu_2)$, define $\tilde{\pi} = ((\cdot - m_1),\, (\cdot - m_2))_\sharp\, \pi$ so that
        \begin{align*}
            \int \norm{x - y}_2^2\, d\pi(x, y)
             & = \int \norm{(x + m_1) - (y + m_2)}_2^2\, d\tilde{\pi}(x, y)                                                                \\
             & = \int \norm{x - y}_2^2\, d\tilde{\pi}(x, y) + \int \inner{x - y,\, m_1 - m_2}\, d\tilde{\pi}(x, y) + \norm{m_1 - m_2}_2^2.
        \end{align*}
        The middle term vanishes since $\tilde{\pi}$ is centered, so it is clear that
        \begin{align*}
            \mr{W}_2(\mu, \nu)^2
            = \inf_{\pi \in \Pi(\mu, \nu)}\; \int \norm{x - y}_2^2\, d\pi(x, y)
            = \inf_{\pi \in \Pi(\tilde{\mu}, \tilde{\nu})}\; \int \norm{x - y}_2^2\, d\pi(x, y) + \norm{m_1 - m_2}_2^2
        \end{align*}
        and it suffices to consider the case where $m_1 = m_2 = 0$. In this case, we obtain
        \begin{align*}
            \mr{W}_2(\mu_1, \mu_2)^2
            & = \inf_{\pi \in \Pi(\mu_1, \mu_2)}\; \int \norm{x - y}_2^2\, d\pi(x, y) \\
            & = \int \norm{x}_2^2\, d\mu_1(x) + \int \norm{y}_2^2\, d\mu_2(y) - 2 \sup_{\pi \in \Pi(\mu_1, \mu_2)}\; \int \inner{x, y}\, d\pi(x, y).
        \end{align*}
        Since the optimization problem only depends on the covariance of the coupling $\pi$, we may assume without loss of generality that $\pi$ is a Gaussian coupling. Letting $C \coloneq \E_{(X, Y) \sim \pi}[XY^\intercal]$, the 2-Wasserstein distance is therefore the optimal value of the following semidefinite program:
        \begin{align*}
            \inf_{\Sigma_\pi^\times \in \R^{d \times d}} & \quad \tr(\Sigma_1 + \Sigma_2 - 2 \Sigma_\pi^\times) \\
            \text{s.t.}                  & \quad \begin{bmatrix}
                                                     \Sigma_1 & \Sigma_\pi^\times        \\
                                                     (\Sigma_\pi^\times)^\intercal   & \Sigma_2
                                                 \end{bmatrix} \succeq 0.
        \end{align*}
        Because $\Sigma_2 \succ 0$, the constraint is equivalent to the Schur complement constraint $\Sigma_1 - \Sigma_\pi^\times \Sigma_2^{-1} (\Sigma_\pi^\times)^\intercal \succeq 0$. The Lagrangian for this problem is
        \begin{align*}
            \mc{L}(\Sigma_\pi^\times, \Lambda)
            = \tr(\Sigma_1 + \Sigma_2 - 2 \Sigma_\pi^\times) - \tr(\Lambda (\Sigma_1 - \Sigma_\pi^\times \Sigma_2^{-1} (\Sigma_\pi^\times)^\intercal))
        \end{align*}
        for $\Lambda \succeq 0$. Finally, we use the KKT conditions to characterize the solution. By stationarity, we have
        \begin{align*}
            D_{\Sigma_\pi^\times}\, \mc{L}(\Sigma_\pi^\times, \Lambda)
            = -2 I + 2 \Lambda \Sigma_\pi^\times \Sigma_2^{-1} = 0
            \implies \Lambda \Sigma_\pi^\times = \Sigma_2.
        \end{align*}
        Complementary slackness (together with stationarity) gives
        \begin{align*}
            \Lambda (\Sigma_1 - \Sigma_\pi^\times \Sigma_2^{-1} (\Sigma_\pi^\times)^\intercal) = 0
            \implies \Lambda \Sigma_1 = \Lambda \Sigma_\pi^\times \Sigma_2^{-1} (\Sigma_\pi^\times)^\intercal
            \implies \Lambda \Sigma_1 = (\Sigma_\pi^\times)^\intercal
            \implies \Sigma_\pi^\times = \Sigma_1 \Lambda.
        \end{align*}
        Plugging this back in to the stationarity condition, we find that $\Lambda \Sigma_1 \Lambda = \Sigma_2$. Multiplying on the left and right by $\Sigma_1^{1/2}$ yields the equation
        \begin{align*}
            (\Sigma_1^{1/2} \Lambda \Sigma_1^{1/2})^2 = \Sigma_1^{1/2} \Sigma_2 \Sigma_1^{1/2},
        \end{align*}
        and solving for $\Lambda$, we find
        \begin{align*}
            \Lambda
            = \Sigma_1^{-1/2} (\Sigma_1^{1/2} \Sigma_2 \Sigma_1^{1/2})^{1/2} \Sigma_1^{-1/2}.
        \end{align*}
        By complementary slackness, we obtain the solution
        \begin{align*}
            \Sigma_\pi^\times
            = \Sigma_1 \Lambda
            = \Sigma_1^{1/2} (\Sigma_1^{1/2} \Sigma_2 \Sigma_1^{1/2})^{1/2} \Sigma_1^{-1/2},
        \end{align*}
        which fully characterizes the optimal coupling $\pi$. Using the cyclic property of trace yields the Wasserstein distance
        \begin{align*}
            \mr{W}_2(\mu, \nu)^2
             & = \tr(\Sigma_1 + \Sigma_2 - 2 \Sigma_\pi^\times)                                                                             \\
             & = \tr(\Sigma_1 + \Sigma_2 - 2 \Sigma_1^{1/2} (\Sigma_1^{1/2} \Sigma_2 \Sigma_1^{1/2})^{1/2} \Sigma_1^{-1/2}) \\
             & = \tr(\Sigma_1 + \Sigma_2 - 2 (\Sigma_1^{1/2} \Sigma_2 \Sigma_1^{1/2})^{1/2}),
        \end{align*}
        proving the theorem.
    \end{proof}

    Although the proof of \citet{givens1984class} is longer than that of \citet{knott1984optimal}, it has the advantage that it does not rely on Brenier's theorem. Therefore, we take a similar approach to \citet{givens1984class} in the proof of \Cref{thm:igw-gaussian-distance}.

    \section{Background on the OT and GW Problems}

    In this section, we collect additional background on OT, GW distances, Riemannian optimization, and probability theory over Hilbert spaces.

    \subsection{Optimal Transport} \label{app:background-ot}
    
    We begin by reviewing several fundamental concepts from optimal transport; the content of this section comes mostly from \citet{villani2003topics}.
    
    \subsubsection*{The Monge-Kantorovich Problem}
    
    In 1781, Gaspard Monge formulated the problem of finding a deterministic transport map between measures. Suppose $(\mc{X}, d_\mc{X})$ is a Polish space (complete and separable metric space).
    
    \begin{definition}[Monge problem]
        If $\mu_1, \mu_2 \in \mc{P}(\mc{X})$ and $c : \mc{X} \times \mc{X} \to \R$ is a cost function, the \emph{Monge problem} is
        \begin{align*}
            \inf_{T_\sharp\, \mu_1 = \mu_2}\; \int c(x, T(x)) \, d\mu_1(x),
        \end{align*}
        where the infimum is taken over all Borel-measurable functions $T$.
    \end{definition}
    
    We depict the Monge problem visually in \Cref{fig:monge-problem}.
    
    \image{0.1}{monge-problem}{A visual depiction of the Monge mass transportation problem.}
    
    In 1948, Leonid Kantorovich relaxed the Monge problem to that of finding a stochastic transport plan; one motivation for the Kantorovich relaxation is that there is \emph{no deterministic transport map} from $\mu_1 = \delta_0$ to $\mu_2 = \frac{1}{2} \delta_{-1} + \frac{1}{2} \delta_1$ and another is that the Monge problem is highly nonconvex and hard to solve in practice. Recall that $\Pi(\mu_1, \mu_2)$ denotes the set of couplings between $\mu_1$ and $\mu_2$ (joint distributions with marginals $\mu_1$ and $\mu_2$).
    
    \begin{definition}[Kantorovich problem]
        If $\mu_1, \mu_2 \in \mc{P}(\mc{X})$ and $c : \mc{X} \times \mc{X} \to \R$ is a cost function, the \emph{Kantorovich problem} is
        \begin{align*}
            \inf_{\pi \in \Pi(\mu_1, \mu_2)}\; \int c(x, y) \, d\pi(x, y).
        \end{align*}
    \end{definition}
    
    Equivalently, if $X$ and $Y$ are random variables, the Kantorovich problem is to minimize the expected cost $\E_\pi[c(X, Y)]$ over all joint distributions $\pi$ of $X$ and $Y$. Note that the Kantorovich problem is a (usually infinite-dimensional) linear program since the objective is linear in $\pi$ and the constraint is convex (an intersection of linear constraints on the marginals). As long as the cost function $c$ is lower semi-continuous (l.s.c.), it is known that the Kantorovich problem has a solution \citep{villani2003topics}. Even though the Kantorovich problem often has a solution, it is not always unique; for example, consider the case $\mu_1 = \frac{1}{2} \delta_{(-1, 0)} + \frac{1}{2} \delta_{(1, 0)}$ and $\mu_2 = \frac{1}{2} \delta_{(0, -1)} + \frac{1}{2} \delta_{(0, 1)}$ with the quadratic cost function.
    
    \subsubsection*{Wasserstein Distances}
    
    We can now define the Wasserstein distances on $\mc{P}_p(\mc{X})$ as follows.
    
    \begin{definition}[Wasserstein distances]
        For $p \geq 1$, the $p$-Wasserstein distance between measures $\mu_1, \mu_2 \in \mc{P}_p(\mc{X})$ is defined as $\mr{W}_p(\mu_1, \mu_2) = \mr{OT}_{c_p}(\mu_1, \mu_2)^{1/p}$, where $\mr{OT}_{c_p}(\mu_1, \mu_2)$ is the optimal value of the Kantorovich problem with cost function $c_p(x, y) = d_\mc{X}(x, y)^p$.
    \end{definition}
    
    Equivalently, the Wasserstein distance can be formulated as \begin{align*}
        \mr{W}_p(\mu_1, \mu_2) = \inf_{\pi \in \Pi(\mu_1, \mu_2)}\, \norm{d_\mc{X}(X, Y)}_{L^p(\pi)}.
    \end{align*}
    The $p$-Wasserstein distance is a metric on $\mc{P}_p(\mc{X})$. Next, assume that $c$ is the quadratic cost function on $\R^d$ and let $\lambda$ denote the Lebesgue measure on $\R^d$. An astonishing result (due to Brenier) is that as long as $\mu \ll \lambda$, the OT map is unique and is given by the graph of the gradient of a convex function. Furthermore, \emph{any} valid transport coupling which is the gradient of a convex function is optimal.
    
    \begin{theorem}[\cite{brenier1991polar}] \label{thm:brenier}
        Let $\mu_1, \mu_2 \in \mc{P}_2(\R^d)$ be such that $\mu_1$ is absolutely continuous with respect to the Lebesgue measure. Then, $\pi \in \Pi(\mu_1, \mu_2)$ is optimal for the Kantorovich problem with $c(x, y) = \norm{x - y}_2^2$ if and only if there exists a proper l.s.c. convex function $\varphi : \R^d \to \R$ such that $\pi = (I,\, \nabla \varphi)_\sharp\, \mu_1$, where $\nabla \varphi$ is defined uniquely $\mu$-almost everywhere.
    \end{theorem}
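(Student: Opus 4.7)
The plan is to prove Brenier's theorem by combining Kantorovich duality, the characterization of optimal plans via $c$-cyclical monotonicity, and Rockafellar's theorem on cyclically monotone sets. First I would reduce the quadratic cost to an inner-product problem: since $\mu_1, \mu_2 \in \mc{P}_2(\R^d)$, we can expand
\begin{equation*}
    \tfrac{1}{2}\norm{x-y}_2^2 = \tfrac{1}{2}\norm{x}_2^2 + \tfrac{1}{2}\norm{y}_2^2 - \inner{x,y},
\end{equation*}
so that minimizing $\int \norm{x-y}_2^2\,d\pi$ over $\pi \in \Pi(\mu_1,\mu_2)$ is equivalent to \emph{maximizing} $\int \inner{x,y}\,d\pi$, because the marginal terms are fixed constants. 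This converts the problem to a form whose dual naturally involves convex functions and their Legendre transforms.

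Next I would invoke Kantorovich duality (which holds under lower semicontinuity and finite second moments) to obtain the dual problem over pairs $(\varphi,\psi)$ with $\varphi(x)+\psi(y)\leq \tfrac12\norm{x-y}_2^2$, equivalently over pairs of convex conjugate functions. The standard consequence is that an optimal plan $\pi$ must be concentrated on a $c$-cyclically monotone set, and for quadratic cost this is exactly classical cyclical monotonicity: for any points $(x_1,y_1),\ldots,(x_n,y_n) \in \supp(\pi)$ and any permutation $\sigma$, $\sum_i \inner{x_i,y_i} \geq \sum_i \inner{x_i,y_{\sigma(i)}}$. Then I would apply Rockafellar's theorem, which says every cyclically monotone subset of $\R^d \times \R^d$ is contained in the graph of the subdifferential $\partial \varphi$ of some proper lower semicontinuous convex function $\varphi : \R^d \to \R \cup \{+\infty\}$.

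At this point the support of $\pi$ lies in $\{(x,y) : y \in \partial \varphi(x)\}$. Because $\mu_1 \ll \lambda$ and convex functions are differentiable Lebesgue-a.e.\ on the interior of their domain (a theorem of Rademacher/Alexandrov-type for convex functions), $\partial \varphi(x)$ is a singleton $\{\nabla \varphi(x)\}$ for $\mu_1$-a.e.\ $x$. Disintegrating $\pi$ against its first marginal then forces $\pi = (I,\nabla \varphi)_\sharp\, \mu_1$. Uniqueness of $\nabla\varphi$ $\mu_1$-a.e.\ follows since any two such convex potentials must assign the same subdifferential at $\mu_1$-a.e.\ point (their graphs both support the same optimal $\pi$). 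For the converse direction, if $\pi = (I,\nabla\varphi)_\sharp\,\mu_1$ for a proper l.s.c.\ convex $\varphi$, then $\supp(\pi)$ lies in $\partial \varphi$, which is cyclically monotone; combined with a direct duality certificate using $\varphi$ and its Legendre transform $\varphi^*$, this proves $\pi$ is optimal.

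The main obstacle is the technical care needed in applying Rockafellar's theorem and in passing from ``the support of $\pi$ is $c$-cyclically monotone'' to ``$\pi$ is concentrated on the graph of $\nabla\varphi$'', which requires the absolute continuity hypothesis to rule out the set where $\partial \varphi$ is multi-valued or $\varphi$ is nondifferentiable. A second subtlety is ensuring that the convex potential can be taken finite-valued on $\R^d$ (rather than extended-real), which uses the finite second-moment assumption to control growth. These steps are standard but require careful bookkeeping with measurability and the disintegration theorem; I would defer to the treatment in Chapter 2 of \citet{villani2003topics} for the polished argument.
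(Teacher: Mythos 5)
The paper does not actually prove \Cref{thm:brenier}: it is stated as a cited background result, attributed to \citet{brenier1991polar} with \citet{villani2003topics} referenced for the surrounding OT theory, so there is no paper-internal proof to compare against. Your sketch is the standard and essentially correct proof route: reduce the quadratic cost to maximizing correlation, use Kantorovich duality and the equivalence of optimality with $c$-cyclical monotonicity of $\supp(\pi)$, apply Rockafellar's theorem to obtain a proper l.s.c.\ convex potential $\varphi$ with $\supp(\pi)$ contained in the graph of $\partial\varphi$, and then use $\mu_1 \ll \lambda$ together with Lebesgue-almost-everywhere differentiability of $\varphi$ to reduce $\partial\varphi$ to $\nabla\varphi$ for $\mu_1$-a.e.\ $x$; the converse direction via the duality certificate $(\varphi,\varphi^*)$ is also correctly outlined, and the subtleties you flag (absolute continuity is what discards the non-differentiability set; the potential must formally be allowed the value $+\infty$) are genuine and handled exactly as you describe in \citet{villani2003topics}. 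One minor imprecision: almost-everywhere differentiability of a finite convex function on $\R^d$ follows from local Lipschitzness plus Rademacher's theorem (or a direct convexity argument); Alexandrov's theorem gives second-order a.e.\ differentiability, which is not needed at this step.
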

    
    Using Brenier's theorem (the approach taken by \citet{knott1984optimal}) or by expanding the cost directly (the approach taken by \citet{givens1984class}), one can compute the 2-Wasserstein distance and OT map between Gaussian measures in closed-form; details of these two approaches and the resulting formulae are given in \Cref{app:gaussian-wasserstein}.
    
    \subsection{Gromov-Wasserstein Distances} \label{app:background-gw}
    
    In this section, we motivate and formally define GW distances \citep{memoli2011gromov}, which provide a principled method to compare two metric measure spaces; in particular, we can use these to compare two probability distributions (possibly over different spaces) in a way that respects their underlying geometry. We begin motivating GW distances by asking the question: how can one compare two compact sets (shapes) in a metric space? One principled method is to use the Hausdorff distance (recall that $\dist(a, B) \coloneq \inf_{b \in B} d_\mc{X}(a, b)$ for a compact subset $B \subseteq \mc{X}$ and a point $x \in \mc{X}$).
    
    \begin{definition}[Hausdorff distance]
        The \emph{Hausdorff distance} between compact subsets $A$ and $B$ of a metric space $(\mc{X}, d_\mc{X})$ is defined as
        \begin{align*}
            d_H(A, B)
            \coloneq \max\set*{\sup_{a \in A}\, \dist(a, B),\; \sup_{b \in B}\, \dist(b, A)}.
        \end{align*}
    \end{definition}
    
    However, this notion of distance is overly sensitive to minor intricacies in these shapes; for instance, two shapes may be mostly identical except for a thin spike in the second one, but the Hausdorff distance will detect this and label the two sets as far apart. One way to make the Hausdorff distance less stringent, as well as to make these distances faster to compute, is to consider an $L^p$ relaxation. In particular, we can place probability measures $\mu_A$ and $\mu_B$ over $A$ and $B$ respectively (which intuitively measure relative importance of features in each shape) and compute the $p$-Wasserstein distance between these measures. The Hausdorff distance is a metric on the set of compact subsets of a metric space $(\mc{X}, d_\mc{X})$. In fact, we can use the Hausdorff distance to define a notion of distance between any two compact metric spaces by minimizing over all isometric embeddings of the two spaces; this process is sometimes called \emph{Gromovization}.
    
    \begin{definition}[Gromov-Hausdorff (GH) distance]
        The \emph{Gromov-Hausdorff} (GH) distance between two compact metric spaces $(\mc{X}, d_\mc{X})$ and $(\mc{Y}, d_\mc{Y})$ is
        \begin{align*}
            \operatorname{GH}(\mc{X}, \mc{Y})
            = \inf_{\iota_\mc{X},\, \iota_\mc{Y}}\; d_H(\iota_\mc{X}(\mc{X}), \iota_\mc{Y}(\mc{Y})),
        \end{align*}
        where the infimum is taken over all isometric embeddings $\iota_\mc{X}: \mc{X} \to \mc{Z}$ and $\iota_\mc{Y}: \mc{Y} \to \mc{Z}$ for some shared metric space $(\mc{Z}, d_\mc{Z})$.
    \end{definition}
    
    Intuitively, the infimum is doing the work of alignment, while the Hausdorff distance measures the distance between the resulting compact sets. However, it is not clear at all how to compute the GH distance; this motivates the GW distance as an $L^p$ relaxation of the GH distance. Of course, this means that the GW distance is a less stringent version of the GH distance. The relationships between these distances are roughly shown in \Cref{fig:distance-comparison}, taken from \citet{memoli2011gromov}.
    
    \begin{figure}[htbp]
        \centering
        \begin{tikzpicture}[auto]
            \node (hausdorff) at (0,2) {Hausdorff ($d_H(A, B)$)};
            \node (gromov-hausdorff) at (9,2) {Gromov-Hausdorff ($\operatorname{GH}(\mc{X}, \mc{Y})$)};
            \node (wasserstein) at (0,0) {Wasserstein ($\mr{W}_p(\mu_A, \mu_B)$)};
            \node (gromov-wasserstein) at (9,0) {Gromov-Wasserstein ($\GW_{(p,\, q)}(\mu_\mc{X}, \mu_\mc{Y})$)};
    
            \draw[->] (hausdorff) -- (gromov-hausdorff) node[midway, above] {\emph{Gromovization}};
            \draw[->] (wasserstein) -- (gromov-wasserstein) node[midway, above] {\emph{Gromovization}};
            \draw[->] (hausdorff) -- (wasserstein) node[midway, left] {\emph{$L^p$ relaxation}};
            \draw[->] (gromov-hausdorff) -- (gromov-wasserstein) node[midway, right] {\emph{$L^p$ relaxation}};
        \end{tikzpicture}
        \caption{Relationships between various distances on spaces of objects and distributions.}
        \label{fig:distance-comparison}
    \end{figure}
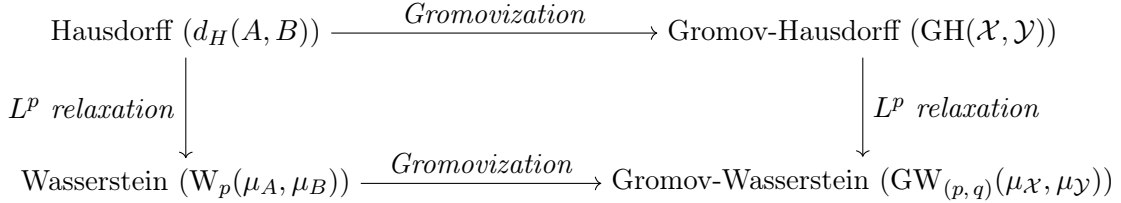
    
    We begin with a relevant definition. Recall that a \emph{locally finite} measure space is one where every point has an open neighborhood of finite measure.
    
    \begin{definition}[Metric measure space]
        A \emph{metric measure space} (m.m. space) is a tuple $(\mc{X},\, d,\, \mu)$ such that $(\mc{X}, d)$ is a Polish space and $(\mc{X},\, \mc{B}(\mc{X}),\, \mu)$ is a locally finite measure space.
    \end{definition}
    
    Here are a few important examples of metric measure spaces.
    
    \begin{example}[Euclidean space]
        The space $(\R^d,\, \norm{\cdot}_p,\, \lambda)$ is an m.m. space, where $p \geq 1$ and $\lambda$ denotes the Lebesgue measure on $\R^d$.
    \end{example}
    
    \begin{example}[Random variables]
        If $X$ is an $\R^d$-valued random variable with distribution $\mu$, then $(\R^d,\, \norm{\cdot}_p,\, \mu)$ is an m.m. space for $p \geq 1$.
    \end{example}
    
    We are now ready to define the GW distance between two (not necessarily compact) m.m. spaces. Recall that $\Pi(\mu_\mc{X},\, \mu_\mc{Y})$ denotes the set of couplings between $\mu_\mc{X}$ and $\mu_\mc{Y}$.
    
    \begin{definition}[GW distance]
        Let $(\mc{X},\, d_\mc{X},\, \mu_\mc{X})$ and $(\mc{Y},\, d_\mc{Y},\, \mu_\mc{Y})$ be two m.m. spaces. Given $p, q \in [1, \infty)$, the \emph{$(p, q)$-GW distance} between $\mu_\mc{X}$ and $\mu_\mc{Y}$ is defined as
        \begin{align*}
            \GW_{(p,\, q)}(\mu_\mc{X},\, \mu_\mc{Y})
             & = \inf_{\pi \in \Pi(\mu_\mc{X},\, \mu_\mc{Y})}\; \left( \int \abs{d_\mc{X}(x, x^\prime)^q - d_\mc{Y}(y, y^\prime)^q}^p\, d(\pi \otimes \pi)(x, y, x^\prime, y^\prime) \right)^{1/p} \\
             & = \inf_{\pi \in \Pi(\mu_\mc{X},\, \mu_\mc{Y})}\; \norm{d_\mc{X}(X, X^\prime)^q - d_\mc{Y}(Y, Y^\prime)^q}_{L^p(\pi \otimes \pi)},
        \end{align*}
    \end{definition}
    
    The $(p, q)$-GW distance looks for the measure coupling $\pi$ between two m.m. spaces which is as close to an isometry as possible. The GW distance independently samples two pairs $(X, Y)$ and $(X^\prime, Y^\prime)$ according to $\pi$ and minimizes the $p$th moment of the deviation of the $q$th powers of distances in $\mc{X}$ and $\mc{Y}$. In this work, we analyze a variant of the GW distance, called the inner product GW distance, which was introduced in \citet{vayer2020contribution}.
    
    \begin{definition}[IGW distance]
        Let $\mu, \nu \in \mc{P}_2(\mc{H})$. Then, the \emph{inner product Gromov-Wasserstein (IGW) distance} between $\mu$ and $\nu$ is
        \begin{align*}
            \IGW(\mu, \nu)
             & = \inf_{\pi \in \Pi(\mu, \nu)}\; \left( \int (\inner{x, x^\prime} - \inner{y, y^\prime})^2\, d(\pi \otimes \pi)(x, y, x^\prime, y^\prime) \right)^{1/2} \\
             & = \inf_{\pi \in \Pi(\mu, \nu)}\; \norm{\inner{X, X^\prime} - \inner{Y, Y^\prime}}_{L^2(\pi \otimes \pi)}.
        \end{align*}
    \end{definition}
    While the GW distance was looking for a measure coupling which is as close to an isometry as possible, the IGW distance looks for a measure coupling which is as close to unitary as possible. The existence of an optimal coupling for the IGW problem was shown in \citet{vayer2020contribution}. Then, IGW is a metric on the set of measures over $\mc{H}$ with finite second moment, modulo unitary transformations.
    
    \begin{proposition}[\citet{zhang2024gradient}, Proposition 3.1]
        The IGW distance is a metric on equivalence classes of $\mc{P}_2(\mc{H})$, where $\mu \equiv \nu$ if there exists a $(\mu \times \mu)$-almost sure unitary transformation $\iota : \mc{H} \to \mc{H}$ with $\iota_\sharp\, \mu = \nu$, in the sense that $\inner{x, y} = \inner{\iota(x), \iota(y)}$ for $(\mu \times \mu)$-almost every $x, y \in \mc{H}$.
    \end{proposition}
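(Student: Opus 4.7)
The plan is to check the four metric axioms for $\IGW$ on the quotient space. Nonnegativity is immediate from the definition, and symmetry follows because pushing any $\pi\in\Pi(\mu,\nu)$ through the coordinate swap yields a coupling in $\Pi(\nu,\mu)$ with identical cost. Before anything else, I would confirm that $\equiv$ is a bona fide equivalence relation (using $\iota=I$, $\iota^{-1}$, and composition) and that $\IGW$ descends to the quotient: if $\iota_\sharp\mu=\mu'$ for a $(\mu\times\mu)$-a.s.\ unitary $\iota$, then $(\iota,I)_\sharp\pi$ realizes the same integrand for any $\pi\in\Pi(\mu,\nu)$, since $\inner{\iota(x),\iota(x')}=\inner{x,x'}$ on a full-measure set. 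For the triangle inequality, I would apply the gluing lemma (Lemma 7.6 of \citet{villani2003topics}) to optimal couplings $\pi_{12}\in\Pi(\mu_1,\mu_2)$ and $\pi_{23}\in\Pi(\mu_2,\mu_3)$, obtaining $\pi\in\mc{P}(\mc{H}^3)$ whose consecutive bivariate marginals are $\pi_{12}$ and $\pi_{23}$. Combining the pointwise triangle inequality $\abs{\inner{x_1,x_1'}-\inner{x_3,x_3'}}\leq\abs{\inner{x_1,x_1'}-\inner{x_2,x_2'}}+\abs{\inner{x_2,x_2'}-\inner{x_3,x_3'}}$ with Minkowski's inequality in $L^2(\pi\otimes\pi)$ yields the bound after passing to the $(1,3)$-marginal and taking infima.

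The forward direction of the indiscernibility axiom is easy: if $\mu\equiv\nu$ via $\iota$, the deterministic coupling $\pi=(I,\iota)_\sharp\mu$ makes the integrand vanish $\pi\otimes\pi$-a.s. The main obstacle is the converse: from an optimal $\pi\in\Pi(\mu,\nu)$ with $\inner{X,X'}=\inner{Y,Y'}$ holding $\pi\otimes\pi$-a.s., I must recover a unitary $\iota$ satisfying $\iota_\sharp\mu=\nu$ and $\inner{x,y}=\inner{\iota(x),\iota(y)}$ $(\mu\times\mu)$-a.e. The plan is to disintegrate $\pi=\int\pi_x\,d\mu(x)$ and show that each conditional $\pi_x$ is a Dirac mass, as follows: if $y_0,y_1\in\supp\pi_x$, applying the a.s.\ identity with $(X,Y)=(x,y_0)$ and $(X',Y')=(x,y_1)$ forces $\inner{y_0,y_1}=\inner{x,x}=\|y_0\|\,\|y_1\|$, so the Cauchy--Schwarz equality case gives $y_0=y_1$. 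Measurability of the resulting map $\iota$ follows from the standard regular conditional probability construction on the Polish space $\mc{H}$, and inner-product preservation on $\supp\mu$ then promotes $\iota$ to a linear isometry of $\overline{\spanc}\,\supp\mu$ onto $\overline{\spanc}\,\supp\nu$. A unitary extension to all of $\mc{H}$ is obtained by choosing any unitary identification of the orthogonal complements, which exists because $\mc{H}$ is separable and the two complements are closed separable subspaces of equal Hilbertian dimension (enlarging one by a null-measure adjustment if necessary).

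The delicate technical point is ensuring that the Dirac-mass property propagates correctly from a $\pi$-a.s.\ statement to a $\mu$-a.s.\ statement defining $\iota$, and that the resulting inner-product identity $\inner{x,x'}=\inner{\iota(x),\iota(x')}$ holds $(\mu\times\mu)$-a.e.\ rather than only on $\supp(\pi)^2$. I would handle this by tracking null sets carefully through the disintegration and applying Fubini's theorem to the original a.s.\ identity, which yields the $(\mu\times\mu)$-a.e.\ statement directly. Combined with the previously established symmetry, nonnegativity, triangle inequality, and invariance under the equivalence relation, this establishes $\IGW$ as a genuine metric on $\mc{P}_2(\mc{H})/{\equiv}$.
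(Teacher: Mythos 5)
The paper does not prove this proposition; it is quoted directly from \citet{zhang2024gradient}, so there is no internal proof to compare against. That said, your plan is the natural one and is essentially sound, with two points worth tightening.

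First, the step ``if $y_0,y_1\in\supp\pi_x$, applying the a.s.\ identity with $(X,Y)=(x,y_0)$ and $(X',Y')=(x,y_1)$'' is not literally valid as written: an a.s.\ identity cannot be evaluated at a specific null point. The fix is to use joint continuity of the map $(x,y,x',y')\mapsto\inner{x,x'}-\inner{y,y'}$: a continuous function that vanishes $(\pi\otimes\pi)$-a.s.\ must vanish identically on $\supp(\pi\otimes\pi)=(\supp\pi)^2$. Once this upgrade is made, your chain $\inner{y_0,y_1}=\norm{x}^2=\norm{y_0}\,\norm{y_1}$ (which actually needs three applications of the identity — the pairs $(y_0,y_1)$, $(y_0,y_0)$, $(y_1,y_1)$ against $(x,x)$ — not one) and the Cauchy–Schwarz equality case do give $y_0=y_1$, so each $\pi_x$ is Dirac. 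The same continuity argument then gives $\inner{x,x'}=\inner{\iota(x),\iota(x')}$ for all $x,x'\in\supp\mu$, hence $(\mu\times\mu)$-a.e., which is exactly what the statement asks for.

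Second, your final paragraph about promoting $\iota$ to a linear isometry and then choosing a unitary identification of the orthogonal complements is unnecessary and in fact cannot always be carried out: $\overline{\spanc}\,\supp\mu$ and $\overline{\spanc}\,\supp\nu$ have the same Hilbert dimension once an isometry exists, but their orthogonal complements in $\mc{H}$ need not, so a genuine unitary operator on all of $\mc{H}$ may fail to exist. Fortunately the proposition only requires $\iota$ to be a measurable map $\mc{H}\to\mc{H}$ with $\iota_\sharp\mu=\nu$ that preserves inner products $(\mu\times\mu)$-a.e.\ (an ``a.s.\ unitary'' in the sense defined), so you may extend $\iota$ off $\supp\mu$ arbitrarily (e.g.\ by zero) and stop there. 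The rest of your proposal — nonnegativity, symmetry, well-definedness on the quotient, and the gluing-plus-Minkowski triangle inequality — is correct and standard.
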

    
    By orthogonal invariance of the IGW distance, the IGW distance can compare measures supported on different separable Hilbert spaces $\mc{H}_1$ and $\mc{H}_2$ by unitarily embedding them into a common Hilbert space $\ell^2$. Although the IGW distance looks similar in flavor to the $p$-Wasserstein distance, note that the objective function is no longer linear in $\pi$; this makes the GW distance more difficult to analyze and compute in practice. However, there is a useful variational form for IGW distances in Euclidean spaces, which is discussed in \citet{zhang2024gradient}. Suppose $\mu \in \mc{P}_4(\R^{d_\mu})$ and $\nu \in \mc{P}_4(\R^{d_\nu})$. Expanding the square, we find that
    \begin{align*}
         & \int (\inner{x, x^\prime} - \inner{y, y^\prime})^2\, d(\pi \otimes \pi)(x, y, x^\prime, y^\prime)                                                                                                                                                                                 \\
         & = \underbrace{\left( \int \inner{x, x^\prime}^2\, d\mu(x)\, d\mu(x^\prime) + \int \inner{y, y^\prime}^2\, d\nu(y)\, d\nu(y^\prime) \right)}_{S_1} + \underbrace{\left( -2 \int \inner{x, y}\, \inner{x^\prime, y^\prime}\, d\pi(x, y)\, d\pi(x^\prime, y^\prime) \right)}_{S_2}.
    \end{align*}
    The term $S_1$ has no dependence on the choice of coupling $\pi$, so it suffices to optimize over $S_2$.
    
    \begin{proposition}[\citet{zhang2024gradient}, Lemma 2.1]
        Given $A \in \R^{d_\mu \times d_\nu}$, define $\mc{T}_A(\mu, \nu)$ to be the optimal value of the Kantorovich problem between $\mu$ and $\nu$ under the cost $c_A(x, y) = -8 x^\intercal A y$. Then we have the following variational form, for $S_2$ defined as above:
        \begin{align*}
            S_2 = \inf_{A \in \R^{d_\mu \times d_\nu}} \set{8\, \norm{A}_F^2 + \mc{T}_A(\mu, \nu)}.
        \end{align*}
        Furthermore, the minimum is achieved at some
        \begin{align*}
            A \in \left[ -\frac{1}{2} \sqrt{\E_{(X, Y) \sim \mu \times \nu}[\norm{X}^2\, \norm{Y}^2]},\; \frac{1}{2} \sqrt{\E_{(X, Y) \sim \mu \times \nu}[\norm{X}^2\, \norm{Y}^2]} \right]^{d_\mu \times d_\nu}.
        \end{align*}
    \end{proposition}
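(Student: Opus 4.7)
The plan is to reduce the double integral defining $S_2$ to the squared Frobenius norm of a single cross-correlation matrix, linearize the resulting quadratic via a Legendre-type identity, and then interchange the two nested infimums so that the inner problem is a standard OT problem with a linear cost.

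First I would introduce the cross-correlation matrix $M_\pi \coloneq \E_{(X,Y) \sim \pi}[XY^\intercal] \in \R^{d_\mu \times d_\nu}$. A direct Fubini computation, using that $(X,Y) \perp (X',Y')$ under $\pi \otimes \pi$, gives
\[
\int \inner{x, x'}\inner{y, y'}\, d\pi(x,y)\, d\pi(x',y') = \sum_{i,j} \E_\pi[X_i Y_j]^2 = \norm{M_\pi}_\mr{F}^2,
\]
so $S_2$ reduces to $-2\,\norm{M_\pi}_\mr{F}^2$. Next, apply the Legendre-dual identity
\[
-2\,\norm{M}_\mr{F}^2 = \inf_{A \in \R^{d_\mu \times d_\nu}} \set{8\norm{A}_\mr{F}^2 - 8\inner{A, M}},
\]
which follows from setting the gradient in $A$ to zero and is uniquely attained at $A^\star = M/2$. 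Plugging in $M = M_\pi$ and swapping the two infimums (trivially permitted, since both are infimums) produces
\[
\inf_\pi S_2 = \inf_A \set*{8\norm{A}_\mr{F}^2 + \inf_{\pi \in \Pi(\mu,\nu)} \int -8\, x^\intercal A y\, d\pi(x,y)} = \inf_A \set{8\norm{A}_\mr{F}^2 + \mc{T}_A(\mu,\nu)},
\]
directly by the definition of $\mc{T}_A$.

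For the entrywise bound on the optimizer, I would invoke existence of an optimal IGW coupling $\pi^\star$ from \citet{vayer2020contribution}. The Legendre identity evaluated at $\pi = \pi^\star$ then forces the corresponding outer minimizer to be $A^\star = \tfrac{1}{2} M_{\pi^\star}$, since that choice already saturates the inequality $\inf_A \set{\ldots} \leq 8\norm{A^\star}_\mr{F}^2 + \mc{T}_{A^\star}(\mu,\nu) \leq -2\norm{M_{\pi^\star}}_\mr{F}^2 = \inf_\pi S_2$. Cauchy–Schwarz together with the marginal constraints on $\pi^\star$ (which enforce $\E_{\pi^\star}[X_i^2] = \E_\mu[X_i^2]$ and $\E_{\pi^\star}[Y_j^2] = \E_\nu[Y_j^2]$) then yields
\[
\abs{A^\star_{ij}} = \tfrac{1}{2}\abs{\E_{\pi^\star}[X_i Y_j]} \leq \tfrac{1}{2}\sqrt{\E_\mu[X_i^2]\,\E_\nu[Y_j^2]} \leq \tfrac{1}{2}\sqrt{\E_{\mu \times \nu}[\norm{X}^2\,\norm{Y}^2]},
\]
which is the claimed entrywise bound.

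The main conceptual step is the Legendre linearization: it trades a nonconvex quadratic-in-$\pi$ objective for a parametric family of \emph{linear} OT problems indexed by $A$, so that the infimum swap and identification with $\mc{T}_A$ are purely formal. There is no genuine analytic obstacle; the only minor bookkeeping point is that the displayed $S_2$ still depends on $\pi$, so the equality should be interpreted as a variational representation for $\inf_\pi S_2$ (equivalently, for the IGW optimal value, up to the $\pi$-independent constant $S_1$).
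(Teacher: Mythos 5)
Your proof is correct. Note that the paper itself does not prove this proposition---it is cited background material from \citet{zhang2024gradient}, Lemma~2.1---so there is no in-paper argument to compare against. Your approach, reducing the $\pi$-dependent term of the expanded IGW cost to $-2\,\norm{M_\pi}_\mr{F}^2$ for $M_\pi = \E_\pi[XY^\intercal]$, linearizing the quadratic via the Legendre identity $-2\norm{M}_\mr{F}^2 = \inf_A\set{8\norm{A}_\mr{F}^2 - 8\inner{A,M}}$ (with unique minimizer $A = M/2$), and then exchanging the two infimums to expose a parametric family of linear OT problems, is the standard derivation and is essentially what the cited lemma must do. Your closing bookkeeping remark is also correct and worth keeping: the cross term $S_2$ as displayed still carries the coupling $\pi$, so the stated identity should be read as $\inf_{\pi} S_2 = \inf_A\set{8\norm{A}_\mr{F}^2 + \mc{T}_A(\mu,\nu)}$, i.e., a variational representation of the $\pi$-optimal value. (Incidentally, the cross-term display in the appendix writes $\inner{x,y}\inner{x',y'}$; this is a typo for $\inner{x,x'}\inner{y,y'}$, and you correctly worked with the latter.)

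For the entrywise bound, your argument is sound but the organization is slightly redundant. Since $\inf_\pi S_2 = \inf_\pi\inf_A(\cdots) = \inf_A\inf_\pi(\cdots)$ is a joint infimum, and both the inner inf over $A$ (strictly convex, coercive) and the inf over $\pi$ (compact feasible set by Prokhorov, continuous objective under $\mc{P}_4$ moment assumptions as established in \citet{vayer2020contribution}) are attained, any joint minimizer $(\pi^\star, A^\star)$ must satisfy $A^\star = \tfrac12 M_{\pi^\star}$ by the first-order condition in $A$. You do not actually need to separately verify that this choice ``saturates'' the chain of inequalities; stationarity plus the existence of an optimal IGW coupling suffices. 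The Cauchy--Schwarz bound $\abs{\E_{\pi^\star}[X_i Y_j]} \le \sqrt{\E_\mu[X_i^2]\,\E_\nu[Y_j^2]} \le \sqrt{\E_\mu[\norm{X}^2]\,\E_\nu[\norm{Y}^2]}$ using the marginal constraints is then exactly right, and the last expression equals $\E_{\mu\times\nu}[\norm{X}^2\norm{Y}^2]$ by independence under the product measure.
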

    
    While we derive our main results about the IGW distance by expanding the square directly, note that this variational form could be used to recover some of our results in the finite-dimensional case.

    \section{Background on Riemannian Optimization} \label{app:background-riemannian}

    In this section, we briefly review Riemannian gradient descent (RGD) and some properties of the Stiefel manifold. We begin by recalling several important definitions from manifold optimization theory; see \citet{absil2008optimization} for a more thorough treatment. Let $\mc{M}$ be a Riemannian manifold and let $T_x \mc{M}$ denote the tangent space to $\mc{M}$ at $x \in \mc{M}$. We let $\operatorname{grad} f(x) \in T_x \mc{M}$ denote the Riemannian gradient of a function $f : \mc{M} \to \R$ at $x \in \mc{M}$ and let $\operatorname{Hess} f(x) : T_x \mc{M} \to T_x \mc{M}$ denote the Riemannian Hessian of $f$ at $x$.
    
    We say that $y \in \mc{M}$ is a \emph{first-order stationary point} of the optimization problem $\inf_{z \in \mc{M}}\, f(z)$ if $f \in C^1(\mc{M})$ and $\operatorname{grad} f(y) = 0$. We say that $y \in \mc{M}$ is a \emph{second-order stationary point} of  $\inf_{z \in \mc{M}}\, f(z)$ if $y$ is a first-order stationary point, $f \in C^2(\mc{M})$, and $\operatorname{Hess} f(y) \succeq 0$.
    
    These definitions are analogous to the standard definitions of first- and second-order stationary points in unconstrained optimization, except that the gradient and Hessian are replaced by their Riemannian counterparts. Intuitively, first-order stationary points are points where the objective function has zero directional derivative along all directions tangent to the manifold, while second-order stationary points are first-order stationary points where the objective function is locally convex along all directions tangent to the manifold.
    
    The RGD algorithm for $\inf_{z \in \mc{M}}\, f(z)$ is analogous to the usual gradient descent algorithm, and begins with an initialization $x_0 \in \mc{M}$ and a step size $\alpha > 0$. Let $\operatorname{retr}_x : T_x(\mc{M}) \to \mc{M}$ denote a retraction map onto $\mc{M}$ for $x \in \mc{M}$. Until convergence, the RGD update is $x_{t+1} = \operatorname{retr}_{x_t}(- \alpha\, \operatorname{grad} f(x_t))$. If it converges, RGD must converge to a first-order stationary point. Riemannian trust-region methods may be used to find second-order stationary points (as required by \Cref{thm:burer-monteiro-multimarginal}); see \citet{absil2008optimization} for details.
    
    Let $\mr{St}(d_1, d_2) \coloneq \set{C \in \R^{d_1 \times d_2} : C^\intercal C = I}$ denote the Stiefel manifold of orthonormal $d_2$-frames in $\R^{d_1}$; this is the manifold of interest in \Cref{sec:igw-gaussian}. Recall that the Riemannian gradient of $f \in C^1(\mr{St}(d_1, d_2))$ is $\operatorname{grad} f(C) = \nabla f(C) - \frac{C}{2} (\nabla f(C)^\intercal C + C^\intercal \nabla f(C))$. Finally, we define $\operatorname{retr}_C(G) \coloneq Q$ as the retraction onto the Stiefel manifold, where $C + G = QR$ is the QR factorization.

    \section{Background on Random Variables in Hilbert Spaces} \label{app:background-hilbert}
    In this section, we collect several useful facts about Hilbert space-valued random variables. Let $\mc{H}$ be a separable Hilbert space and recall \citep{baker1973joint} that the uncentered covariance of $\mu \in \mc{P}_2(\mc{H})$ is the bilinear form
    \begin{align*}
        \Cov_\mu(x, y) \coloneq \int \inner{x, z} \inner{y, z}\, d\mu(z).
    \end{align*}
    By the Riesz-Fr\'echet representation theorem, there exists a unique linear operator $\Sigma$ such that the covariance can be written as $\Cov_\mu(x, y) = \inner{x, \Sigma y}$; we call $\Sigma$ the covariance operator of $\mu$. The covariance operator is positive semidefinite because
    \begin{align*}
        \inner{x, \Sigma x}
        = \Cov_\mu(x, x)
        = \int \inner{x, z}^2\, d\mu(z)
        \geq 0
    \end{align*}
    for all $x \in \mc{H}$ and self-adjoint by symmetry of the uncentered covariance. Fix a total orthonormal set $\set{e_k}_{k \in \N}$ of $\mc{H}$, so that
    \begin{align*}
        \tr(\Sigma)
        = \sum_{k=1}^\infty \inner{e_k, \Sigma e_k}
        = \sum_{k=1}^\infty \Cov_\mu(e_k, e_k)
        = \sum_{k=1}^\infty \int \inner{e_k, z}^2\, d\mu(z).
    \end{align*}
    Using Tonelli's theorem followed by Parseval's identity,
    \begin{align*}
        \sum_{k=1}^\infty \int \inner{e_k, z}^2\, d\mu(z)
        = \int \sum_{k=1}^\infty \inner{e_k, z}^2\, d\mu(z)
        = \int \norm{z}^2\, d\mu(z)
        < \infty,
    \end{align*}
    it follows that $\Sigma$ is trace-class and therefore compact. By the spectral theorem (since $\Sigma$ is self-adjoint and compact), we can write $\Sigma = \sum_{k=1}^\infty \lambda_k(\Sigma)\, v_k \otimes v_k$ for nonnegative eigenvalues $\lambda_k(\Sigma) \geq 0$ and a total orthonormal set $\set{v_k}_{k \in \N}$ of eigenvectors. We thereby deduce
    \begin{align*}
        \tr(\Sigma)
        = \sum_{k=1}^\infty \inner{v_k, \Sigma v_k}
        = \sum_{k=1}^\infty \lambda_k(\Sigma)\, \inner{v_k, v_k}
        = \sum_{k=1}^\infty \lambda_k(\Sigma),
    \end{align*}
    a special case of Lidskii's trace theorem \citep{simon2005trace}. Note that the eigenvalues cannot accumulate at a non-zero value because the covariance operators are trace-class, so it makes sense to sort the eigenvalues in descending order. Similarly, suppose we are given $\mu, \nu \in \mc{P}_2(\mc{H})$ and a coupling $\pi \in \Pi(\mu, \nu)$. Then, we can define the cross-covariance operator as the unique bounded linear operator $\Sigma_\pi^\times$ such that
    \begin{align*}
        \Cov_\pi^\times(x, y)
        \coloneq \int \inner{x, z} \inner{y, w}\, d\pi(z, w)
        = \inner{x, \Sigma_\pi^\times y}
    \end{align*}
    for all $x, y \in \mc{H}$, which exists by the Riesz-Fr\'echet representation theorem. The cross-covariance operator is Hilbert-Schmidt because by the Cauchy-Schwarz inequality,
    \begin{align*}
        \norm{\Sigma_\pi^\times}_{\HS}^2
         & = \sum_{k=1}^\infty \sum_{j=1}^\infty \inner{e_k, \Sigma_\pi^\times e_j}^2                                                                         \\
         & = \sum_{k=1}^\infty \sum_{j=1}^\infty \left( \int \inner{e_k, z} \inner{e_j, w}\, d\pi(z, w) \right)^2                                 \\
         & \leq \sum_{k=1}^\infty \sum_{j=1}^\infty \left( \int \inner{e_k, z}^2\, d\mu(z) \right) \left( \int \inner{e_j, w}^2\, d\nu(w) \right) \\
         & = \tr(\Sigma_\mu) \tr(\Sigma_\nu)                                                                                                      \\
         & < \infty.
    \end{align*}
    Finally, given any $m \in \mc{H}$ and trace-class self-adjoint positive semidefinite operator $\Sigma$, there exists a unique Gaussian measure $\Normal(m, \Sigma) \in \mc{P}_2(\mc{H})$ with mean $m$ and covariance operator $\Sigma$; see \citet{bogachev1998gaussian} for details. This measure is characterized precisely by the property that for any $h \in \mc{H}$, the pushforward measure $(h \otimes h)_\sharp\, \Normal(m, \Sigma)$ is a univariate Gaussian with mean $\inner{h, m}$ and variance $\inner{h, \Sigma h}$.
\end{appendices}

\end{document}